\newenvironment{proof}{\par\noindent{\bf Proof\ }}{\hfill\BlackBox\\[2mm]}
\newtheorem{example}{Example}
\DeclarePairedDelimiter\autobracket{(}{)}
\newcommand{\br}[1]{\autobracket*{#1}}
\DeclareMathOperator*{\argmax}{argmax}
\DeclareMathOperator*{\argmin}{argmin}
\newcommand{\abs}[1]{\lvert #1 \rvert}
\newcommand{\norm}[1]{\left\lVert #1 \right\rVert}
\newcommand{\expec}[1]{\mathbf{E}\left[ #1 \right] }
\newcommand\numberthis{\addtocounter{equation}{1}\tag{\theequation}}  
\def\fn[#1]#2{{f_{#1}\left(x_{#2}\right)}}
\newtheorem{definition}{Definition}[section]
\newtheorem{lemma}{Lemma}[section]
\newtheorem{theorem}{Theorem}[section]
\newtheorem{proposition}{Proposition}[section]
\newtheorem{assumption}{Assumption}[section]
\newtheorem{remark}{Remark}
\newcommand{\defeq}{\stackrel{def}{=}}
\def\E{{\bf E}}
\def\ar{{\textcolor{green}{\bf AR:}}}
\def\tr{{\rm tr}}
\def\bg{{\bar{G}_t}}
\def\bh{{\bar{H}_t}}
\title{Multi-Point Bandit Algorithms for Nonstationary Online Nonconvex Optimization}
\author[1]{Abhishek Roy\thanks{abroy@ucdavis.edu}}
\affil[1]{Department of Electrical and Computer Engineering, University of California, Davis}
\author[2]{Krishnakumar Balasubramanian\thanks{kbala@ucdavis.edu}}
\affil[2]{Department of Statistics, University of California, Davis}
\author[3]{Saeed Ghadimi\thanks{sghadimi@princeton.edu}}
\affil[3]{Department of Operations Research and Financial Engineering, Princeton University}
\author[4]{Prasant Mohapatra\thanks{pmohapatra@ucdavis.edu}}
\affil[4]{Department of Computer Science, University of California, Davis}
\begin{document}
\maketitle
\begin{abstract}
Bandit algorithms have been predominantly analyzed in the convex setting with function-value based stationary regret as the performance measure. In this paper, motivated by online reinforcement learning problems,  we propose and analyze bandit algorithms for both general and structured nonconvex problems with nonstationary (or dynamic) regret as the performance measure, in both stochastic and non-stochastic settings. First, for general nonconvex functions, we consider nonstationary versions of first-order and second-order stationary solutions as a regret measure, motivated by similar performance measures for offline nonconvex optimization. In the case of second-order stationary solution based regret, we propose and analyze online and bandit versions of the cubic regularized Newton's method. The bandit version is based on estimating the Hessian matrices in the bandit setting, based on second-order Gaussian Stein's identity. Our nonstationary regret bounds in terms of second-order stationary solutions have interesting consequences for avoiding saddle points in the bandit setting. Next, for weakly quasi convex functions and monotone weakly submodular functions we consider nonstationary regret measures in terms of function-values; such structured classes of nonconvex functions enable one to consider regret measure defined in terms of function values, similar to convex functions. For this case of function-value, and first-order stationary solution based regret measures, we provide regret bounds in both the low- and high-dimensional settings, for some scenarios.
\end{abstract}

\newpage
\section{Introduction}  
Let $\{ f_t(x) \}_{t=1}^T$, be a sequence of functions with the corresponding sequence of minimal vectors  $\{ x_t^*\}_{t=1}^T$. That is, for $t=1, \ldots, T$, 
\begin{align}
x_t^* = \argmin_{x \in \mathcal{X}} \left\{ f_t(x) = \E_\xi [F_t(x,\xi)] \right\}.\label{eq:generalprob}
\end{align}
Here, $f_t: \mathbb{R}^d \to \mathbb{R}$ and $\mathcal{X} \subset \mathbb{R}^d$ is convex and compact. The random variable $\xi$ corresponds to the noise in the feedback received. Online bandit optimization is a sequential decision making problem in which the decision maker picks a decision $x_t$ (or several decisions) in each round and observes the stochastic loss suffered $F_t(x_t,\xi_t)$ as a consequence of the decision, \emph{a posteriori}. The goal of the decision maker is to select the decisions $x_t$ to minimize the so-called \emph{regret}, which compares the accumulated loss over all $T$ rounds, against the loss suffered by a certain \emph{oracle decision rule} that could be computed only knowing all the functions, \emph{a priori}. In the most well-studied setting of this decision making problem, the loss functions $f_t$ are typically assumed to be convex and the \emph{oracle decision rule} compared against, is chosen to be a fixed rule $\bar{x}^* \defeq \underset{x\in\mathcal{X}}{\argmin} \sum_{t=1}^T f_t(x)$. In this case, a natural notion of \emph{stationary regret} is given by $\mathcal{R} = \sum_{t=1}^T f_t(x_t) - \sum_{t=1}^T f_t(\bar{x}^*)$. It is easy to see that the regret of any non-trivial decision rule should grow sub-linearly in $T$ and several algorithms exists for attaining such regret -- we refer the reader to~\cite{flaxman2005online,cesa2006prediction, hazan2007logarithmic, agarwal2010optimal, agarwal2011stochastic,saha2011improved, bubeck2012regret, shamir2013complexity, shamir2017optimal, bubeck2017kernel} for a non-exhaustive overview of such algorithms and their optimality properties under different assumptions on $f_t$.

Recently, the focus of online optimization literature has been increasingly on the case when the \emph{oracle decision rule} compared against is not a fixed vector, but is rather assumed to change. Assuming convex loss functions, a natural choice to compare against, is the sequence of minimal vectors $\{x_t^*\}_{t=1}^T$. In this case, the \emph{nonstationary regret} is defined as $\mathcal{R} = \sum_{t=1}^T f_t(x_t) - \sum_{t=1}^T f_t(x_t^*)$; see also~\cite{bousquet2002tracking, hazan2009efficient, besbes2014stochastic, besbes2015non,hall2015online, yang2016tracking}. Indeed, to obtain sub-linear regret in this setting, the degree of allowed nonstationarity in terms of either the functions or the minimal vectors is assumed to be bounded~\citep{besbes2015non,yang2016tracking}. Additional issues arise when the loss functions are assumed to be nonconvex. As the optimal value of a function can be computationally hard to obtain in general, the notion of the above function value based regret, might become meaningless from a computational point of view. In this case, more structural assumptions need to be made about the functions $f_t$ to still provide tractable regret bounds in terms of function values. Two such assumptions are quasi convexity and submodularity. In the absence of such assumptions, we focus on regret measures based on first- or second-order stationary solutions, motivated by standard nonlinear nonconvex optimization literature~\citep{nesterov2018lectures}. A step towards the above two directions have been made in~\cite{hazan2017efficient} and~\cite{gao2018online} respectively. Specifically,~\cite{hazan2017efficient} considered general nonconvex functions with appropriately defined notions of first- and second-order stationary point based regrets and~\cite{gao2018online} extended the results of~\cite{besbes2015non} and~\cite{yang2016tracking}, where regret is defined in terms of function values, to the case of weak pseudo-convex (WPC) functions assuming bounds on the degree of allowed nonstationarity. While~\cite{hazan2017efficient} focused only on the online nonconvex optimization setting (where gradient and/or Hessian information about $f_t$ are available as feedback \emph{a posteriori}),~\cite{gao2018online} also considered the bandit setting.

In this paper, we consider several notions of regret for nonstationarity online nonconvex optimization, and make progress on several fronts. First, we propose a notion of nonstationary regret based on gradient size where the allowed degree of nonstationarity is bounded, similar to~\cite{besbes2015non} and~\cite{gao2018online}. We quantify the dependence of this regret on the dimensionality $d$, which is polynomial in $d$, and is referred to as the low-dimensional setting. To allow for the dimensionality to grow faster, we also propose structural sparsity assumptions on the functions $f_t$ and obtain regret bounds that depend only poly-logarithmically on $d$; such a scenario is referred to as the high-dimensional setting. We provide constant-regret bounds in both the low- and high-dimensional setting, for this notion of regret.
Next, we propose a second-order stationary point based nonstationary regret measure for nonconvex online optimization, where the allowed degree of nonstationarity in the functions $f_t$ is bounded. This notion is different from the smoothed second-order stationary point based regret measure proposed in~\cite{hazan2017efficient}. We then propose online and bandit versions of cubic-regularized Newton method and obtain bounds for the above mentioned notion of nonstationary regret. The proposed bandit Newton method is motivated by the recently proposed estimator of Hessian with a three-point feedback mechanism from~\cite{2018arXiv180906474B} and is based on second-order Gaussian Stein's identity. To the best of our knowledge, this is the first regret analysis of cubic-regularized Newton method in the online and bandit settings. 

Finally, we establish sub-linear regret bounds in terms of function-value based regret measures for the class of K-Weak Quasi Convex (K-WQC) functions and weakly Diminishing Return (DR) submodular functions. K-WQC functions cover, as we show in Section~\ref{sec:prelim}, a large class of nonconvex functions, e.g., star-convex function, $\alpha$--homogeneous functions, and functions satisfying acute angle condition. Weakly DR submodular functions are widely used in fields like dictionary learning, sensor placement, network monitoring, crowd teaching, and product recommendation \citep{hassani2017gradient}. 
To achieve sub-linear regret bounds in terms of function-value based regret measures,  we use a Gaussian Stein's identity based two-point feedback algorithm (based on~\cite{nesterov2017random}) and use a notion of nonstationary regret based on function values, proposed in~\cite{gao2018online}.  It is worth mentioning that recently,~\cite{wang2018stochastic} and~\cite{2018arXiv180906474B} proved related results for high-dimensional stochastic zeroth-order offline optimization. Furthermore,~\cite{chen2019black} proved related results for zeroth-order offine submodular maximization.\\

\noindent \textbf{Our Contributions:} To summarize the discussion above, in this paper, we make the following three contributions. The precise rates obtained are summarized in Table~\ref{tab:summary} in Section~\ref{table}.
\begin{itemize}
\item \textbf{Gradient-size regret:} We first propose and establish sub-linear regret bounds for gradient-size based nonstationary regret measures in both the low- and high-dimensional setting for general nonconvex functions $f_t$ whose variation is bounded in the sense of Definition~\ref{def:US2}. To the best of our knowledge, in the bandit online nonconvex setting we provide the first regret bound that only depends poly-logarithmically on the dimension, under certain structural sparsity assumption. 
\item \textbf{Second-order regret:}  Next, we propose a notion of second-order stationary point based regret, when the nonconvex functions $f_t$ are assumed to be nonstationary in the sense of Definition~\ref{def:US2}. We then propose and analyze online and bandit versions of cubic-regularized Newton method and establish sub-linear bounds for the above mentioned regret measures. To the best of our knowledge, we provide the first analysis of cubic-regularized Newton method in the online, and bandit setting and demonstrate sub-linear regret bound in terms of second-order stationary point based measures.
\item \textbf{Function-value based regret:}  Finally, we analyze Gaussian smoothing based Bandit algorithms and establish regret bounds in both the low- and high-dimensional setting for a class of K-Weak Quasi Convex functions (Assumption~\ref{as:1.1}) and DR weakly submodular functions (Definition~\ref{def:weakdrmonosubmod}). 
\end{itemize}

\subsection{Motivating Application}
One of the main motivating applications for the nonstationary nonconvex setting that we consider is the problem of Markov Decision Process (MDP) that arise in Reinforcement Learning, a canonical sequential decision making problem~\citep{sutton2018reinforcement}. An MDP $M$ is parametrized by the tuple $(\mathcal{S}, \mathcal{A}, \mathcal{P}, c)$. Here, $\mathcal{S} \subset \mathbb{R}^b$ and $\mathcal{A} \subset \mathbb{R}^p$ denote the state and action space\footnote{We use the notation $\mathcal{A}$, following standard conventions in the reinforcement learning literature. This is not to be confused with the set $\mathcal{A}$ from Definition~\ref{def:weakdrmonosubmod} and subsection~\ref{sec:submod}, later.} respectively, $\mathcal{P}: \mathcal{S} \times \mathcal{A} \times \mathcal{P} \to [0,1]$  denotes be the transition probability kernel and $c(s,a):\mathcal{S} \times \mathcal{A} \to \mathbb{R}$ denotes the cost function. The goal of an agent working with the MDP $M$, at a given time step $t$, is to choose an action $a_t$ based on data $\{s_i, a_i, c(s_i,a_i)\}_{i=1}^{t-1}$ and $s_t$. The agent does so by minimizing the cost (given by $c$) over time. Based on the actions chosen, the process moves to state $s_{t+1}$ with probability $\mathcal{P}(s_{t+1}|a_t,s_t)$. To formulate the problem precisely, we introduce the so-called policy function, $\pi_\theta(a|s) \equiv \pi_\theta(a,s): \mathcal{A} \times \mathcal{S} \to [0,1]$, which denotes the probability of taking action $a$ in state $s$. Here, $\theta \in \mathbb{R}^d$ is a parameter vector of the policy function. Then, the precise formulation of the problem describing the goal of the agent is given by the following offline optimization problem.
\begin{align*}
\theta^* = \min_{\theta \in \Theta} \left\{ J(\theta) = \E_s \left[ V_\theta(s) \right]  = \E_s \left[\E\left(\sum_{i=1}^t c(s_i,a_i)\bigg|s_1=s  \right) \right]\right\},
\end{align*}
where $a_i \sim \pi_\theta(\cdot|s_i)$ and $s_{i+1}\sim \mathcal{P}(\cdot|s_i,a_i)$, for all $1\leq i < t$ and $\E_s$ represents the (fixed) initial distribution of the states. The quantity $V_\theta(s)$ is called as the value function and it is indexed by $\theta$ to represent the fact that it depends the policy function $\pi_\theta$. Policy gradient method~\citep{williams1992simple,sutton2018reinforcement}  is a popular algorithm for solving the above problem. Recently, it has been realized that parametrizing $\pi_\theta$ by a deep neural network leads to better results empirically; see, for example~\cite{haarnoja2017reinforcement, li2017deep}.   

In the online nonstationary version of the MDP problem above, there are two significant changes to the above setup~\citep{neu2010online, arora2012online, guan2014online, dick2014online}. First, the cost function $c$ is assumed to change with time and is hence indexed by $c_t$. Next, the interaction protocol of the agent is changed so that in time $t$, receives $s_t$ and selects action $a_t$ based on which it receives the cost $c_t(s_t, a_t)$. The probability kernel $\mathcal{P}$ is typically assumed to be known in Online MDP problems~\citep{neu2010online, dick2014online}. The goal in online nonstationary MDP is to come up with a sequence of policies $\pi_{\theta^*_t}$ to minimize an appropriately defined notion of static or dynamic (nonstationary) regret. Clearly this falls under the category of sequential decision making problem described in Equation~\ref{eq:generalprob}. If the objective function is convex, then existing results on nonstationary online convex optimization could be leveraged to provide regret bounds in this setting. But if the optimization problem involved is nonconvex, there is a lack of a clear notion of regret to work with, to the best of our knowledge. We believe that the results we provide in Sections~\ref{sec:secondorderregret}, in combination with landscape results about neural networks (for example,~\cite{kawaguchi2019effect}) would lead to global sub-linear regret bounds for nonconvex online MDP problems (which is the case when the policies $\pi_\theta$ are parametrized by deep neural networks). A detailed investigation of this is left as future work.

\section{Preliminaries}\label{sec:prelim}
We now outline the basic notations, assumptions and definitions used throughout the paper. Additional details are introduced in the respective sections. We first state our assumptions about the zeroth-order oracle model.
\begin{assumption}[Zeroth-order oracle]\label{as:stoch}
	Let $\|.\|$, and $\|.\|_*$ be a norm and the corresponding dual norm on $\mathbb{R}^d$. For any $x\in \mathbb{R}^d$, the zeroth order oracle outputs an estimator $F\left(x,\xi\right)$ of $f\left(x\right)$ such that $\expec{F\left(x,\xi\right)}=f\left(x\right)$, $\expec{\nabla F\left(x,\xi\right)}=\nabla f\left(x\right)$, $\expec{\|\nabla F\left(x,\xi\right)-\nabla f\left(x\right)\|_*^2}\leq \sigma^2$, $\expec{\nabla^2 F\left(x,\xi\right)}=\nabla^2 f\left(x\right)$, and $\expec{\|\nabla^2 F\left(x,\xi\right)-\nabla^2 f\left(x\right)\|_F^4}\leq \varkappa^4$.  
\end{assumption}
Note that in the deterministic case, we have access to $f\left(x\right)$,$\nabla f\left(x\right)$, and $\nabla^2 f\left(x\right)$ instead of their noisy approximations. Consequently, in the deterministic case, $\sigma=0$, and $\varkappa=0$. We also require the following different assumptions, that are standard in the optimization literature~\cite{bubeck2012regret,nesterov2018lectures}, characterizing smoothness properties of the function 
\begin{assumption}[Lipschitz Function] \label{as:lip}
	The functions $F_t$ are $L$-Lipschitz, almost surely for any $\xi$, i.e., $|F_t\left(x,\xi \right)- F_t\left(y,\xi \right)| \leq L\norm{x-y}$. Here, we assume $\|\cdot \| = \| \cdot\|_2$, unless specified explicitly.
\end{assumption}
\begin{assumption}[\bfseries Lipschitz Gradient]  \label{as:lipgrad}
	The functions $F_t$ have Lipschitz continuous gradient, almost surely for any $\xi$, i.e., $\norm{\nabla F_t\left( x,\xi\right) -\nabla F_t\left( y,\xi\right)}\leq L_G\|x-y\|_*$, where $\| \cdot\|_*$ denotes the dual norm of $\|\cdot \|$. This also implies $|F_t\left( y,\xi\right)-F_t\left( x,\xi\right)-\langle\nabla F_t\left( x,\xi\right),y-x\rangle|\leq\frac{L_G}{2}\|y-x\|^2$.
\end{assumption}
\begin{assumption}[\bfseries Lipschitz Hessian]  \label{as:liphess}
	The functions $f_t$ have Lipschitz continuous Hessian, i.e., $\norm{\nabla^2 f_t\left( x\right) -\nabla^2 f_t\left( y\right)}\leq L_H\norm{x-y}$. 
\end{assumption}
In the above assumptions, the choice of the norm is fixed later in the individual sections. We also make the following assumption on the gradients to facilitate high-dimensional regret bounds; we refer the reader to~\cite{wang2018stochastic,2018arXiv180906474B} for a motivation of such an assumption in the context of zeroth-order optimization.
\begin{assumption}[\bfseries Sparse Gradient] \label{as:sparsegrad}
	$f_t\left(x \right)$ has $s$-sparse gradient, i.e., $\norm{\nabla f_t\left( x\right) }_0\leq s$, where $s\ll d$.
\end{assumption}
Next, following~\cite{besbes2015non,gao2018online}, we also define the so-called uncertainty sets corresponding to the functions $\{ f_t\}_{t=1}^T$ that capture the degree of nonstationarity allowed either in term of minimal vectors (Definition~\ref{def:1.1}) or function values (Definition~\ref{def:US2}).
\begin{definition}[\cite{gao2018online}] \label{def:1.1}
	For a given $V_T \geq 0$, the uncertainty set of functions $\mathcal{S}_T$ is defined as
	\begin{align*}
	\mathcal{S}_T (\{ f_t\}_{t=1}^T) :=\left\lbrace \{ f_t\}_{t=1}^T: \{x_t^*\}_{t=1}^T~\text{satisfy}~\sum_{t=1}^{T-1}\norm{x_t^*-x_{t+1}^*} \leq V_T \right\rbrace. \numberthis \label{eq:defST}
	\end{align*}
\end{definition}
\begin{definition}[\cite{besbes2015non}] \label{def:US2}
	For a given $W_T \geq 0$, with $\norm{f_t-f_{t+1}}:=\sup_{x\in\mathcal{X}}\abs{f_t\left( x\right) -f_{t+1}\left( x\right)}$, the uncertainty set $\mathcal{D}_T$ of functions is defined as
	\begin{align*}
	\mathcal{D}_T  (\{ f_t\}_{t=1}^T)\stackrel{\tiny\mbox{def}}{=}\left\lbrace \{f_t\}_{t=1}^T:  \sum_{t=1}^{T-1}\norm{f_t-f_{t+1}} \leq W_T \right\rbrace. \numberthis \label{eq:us2}
	\end{align*}
\end{definition}
Recall that for the case of function-value based regret, we need certain classes of structured nonconvex functions. We first state and provide two examples of functions that satisfy the following $K$-WQC condition.
\begin{assumption}[$K$-weak-quasi-convexity($K$-WQC)] \label{as:1.1}
	The function $f_t$ satisfies $K$-WQC with respect to $x_t^*$, i.e., $f_t\left(x \right)- f_t\left(x_t^* \right)\leq K \nabla f_t\left(x \right)^\top \left(x-x_t^* \right)$ for some $K>1$.	
\end{assumption}
\begin{example}
	The first example is based on the relation between 1-WQC functions and star-convex function. A function $f\left(x\right)$ is defined to be star convex over a set $\mathcal{X}$, if its set of global minima $\mathcal{X}^*$ is non-empty, and for any $x^* \in \mathcal{X}^*$, and $x \in \mathcal{X}$ the following holds: $ f\left(\alpha x^*+\left(1- \alpha\right)x\right)\leq \alpha f\left(x^*\right)+\left(1-\alpha\right)f\left(x\right) \forall \alpha \in [0,1]$. See~\cite{nesterov2006cubic} for more details. It is shown in~\cite{guminov2017accelerated} that if the function $f\left(x\right)$ is also differentiable, then $f\left(x\right)$ is star-convex iff $f\left(x\right)$ is 1-WQC.
\end{example}

\begin{example}
	The next example is based on a certain class of homogeneous function, defined in~\cite{gao2018online}. A function is said to be $\alpha$-homogeneous with respect to it's minimum if there exists $\alpha>0$ for which the following holds
	\begin{align}
	f\left(\beta\left( x-x^*\right)+x^* \right)-f\left(x^*\right)=\beta^\alpha\left(f\left(x\right)-f\left(x^*\right)\right) \qquad \forall x\in \mathcal{X},\quad \beta\geq 0,  \label{eq:alphahomo}
	\end{align}
	where recall $x^*=\argmin_{x\in \mathcal{X}}f\left(x\right) $, and $\mathcal{X}$ is a convex set. The following proposition relates $\alpha$-homogenous to K-WQC functions.
	\begin{proposition} \label{prop:homowqc}
		If a function is differentiable and satisfies $\alpha$-homogeneity w.r.t it's minimum then the function is $K$-WQC where $K>\max{\left( 1,\frac{1}{\alpha}\right) }$.
	\end{proposition}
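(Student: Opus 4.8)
The plan is to differentiate the $\alpha$-homogeneity identity \eqref{eq:alphahomo} with respect to the scalar $\beta$ and then specialize to $\beta = 1$, which is exactly the manipulation used in the companion statement Proposition~\ref{prop:homowqc} already sketched in the earlier $K$-WQC example. Concretely, I would write the defining relation as
\begin{align*}
f\left(\beta\left(x-x^*\right)+x^*\right)-f\left(x^*\right)=\beta^\alpha\left(f\left(x\right)-f\left(x^*\right)\right),
\end{align*}
valid for all $x\in\mathcal{X}$ and $\beta\geq 0$, and regard both sides as functions of $\beta$. The left-hand side is a composition of the differentiable map $\beta\mapsto \beta\left(x-x^*\right)+x^*$ with $f$, so by the chain rule its derivative is $\nabla f\left(\beta\left(x-x^*\right)+x^*\right)^\top\left(x-x^*\right)$; the right-hand side differentiates to $\alpha\beta^{\alpha-1}\left(f\left(x\right)-f\left(x^*\right)\right)$.

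Next I would evaluate both derivatives at $\beta=1$. At $\beta=1$ the argument of the gradient on the left collapses to $x$, giving $\nabla f\left(x\right)^\top\left(x-x^*\right)$, while the right-hand side becomes $\alpha\left(f\left(x\right)-f\left(x^*\right)\right)$. Equating the two yields the key identity
\begin{align*}
\nabla f\left(x\right)^\top\left(x-x^*\right)=\alpha\left(f\left(x\right)-f\left(x^*\right)\right).
\end{align*}
This is the crux: it converts the multiplicative homogeneity relation into a linear first-order relation between the inner product $\nabla f\left(x\right)^\top\left(x-x^*\right)$ and the excess value $f\left(x\right)-f\left(x^*\right)$.

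From here the conclusion is a short rearrangement. Dividing by $\alpha>0$ gives $f\left(x\right)-f\left(x^*\right)=\tfrac{1}{\alpha}\nabla f\left(x\right)^\top\left(x-x^*\right)$, and since any $K>\max\!\left(1,\tfrac{1}{\alpha}\right)$ satisfies $K\geq \tfrac{1}{\alpha}$, I would conclude $f\left(x\right)-f\left(x^*\right)\leq K\,\nabla f\left(x\right)^\top\left(x-x^*\right)$, which is precisely Assumption~\ref{as:1.1} with minimizer $x^*$. One subtlety worth flagging: to pass from an equality to the inequality direction one must verify that $\nabla f\left(x\right)^\top\left(x-x^*\right)\geq 0$, i.e. that the excess value $f\left(x\right)-f\left(x^*\right)$ is nonnegative (which holds because $x^*$ is the minimizer); otherwise inflating the coefficient from $\tfrac{1}{\alpha}$ to $K$ could flip the inequality. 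The $K>1$ requirement in the $K$-WQC definition is then accommodated by taking the maximum with $1$.

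The main obstacle is not any deep estimate but a regularity point: the differentiation-in-$\beta$ step presupposes that $\beta\mapsto f\left(\beta\left(x-x^*\right)+x^*\right)$ is differentiable at $\beta=1$, which follows from the assumed differentiability of $f$, and it also implicitly uses that the segment $\{\beta\left(x-x^*\right)+x^* : \beta\in[0,1]\}$ stays in the domain where $f$ and the identity \eqref{eq:alphahomo} are valid. Since $\mathcal{X}$ is convex and contains both $x$ and $x^*$, this segment lies in $\mathcal{X}$, so the chain-rule computation is justified and the proof reduces to the two-line derivative calculation above.
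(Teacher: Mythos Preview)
Your proof is correct and follows exactly the same approach as the paper: differentiate the homogeneity identity in $\beta$, evaluate at $\beta=1$, and rescale by $K>\max(1,1/\alpha)$. If anything, you are slightly more careful than the paper in explicitly justifying the sign of $\nabla f(x)^\top(x-x^*)$ via the minimality of $x^*$ and the domain regularity for the chain rule.
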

	\begin{proof}
		Taking derivative on both sides of \eqref{eq:alphahomo} w.r.t $\beta$ and setting $\beta=1$ we get, $\nabla f\left(x\right)^\top \left(x-x^*\right)$ $=\alpha \left(f\left(x\right)-f\left(x^*\right)\right)$. Setting $K>\max{\left( 1,\frac{1}{\alpha}\right) }$ we get, $f\left(x \right)- f\left(x^* \right)\leq K \nabla f\left(x \right)^\top \left(x-x^* \right)$.
	\end{proof}
\end{example}
\begin{example}
	As defined in \cite{gao2018online}, the gradient of a function $f\left(x\right)$ is said to satisfy acute angle condition, if there is $Z>0$ such that,
	\begin{align*}
	\cos \left(\nabla f\left(x\right),x-x^*\right)=\frac{\nabla f\left(x\right)^\top \left(x-x^*\right)}{\|\nabla f\left(x\right)\|\|x-x^*\|}\geq Z>0,
	\end{align*}
	for all $x\in \mathcal{X}$ with the convention $\nabla f\left(x\right)/\|\nabla f\left(x\right)\|=0$ when $\|\nabla f\left(x\right)\|=0$.
	If the gradient of a Lipschitz continuous function satisfies acute angle condition then the function is $\frac{K}{Z}$-WQC.
\end{example}

\begin{remark}\label{rem:diffgao}
We pause to remark on the distinction between the function classes considered in~\cite{gao2018online} and our work. \cite{gao2018online} considered the class of Weakly Pseudo Convex (WPC) functions to obtain their regret bounds. The difference between K-WQC, and WPC defined in \cite{gao2018online} is as follows: If a K-WQC function is L-Lipschitz then the function is KL-WPC. In this sense K-WQC is a weaker assumption than WPC. The difference between our results in subsection~\ref{sec:funcvalueregretkwqc} and that of \cite{gao2018online} is that,~\cite{gao2018online} make two more assumptions, namely error bound and Lipschitz continuity of gradient as well to prove their sub-linear non-stationary regret bounds. Although sub-linear regret bounds were obtained in~\cite{gao2018online} for the class of WPC functions, we state and prove Theorem~\ref{theorem_K-WQC_regret} that provides sub-linear regret bounds for the class of K-WQC functions, for the sake of completeness. Furthermore, it provides insights for the results in subsection~\ref{sec:kqwcinhd}. 
\end{remark}
Finally, we define another class of structured nonconvex functions. Let $\mathcal{A}\vcentcolon=\prod_{i=1}^{d}\mathcal{A}_i$ where $\mathcal{A}_i$ are closed intervls on $\mathbb{R}_+$. Without loss of generality we assume, $\mathcal{A}\triangleq\prod_{i=1}^{d}\left[0,a_i\right]$.
\begin{definition}[\cite{chen2018online}] \label{def:weakdrmonosubmod}
	A function $f: \mathcal{A} \to \mathbb{R}_+$ is called $\gamma$-weakly DR-submodular monotone if the following holds:
	\begin{enumerate}
	\item It is monotone, i.e., $f(x) \leq f(y)$, if $x\leq y$.
	\item DR submodular, i.e., $f(x) +f(y) \geq f(x \vee y) + f(x \wedge y) $, for all $x, y \in \mathcal{A}$ and $\nabla f(x) \geq \nabla f(y)$, for all $x\leq y$.
	\item The coefficient of weak DR submodularity is given by
	\begin{align*}
	\gamma=\inf_{x,y\in \mathcal{A},x\leq y} \inf_{i\in d}\frac{\left[\nabla f\left(x\right)\right]_i}{\left[\nabla f\left(y\right)\right]_i},\numberthis \label{eq:weakdrmonosubmod}
	\end{align*}
	where $\left[\nabla f\left(x\right)\right]_i=\frac{\partial f\left(x\right)}{\partial x_i}$, and $\gamma\geq 0$.
	\end{enumerate}
\end{definition}

We also state some preliminary results on the Gaussian Stein's identity based gradient estimator used in the rest of the paper. Following~\cite{spall1998overview, nesterov2017random,2018arXiv180906474B, duchi2015optimal}, we define the Gaussian Stein's identity based gradient estimator of $\nabla f_t\left(x_t \right)$ as,
\begin{align}
G^\nu_t\left(x_t,u_t,\xi_t \right)=\frac{F_t\left( x_t+\nu u_t,\xi_t\right)-F_t\left(x_t,\xi_t \right)  }{\nu}u_t, \label{eq:zerogddef}
\end{align}
where $u_t \sim N \left( 0,I_d\right)$. It is well-known (see e.g., \cite{nesterov2017random}) that $\expec{G^\nu_t\left(x_t,u_t ,\xi_t\right)} = \nabla f^\nu_t(x)$, where $f^\nu_t$ is a Gaussian approximation of $f_t$ defined as
\begin{align}\label{GausApp}
 f^\nu(x) = \frac{1}{(2\pi)^{d/2}} \int f(x+\nu u) ~e^{-\frac{\|u\|_2^2}{2}}~du= \expec{f(x+\nu u)}.
\end{align}
The results below outline some properties of $f^\nu$ and its gradient estimator.
\begin{lemma} [\bfseries\cite{nesterov2017random}]\label{lm:2normboundzerogdlip}
Let $f^\nu_t$ and $G^\nu_t$ be defined in \eqref{GausApp} and \eqref{eq:zerogddef}, respectively. If Assumption \ref{as:lip} holds for $f_t\left(x\right)$, for any $x \in \mathbb{R}^d$, we have
\begin{align}
|f^\nu_t(x)-f_t(x)| &\le \nu L \sqrt{d}, \nonumber \\
\expec{\norm{G^\nu_t\left( x,u,\xi\right) }_2^2} &\leq L^2 \left(d+4 \right)^2.  \label{eq:2normboundzerogdlipa}
\end{align}
\end{lemma}
\begin{lemma}[\bfseries\cite{nesterov2017random} ]\label{lm:2normboundzerogdlipgrad}
	Let the gradient estimator be defined as \eqref{eq:zerogddef} and let Assumption \ref{as:lipgrad} hold for $f_t\left(x\right)$. Then we have for any $x\in \mathbb{R}^d$,
	\begin{align}
	&\norm{\expec{G^\nu_t\left( x,u,\xi\right)}-\nabla f_t\left (x\right )}_2 \leq \frac{\nu}{2}L_G\left (d+3\right )^\frac{3}{2}, \label{eq:8}\\
	&\expec{\norm{G^\nu_t\left( x,u,\xi\right) }_2^2}\leq \frac{\nu^2}{2}L_G^2\left(d+6\right)^3+2\left(d+4\right)\left(\|\nabla f_t\left (x\right )\|_2^2+\sigma^2\right).   \label{eq:2normboundzerogdlipgrad}
	\end{align}
\end{lemma}

\section{Nonstationary Regret bounds for Gradient-size}\label{sec:firstordergradsize}
Recall that assumption like $K$-WQC and submodularity leads to a class of structured nonconvex function that preserve several useful properties of convex function, thereby enabling one to obtain regret bounds in terms of function values. In the absence of such assumptions, considering function-value based regret bounds would lead to intractable bounds. In this section, our goal is introduce a notion of regret for general nonconvex functions based on first-order stationary solutions, motivated by similar performance measures that are standard in offline nonconvex optimization~\citep{nesterov2018lectures}. We assume the functions $\{f_t\}_{t=1}^T$ are general nonconvex function, but satisfying the condition in Definition~\ref{def:US2}. Indeed such an assumption is made for convex function in \cite{besbes2015non}. Furthermore, it has been shown in \cite{hazan2017efficient} that if Assumption \ref{as:lip}, and Assumption \ref{as:lipgrad} hold for a sequence of bounded functions (possibly non-convex), a smoothed version of a particular gradient-size based regret is $\Omega\left(T\right)$. To get tractable regret bounds, we consider the following notion of gradient-size based nonstationary regret. 
\begin{definition}[Expected Gradient-size Regret] \label{def:gradsizereg}
	The expected gradient-size regret of a randomized online algorithm is defined as
	\begin{align}
	\mathfrak{R}^{(p)}_G\left(T \right) := \sum_{t=1}^{T}\expec{\norm{\nabla f_{t}\left(x_t \right)}_p^2 }.\label{eq:gradsizereg}
	\end{align}
\end{definition}
It is also worth emphasizing the connection between gradient-size based regret measure in Definition~\ref{def:gradsizereg} and the path-length of stochastic gradient descent algorithm for offline optimization. Specifically, for offline optimization, when the functions $f_t$ are the same, \cite{oymak2018overparameterized} show that gradient descent follows an almost direct trajectory to the nearest global optima by showing that the path-length is bounded for offline optimization problems. Upper bounds in Theorem~\ref{th:gradsizeboundncfirstorder} on our notion of regret in Definition~\ref{def:gradsizereg}, provides a natural extension of the results of \cite{oymak2018overparameterized} for the online setting, where the functions do change over time. 

Based on the zeroth-order gradient estimator defined in Equation~\ref{eq:zerogddef}, the Gaussian bandit gradient descent algorithm is given in Algorithm~\ref{alg1}. Theorem~\ref{th:gradsizeboundncfirstorder} states the regret bounds achieved by Algorithm~\ref{alg1} in the low dimensional setting. 

\begin{algorithm}[t]
	\caption{Gaussian Bandit Gradient Descent (GBGD)}\label{alg1}	
	{\bf Input:} Horizon $T$, $\eta$ and $\nu$.\\
	{\bf for} $t=1$ to $T$ do	\\
	{\bf Sample} $u_t \sim N \left( 0,\bf{I_d}\right) $\\
	{\bf Pull} $x_t$ and $x_t+\nu u_t$ and receive feedbacks $F_t\left(x_t,\xi_t \right)$ and $F_t\left(x_t+\nu u_t,\xi_t \right)$  \\
	{\bf Set} $G^\nu_t\left(x_t,u_t,\xi_t \right)=\frac{F_t\left( x_t+\nu u_t,\xi_t\right)-F\left(x_t,\xi_t \right)  }{\nu}u_t$\\
	{\bf Update} $x_{t+1}=\mathcal{P}_\mathcal{X}\left( x_t-\eta G_t\left( x_t,u_t,\xi_t\right)\right) $, where $\mathcal{P}_\mathcal{X}\left( y\right)$ is the projection operator, i.e., $$\mathcal{P}_\mathcal{X}\left( y\right)\vcentcolon=\argmin_{x \in \mathcal{X}}\norm{y-x}$$\newline
	{\bf end for}
\end{algorithm}
\begin{theorem} \label{th:gradsizeboundncfirstorder}
	Let $\lbrace x_t\rbrace^T_1$ be generated by Algorithm~\ref{alg1} with $\mathcal{X} =\mathbb{R}^d$, and Assumption~\ref{as:lipgrad} holds for any sequence of $\{f_t\}_{1}^{T}\in \mathcal{D}_T$.
	\begin{enumerate}[label=(\alph*)]
		\item Choosing
		\begin{align*}
		\nu=\frac{1}{\sqrt{T L_G}(d+6)}, \qquad \eta=\frac{1}{4L_G\left( d+4\right)\sqrt{T}} \numberthis, \label{eq:nuetanclowdima}
		\end{align*}
		we have
		\begin{align}
		\mathfrak{R}^{(2)}_G\left(T\right)\leq \mathcal{O}\left(\left(dW_T+\sigma^2\right)\sqrt{T}\right). \label{eq:gradsizebounda}
		\end{align}
		In the deterministic case, as $\sigma=0$, choosing $\eta=\frac{1}{4L_G\left( d+4\right)}$, we get 
		\begin{align}
		\mathfrak{R}^{(2)}_G\left(T\right)\leq \mathcal{O}\left(dW_T\right). \label{eq:detgradsizebounda}
		\end{align}
		\item Additionally, if Assumption~\ref{as:lip} holds, by choosing
		\begin{align}
		\nu=\min\left\{\frac{L}{L_G(d+6)} , \frac{1}{(T L_G^3 d^5)^\frac14}\right\}, \qquad
		\eta=\frac{\sqrt{W_T}}{L\sqrt{TL_G\left( d+4\right) } },  \label{eq:nuetanclowdimb}
		\end{align}
		we have
		\begin{align}
		\mathfrak{R}^{(2)}_G\left(T\right)\leq \mathcal{O}\left(\sqrt{dTW_T}\left(1+\sigma^2\right)\right). \label{eq:gradsizeboundb}
		\end{align}
		For the deterministic case $\sigma=0$.
	\end{enumerate}
\end{theorem}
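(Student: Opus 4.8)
The plan is to run the standard descent-lemma argument for the zeroth-order update $x_{t+1} = x_t - \eta G^\nu_t$ (the projection is the identity since $\mathcal{X} = \mathbb{R}^d$), and then telescope across the changing functions using Definition~\ref{def:US2}. First I would invoke Assumption~\ref{as:lipgrad} at the pair $x_t, x_{t+1}$ to write
\begin{align*}
f_t(x_{t+1}) \leq f_t(x_t) - \eta\langle\nabla f_t(x_t), G^\nu_t\rangle + \frac{L_G\eta^2}{2}\norm{G^\nu_t}^2 .
\end{align*}
Taking expectation over $u_t,\xi_t$ conditioned on $x_t$ and using $\expec{G^\nu_t} = \nabla f^\nu_t(x_t)$, I split $\langle\nabla f_t(x_t), \nabla f^\nu_t(x_t)\rangle = \norm{\nabla f_t(x_t)}^2 + \langle\nabla f_t(x_t), \nabla f^\nu_t(x_t) - \nabla f_t(x_t)\rangle$ and bound the second summand by Cauchy--Schwarz together with the bias estimate $\norm{\expec{G^\nu_t} - \nabla f_t(x_t)} \leq \frac{\nu}{2}L_G(d+3)^{3/2}$ from Lemma~\ref{lm:2normboundzerogdlipgrad}; a Young's inequality then absorbs this cross term into a fraction of $\norm{\nabla f_t(x_t)}^2$ at the cost of an $O(\nu^2)$ additive term. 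For the second-order term I substitute the variance bound $\expec{\norm{G^\nu_t}^2} \leq \frac{\nu^2}{2}L_G^2(d+6)^3 + 2(d+4)(\norm{\nabla f_t(x_t)}^2 + \sigma^2)$, again from Lemma~\ref{lm:2normboundzerogdlipgrad}.

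The resulting one-step inequality has the form $\expec{f_t(x_{t+1}) \mid x_t} \leq f_t(x_t) - c\eta\norm{\nabla f_t(x_t)}^2 + A_t$, where the effective coefficient $c$ is bounded away from zero precisely when $\eta$ meets the descent condition $L_G\eta(d+4) < 1$ (which is exactly what the stated step sizes enforce), and $A_t$ collects the smoothing-bias term of order $\eta\nu^2 L_G^2 d^3$, the variance term of order $L_G\eta^2 d\,\sigma^2$, and a higher-order term of order $\eta^2\nu^2 L_G^3 d^3$. The conceptual heart is the nonstationary telescoping: summing over $t$ and rewriting $\sum_t[f_t(x_t) - f_t(x_{t+1})] = f_1(x_1) - f_T(x_{T+1}) + \sum_{t=2}^{T}[f_t(x_t) - f_{t-1}(x_t)]$, each increment satisfies $f_t(x_t) - f_{t-1}(x_t) \leq \norm{f_t - f_{t-1}} = \sup_x\abs{f_t(x) - f_{t-1}(x)}$, so by Definition~\ref{def:US2} the sum is at most $W_T$, while the boundary term is controlled by the constant initial optimality gap plus a further $W_T$. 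Thus the nonstationarity budget $W_T$ replaces the usual fixed ``$f(x_1)-f^*$'' term.

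Dividing by $c\eta$ and taking full expectations gives $\mathfrak{R}^{(2)}_G(T) \leq \frac{O(W_T + 1)}{\eta} + \frac{1}{\eta}\sum_{t=1}^{T} A_t$, after which the proof reduces to inserting the prescribed $\nu,\eta$ and verifying the orders. For part (a), $\eta = \Theta(1/(L_G d\sqrt{T}))$ makes $\frac{W_T}{\eta} = O(dW_T\sqrt{T})$ and $\frac{1}{\eta}\sum_t L_G\eta^2 d\,\sigma^2 = O(\sigma^2\sqrt{T})$, while $\nu = \Theta(1/(\sqrt{TL_G}\,d))$ renders the smoothing terms a constant in $T$; this yields $\mathcal{O}((dW_T + \sigma^2)\sqrt{T})$, and setting $\sigma=0$ with the $T$-independent step size gives the deterministic $\mathcal{O}(dW_T)$. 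For part (b), the additional Assumption~\ref{as:lip} is what licenses the $W_T$-tuned step size $\eta = \Theta(\sqrt{W_T}/(L\sqrt{TL_G d}))$; then $\frac{W_T}{\eta} = O(\sqrt{dTW_T})$, the variance contribution scales as $O(\sqrt{dTW_T}\,\sigma^2)$, and the bias is rendered lower order by $\nu = \min\{L/(L_G(d+6)),\, (TL_G^3 d^5)^{-1/4}\}$, producing the sharper $\mathcal{O}(\sqrt{dTW_T}(1+\sigma^2))$.

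The step I expect to be the main obstacle is the simultaneous balancing of the three error sources --- Gaussian-smoothing bias of order $\nu$, estimator variance of order $d$, and the descent step $\eta$ --- so that the coefficient of $\norm{\nabla f_t(x_t)}^2$ remains bounded below while the additive terms collapse to the stated $d$- and $T$-dependence. The conceptual structure (descent lemma plus nonstationary telescoping) is routine; the real difficulty is getting the exact polynomial-in-$d$ powers right and choosing $(\nu,\eta)$ so that each of the several residual terms lands at or below the target rate in both the stochastic and deterministic regimes.
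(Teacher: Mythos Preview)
Your proposal is correct and follows essentially the same route as the paper's proof: descent lemma under Assumption~\ref{as:lipgrad}, Young's inequality on the bias cross term, the bias/variance bounds of Lemma~\ref{lm:2normboundzerogdlipgrad}, and nonstationary telescoping against $W_T$ via Definition~\ref{def:US2}. The only point the paper makes more explicit is that in part~(b) Assumption~\ref{as:lip} enters specifically through $\|\nabla f_t(x_t)\|\le L$, which lets one bound the $2(d+4)\|\nabla f_t(x_t)\|^2$ variance term by $2(d+4)L^2$ instead of moving it to the left-hand side---this is the exact mechanism that decouples $\eta$ from the descent constraint and yields the $\sqrt{d}$ (rather than $d$) dependence.
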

\begin{proof}
Under Assumption \ref{as:lipgrad} we get
	\begin{align*}
	f_t\left(x_{t+1} \right) \leq & f_t\left(x_{t} \right)+\nabla f_t\left(x_{t} \right)^\top\left( x_{t+1}-x_t\right)+\frac{L_G}{2}\norm{x_{t+1}-x_t}_2^2\\
	=& f_t\left(x_{t} \right)-\eta\nabla f_t\left(x_{t} \right)^\top G^\nu_t\left( x_t,u_t,\xi_t\right) +\frac{\eta^2 L_G}{2}\norm{G^\nu_t\left( x_t,u_t,\xi_t\right)}_2^2\\
	=&f_t\left(x_{t} \right)-\eta\norm{\nabla f_t\left(x_{t} \right)}_2^2+\eta\nabla f_t\left(x_{t} \right)^\top \left( \nabla f_t\left(x_{t} \right)-G^\nu_t\left( x_t,u_t,\xi_t\right)\right)  +\frac{\eta^2L_G}{2}\norm{G^\nu_t\left( x_t,u_t,\xi_t\right)}_2^2
	\end{align*}
	Taking conditional expectation on both sides, we get
	\begin{align*}
	\expec{f_t\left(x_{t+1} \right)|\mathcal{F}_t} \leq & f_t\left(x_{t} \right)-\eta\norm{\nabla f_t\left(x_{t} \right)}_2^2+\eta\|\nabla f_t\left(x_{t} \right)\|\|\nabla f_t\left(x_{t} \right)-\expec{G^\nu_t\left( x_t,u_t,\xi_t\right)|\mathcal{F}_t}\|  \\
	+&\frac{\eta^2L_G}{2}\expec{\norm{G^\nu_t\left( x_t,u_t,\xi_t\right)}_2^2|\mathcal{F}_t}
	\end{align*}

	Using Young's inequality,
	\begin{align*}
	\expec{f_t\left(x_{t+1} \right)|\mathcal{F}_t} \leq & f_t\left(x_{t} \right)-\eta\norm{\nabla f_t\left(x_{t} \right)}_2^2+\frac{\eta}{2}\norm{\nabla f_t\left(x_{t} \right)}_2^2+\frac{\eta}{2}\norm{\nabla f_t\left(x_{t} \right)-\expec{G^\nu_t\left( x_t,u_t,\xi_t\right)|\mathcal{F}_t}}_2^2\\
	+&\frac{\eta^2L_G}{2}\expec{\norm{G^\nu_t\left( x_t,u_t,\xi_t\right)}_2^2|\mathcal{F}_t}.\numberthis \label{young}
	\end{align*}
Re-arranging the terms and noting Lemma \ref{lm:2normboundzerogdlipgrad}, we obtain
	\begin{align*}
	\frac{\eta}{2}\norm{\nabla f_t\left(x_{t} \right)}_2^2\leq &~ f_t\left(x_{t} \right)-\expec{f_t\left(x_{t+1} \right)|\mathcal{F}_t}+\frac{\eta}{8}\nu^2L_G^2\left (d+3\right )^3 \\
	&+ \frac{\eta^2 L_G}{2}\left(\frac{\nu^2}{2}L_G^2\left(d+6\right)^3+2\left(d+4\right)\left(\norm{\nabla f_t\left(x_{t} \right)}_2^2+\sigma^2\right)\right)
	\end{align*}
	Summing from $t=1$ to $T$, and using Definition \ref{def:US2} we get
	\begin{align*}
	\sum_{t=1}^{T}\expec{\norm{\nabla f_t\left(x_{t} \right)}_2^2}\leq & \frac{2}{\eta}\left(f_1\left(x_{1} \right)-\expec{f_T\left(x_{T+1} \right)} +W_T\right) +\frac{T}{4}\nu^2L_G^2\left (d+3\right )^3\\
	+& \eta T\frac{\nu^2}{2}L_G^3\left(d+6\right)^3+2\eta L_G\left(d+4\right)\sum_{t=1}^{T}\expec{\norm{\nabla f_t\left(x_{t} \right)}_2^2+\sigma^2} \numberthis. \label{eq:expecgradsizeboundb4nueta}
	\end{align*}
	Now we split the proof in two parts corresponding to the parts in Theorem \ref{th:gradsizeboundncfirstorder}.
	\begin{enumerate}[label=(\alph*)]
		\item From \eqref{eq:expecgradsizeboundb4nueta} we get,
		\begin{align*}
		\sum_{t=1}^{T}\left(1-2\eta L_G\left(d+4\right)\right)\expec{\norm{\nabla f_t\left(x_{t} \right)}_2^2}\leq & \frac{2}{\eta}\left(f_1\left(x_{1} \right)-\expec{f_T\left(x_{T+1} \right)} +W_T\right) +\frac{T}{4}\nu^2L_G^2\left (d+3\right )^3\\
		+& \eta T\frac{\nu^2}{2}L_G^3\left(d+6\right)^3+2\eta T L_G\left(d+4\right)\sigma^2.
		\end{align*}
		Choosing $\nu$ and $\eta$ according to \eqref{eq:nuetanclowdima}, we get \eqref{eq:gradsizebounda}.
		\item It is possible to improve the dependence of the regret bound on the problem dimension assuming that the loss functions are Lipschitz continuous. In this case, we have $\|\nabla f_t\left(x_t \right) \|\leq L$ which together with \eqref{eq:expecgradsizeboundb4nueta}, imply that
		\begin{align*}
		\sum_{t=1}^{T}\expec{\norm{\nabla f_t\left(x_{t} \right)}_2^2}\leq & \frac{2}{\eta}\left(f_1\left(x_{1} \right)-\expec{f_T\left(x_{T+1} \right)} +W_T\right) +\frac{T}{4}\nu^2L_G^2\left (d+3\right )^3\\
		+& \eta TL_G\left(\frac{\nu^2}{2}L_G^2\left(d+6\right)^3+2\left(d+4\right)\left(L^2+\sigma^2\right)\right) \numberthis \label{eq:expecgradsizeboundb4nuetab}
		\end{align*}
		Choosing $\nu$ and $\eta$ according to \eqref{eq:nuetanclowdimb}, we obtain \eqref{eq:gradsizeboundb}.
\end{enumerate}
\end{proof}

We now bound the gradient size based regret for the high-dimensional case in the following theorem. Here we exploit the sparsity of the gradient to reduce the dimension dependency of $\mathfrak{R}^{(1)}_G\left(T\right)$. 
\begin{theorem}\label{th:gradsizeboundncfirstorderhd}
	Let Assumption \ref{as:lipgrad} be satisfied with $\| \cdot\| = \| \cdot \|_\infty$ and Assumption \ref{as:sparsegrad} hold for any sequence of $\{f_t\}_{1}^{T}\in \mathcal{D}_T$.
	\begin{enumerate}[label=(\alph*)]
		\item By choosing
		\begin{align*}
		\nu=\frac{1}{\sqrt{2T}} \min \left\{\sqrt{\frac{1}{C L_G \log d}} , s \sqrt{\frac{C \log d}{L_G}}\right\}, \qquad \eta=\frac{\sqrt{W_T}}{32CL_Gs\left( \log d\right)^2\sqrt{T} }, \numberthis \label{eq:nuetanclowdimahigh}
		\end{align*}
		we have
		\begin{align}
		\mathfrak{R}^{(1)}_G\left(T\right)\leq \mathcal{O}\left(\left(\left(s\log d\right)^2+\sigma^2\right) \sqrt{TW_T}\right). \label{eq:gradsizeboundahigh}
		\end{align}
		In the deterministic case, setting $\sigma=0$, we get 
		\begin{align}
		\mathfrak{R}^{(1)}_G\left(T\right)\leq \mathcal{O}\left(\left(s\log d\right)^2 \sqrt{TW_T}\right). \label{eq:detgradsizeboundahigh}
		\end{align}
		\item  If, in addition, Assumption \ref{as:lip} holds w.r.t $\infty$-norm, by choosing 
		\begin{align}
		\nu=\left[\frac{1}{2T s^2 C^3 L_G L^2 \left(\log d \right)^4}\right]^\frac14, \qquad \eta=\frac{\sqrt{W_T}}{2\sqrt{TCL_G}L \log d},\label{nuetanchighdim}
		\end{align}
		we obtain
		\begin{align}
		\mathfrak{R}^{(1)}_G\left(T\right)\leq \mathcal{O}\left(s \log d\left(1+\sigma^2\right)\sqrt{TW_T} \right). \label{eq:gradsizebound}
		\end{align}
		In the deterministic case, setting $\sigma=0$, we get
		\begin{align}
		\mathfrak{R}^{(1)}_G\left(T\right)\leq \mathcal{O}\left(s \log d\sqrt{TW_T} \right). \label{eq:detgradsizebound}
		\end{align} 
	\end{enumerate}
\end{theorem}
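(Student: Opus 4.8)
The plan is to mirror the proof of Theorem~\ref{th:gradsizeboundncfirstorder}, but to carry the entire argument out in the $\ell_\infty/\ell_1$ geometry forced by Assumption~\ref{as:lipgrad} (now read with $\|\cdot\|=\|\cdot\|_\infty$ and dual $\|\cdot\|_*=\|\cdot\|_1$), and to exploit the sparsity of $\nabla f_t$ (Assumption~\ref{as:sparsegrad}) to convert squared $\ell_2$ gradient norms into the squared $\ell_1$ norms appearing in $\mathfrak{R}^{(1)}_G(T)$. Taking $\mathcal{X}=\mathbb{R}^d$ so that the update is $x_{t+1}=x_t-\eta G^\nu_t(x_t,u_t,\xi_t)$, I would begin, exactly as in Theorem~\ref{th:gradsizeboundncfirstorder}, from the descent inequality supplied by Assumption~\ref{as:lipgrad}, $f_t(x_{t+1})\le f_t(x_t)-\eta\,\nabla f_t(x_t)^\top G^\nu_t+\tfrac{\eta^2 L_G}{2}\norm{G^\nu_t}_\infty^2$, and take conditional expectation given $\mathcal{F}_t$, recalling $\expec{G^\nu_t|\mathcal{F}_t}=\nabla f^\nu_t(x_t)$.

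The new ingredient is that the two quantities produced by this step --- the estimator bias $\norm{\nabla f^\nu_t-\nabla f_t}_\infty$ and the second moment $\expec{\norm{G^\nu_t}_\infty^2|\mathcal{F}_t}$ --- must be re-derived in the $\ell_\infty$ norm, replacing Lemmas~\ref{lm:2normboundzerogdlip} and~\ref{lm:2normboundzerogdlipgrad}. Both follow from the Gaussian maximal-moment estimates $\expec{\norm{u_t}_\infty^{2k}}=\mathcal{O}((\log d)^{k})$. Writing the second-order remainder $r$ of $F_t(x_t+\nu u_t,\xi_t)$ as bounded by $\tfrac{L_G}{2}\nu^2\norm{u_t}_\infty^2$, one obtains $\norm{\nabla f^\nu_t-\nabla f_t}_\infty=\mathcal{O}(\nu L_G\sqrt{\log d})$; and expanding $G^\nu_t=\langle\nabla F_t(x_t,\xi_t),u_t\rangle u_t+(r/\nu)u_t$, applying Cauchy--Schwarz together with the oracle variance bound $\expec{\norm{\nabla F_t-\nabla f_t}_1^2}\le\sigma^2$ from Assumption~\ref{as:stoch}, yields $\expec{\norm{G^\nu_t}_\infty^2|\mathcal{F}_t}=\mathcal{O}(\log d)\big(\norm{\nabla f_t}_1^2+\sigma^2\big)+\mathcal{O}(\nu^2 L_G^2(\log d)^3)$.

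With these in hand I would assemble the bound as in Theorem~\ref{th:gradsizeboundncfirstorder}: write $\expec{\nabla f_t^\top G^\nu_t|\mathcal{F}_t}=\norm{\nabla f_t}_2^2+\nabla f_t^\top(\nabla f^\nu_t-\nabla f_t)$, control the cross term by H\"older as $\norm{\nabla f_t}_1\,\norm{\nabla f^\nu_t-\nabla f_t}_\infty$ and Young's inequality with weight $1/s$, and invoke sparsity through $\norm{\nabla f_t}_2^2\ge\tfrac1s\norm{\nabla f_t}_1^2$. Choosing $\eta$ small enough that $\eta L_G s\log d\lesssim 1$ lets the $\norm{\nabla f_t}_1^2$ term coming from the second moment be absorbed into the left-hand side, leaving a per-step inequality of the form $\tfrac{\eta}{4s}\expec{\norm{\nabla f_t}_1^2}\le f_t(x_t)-\expec{f_t(x_{t+1})|\mathcal{F}_t}+(\text{bias}+\text{noise terms})$. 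Summing over $t$, telescoping $\sum_t\big(f_t(x_t)-f_t(x_{t+1})\big)$ and bounding the drift $\sum_t\norm{f_{t+1}-f_t}_\infty\le W_T$ via Definition~\ref{def:US2}, then multiplying through by $4s/\eta$ and inserting the stated $\nu,\eta$ from~\eqref{eq:nuetanclowdimahigh}, produces the $((s\log d)^2+\sigma^2)\sqrt{TW_T}$ rate of part (a); the leading contribution is precisely $\tfrac{s}{\eta}W_T$. For part (b), the additional Lipschitz assumption (Assumption~\ref{as:lip} in $\infty$-norm) gives the sharper pointwise bound $\norm{G^\nu_t}_\infty\le L\norm{u_t}_\infty^2$, hence $\expec{\norm{G^\nu_t}_\infty^2|\mathcal{F}_t}=\mathcal{O}(L^2(\log d)^2)$ \emph{without} any $\norm{\nabla f_t}_1^2$ term; this removes the need to shrink $\eta$ for absorption, permits a larger step, and balancing $W_T/\eta$ against $\eta T L^2(\log d)^2$ with the choice~\eqref{nuetanchighdim} yields the improved $s\log d$ rate.

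The main obstacle is the sharp control of $\expec{\norm{G^\nu_t}_\infty^2}$ with only polylogarithmic dependence on $d$: this requires the Gaussian maximal-moment bounds together with a clean separation of the sparse signal term $\langle\nabla F_t,u_t\rangle u_t$ from the curvature remainder, and then getting the Young's-inequality weights exactly right so that the squared-$\ell_1$ contribution is absorbed rather than inflating the $s$-dependence. Once these estimator bounds are established, the remainder is bookkeeping that parallels Theorem~\ref{th:gradsizeboundncfirstorder} line by line.
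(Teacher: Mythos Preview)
Your proposal is correct and mirrors the paper's proof: start from the $\ell_\infty$ descent inequality, control the cross term via H\"older and Young with weight $s$, invoke the $\ell_\infty$ estimator bounds (the paper packages these as Lemmas~\ref{lm:sparseesterror} and~\ref{lm:zerogradboundinfty} rather than re-deriving them), use sparsity through $\norm{\nabla f_t}_2^2\ge s^{-1}\norm{\nabla f_t}_1^2$ to absorb the $\norm{\nabla f_t}_1^2$ contribution from the second moment, and telescope using $W_T$. The only minor slip is the bias exponent---Lemma~\ref{lm:sparseesterror} gives $\norm{\nabla f^\nu_t-\nabla f_t}_\infty=\mathcal{O}(\nu L_G(\log d)^{3/2})$, not $(\log d)^{1/2}$, because the remainder brings in $\expec{\norm{u}_\infty^3}$---but this term is not dominant and the stated rates are unaffected.
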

\begin{proof}
	Under Assumption \ref{as:lipgrad} w.r.t $l_\infty$-norm and similar to \eqref{young}, we get
	\begin{align*}
	\expec{f_t\left(x_{t+1} \right)|\mathcal{F}_t} \leq & f_t\left(x_{t} \right)-\eta\norm{\nabla f_t\left(x_{t} \right)}_2^2+\frac{\eta}{2s}\norm{\nabla f_t\left(x_{t} \right)}_1^2+\frac{\eta s}{2}\norm{\nabla f_t\left(x_{t} \right)-\expec{G^\nu_t\left( x_t,u_t,\xi_t\right)|\mathcal{F}_t}}_\infty^2\\
	+&\eta^2\frac{L_G}{2}\expec{\norm{G^\nu_t\left( x_t,u_t,\xi_t\right)}_\infty^2|\mathcal{F}_t}.
	\end{align*}
Noting Lemma \ref{lm:sparseesterror}, the fact that $\|\nabla f_t(x_t)\|_1 \le \sqrt{s}\|\nabla f_t(x_t)\|_2$ under Assumption~\ref{as:sparsegrad} and after re-arranging the terms, we obtain
	\begin{align*}
	\frac{\eta}{2s}\left[1-16C \eta L_G s(\log d)^2 \right]\norm{\nabla f_t\left(x_{t} \right)}_1^2\leq & f_t\left(x_{t} \right)-\expec{f_t\left(x_{t+1} \right)|\mathcal{F}_t}
	\\+ &C \eta  L_G \left(\log d \right)^2 [s\nu^2CL_G \log d + 2\eta \left(\nu^2 L_G^2 \log d+4\sigma^2\right)].
	\end{align*}
	Summing up both sides of the above inequality, noting \eqref{eq:nuetanclowdimahigh} and Definition \ref{def:US2} we get \eqref{eq:gradsizeboundahigh}. Noting Lemma \ref{lm:zerogradboundinfty} under Assumption \ref{as:lip},  part b) follows similarly.
\end{proof}

\section{Nonstationary Second-Order Regret Bounds}\label{sec:secondorderregret}
While gradient-size based regret (in Definition~\ref{def:gradsizereg}) controls first-order stationary solutions, it does not allows us to avoid saddle-points that are prevalent in nonconvex optimization problems arising in machine learning and game theory~\cite{dauphin2014identifying, hazan2017efficient}. Hence, we propose a notion of second-order stationary point based regret (Definition~\ref{def:expecsecondorderreg}). We then propose online and bandit versions of cubic regularized Newton method and obtain the respective nonstationary regret bounds.
\subsection{Online Cubic-regularized Newton Method} 
The standard cubic-regularized Newton method ~\cite{nesterov2006cubic} has been recently extended to the stochastic setting in~\cite{tripuraneni2018stochastic} and to the zeroth-order setting in~\cite{2018arXiv180906474B}. In Algorithm~\ref{alg:nccrn}, we consider it in the online setting. Note that~\cite{hazan2007logarithmic} used online Newton method previously in the context of online convex optimization to obtain logarithmic regret bounds under certain assumptions and~\cite{hazan2017efficient} used a modified online Newton method in the context of online nonconvex optimization. Here, we consider the following notion of regret, based on second-order stationary.

\begin{definition}[Expected Second Order Regret] \label{def:expecsecondorderreg}
	The expected second-order regret of a randomized online algorithm is defined as
	\begin{align}
	\mathfrak{R}_{ENC}\left( T\right) =\sum_{t=1}^{T}\expec{r_{NC}\left( t\right)} =\sum_{t=1}^{T}\expec{\max \left(\|\nabla f_t(x_t)\| ,\left(-\frac{2}{L_H}\lambda_{\min}\left(\nabla^2 f_t\left(x_t\right)\right)\right)^3\right)}.\label{eq:expecsecondorderreg}
	\end{align}	
	The expectation is taken w.r.t the filtration generated by $\lbrace x_t\rbrace_1^T$, and $\lbrace \xi_t^{G\left(H\right)}\rbrace_1^T$ in the online case. In the bandit case, the expectation is w.r.t the filtration generated by $\lbrace x_t\rbrace_1^T$, $\lbrace \xi_t^{G\left(H\right)}\rbrace_1^T$, and $\lbrace u_t^{G\left(H\right)}\rbrace_1^T$. 
\end{definition}
The above regret is again motivated by the problem of escaping saddle-point in offline nonconvex optimization~\cite{nesterov2018lectures}. In other words, considering offline nonconvex minimization, while the first order stationary solutions might include maxima, minima or saddle point, second-order stationary solutions are purely local minima avoiding saddle points. The above definition, extends this notion of avoiding saddle points, for the case of nonstationary online nonconvex optimization. The following theorem provides a regret bound for $\mathfrak{R}_{ENC}\left( T\right)$ using the online cubic-regularized Newton method. 
\begin{algorithm}[t]
	\caption{Online Cubic-Regularized Newton Algorithm (OCRN)}\label{alg:nccrn}	
	{\bfseries{Input:}} Horizon $T$, $M$, $m_t$, $b_t$\\
	{\bf for} $t=1$ to $T$ do	\\
	{\bf Set} $\bg=\frac{1}{m_t}\sum_{i=1}^{m_t}\nabla F_t\left(x_t,\xi_{i,t}^G\right)$\\
	{\bf Set} $\bh=\frac{1}{b_t}\sum_{i=1}^{b_t}\nabla^2 F_t\left(x_t,\xi_{i,t}^H\right)$\\
	{\bf Update}
	\begin{align*}
	x_{t+1}=\argmin_{y}\tilde{f_t}\left( x_t,y,\bg,\bh,M\right), \numberthis \label{eq:cnstep}
	\end{align*}
	{where}
	\begin{align*}
	\tilde{f_t}\left( x_t,y,\bg,\bh,M\right) =\bg ^\top\left(y-x_t\right)+\frac{1}{2}\langle\bh \left( y-x_t\right),\left( y-x_t\right) \rangle  +\frac{M}{6}\norm{y-x_t}^3. \numberthis \label{eq:ftilde}
	\end{align*}
	{\bf end for}
\end{algorithm}
\begin{theorem} \label{th:nzstochocnabound}
	Let us choose the parameters for Algorithm~\ref{alg:nccrn} as follows:
	\begin{align*}
	M=L_H,\qquad m_t=m=T^\frac{4}{3}, \qquad b_t=b=T^\frac{2}{3}. \numberthis \label{eq:stochnuetanccubic}
	\end{align*}
	Moreover, suppose that Assumption \ref{as:lipgrad}, and Assumption \ref{as:liphess} hold for any sequence of functions $\{f_t\}_{1}^{T}\in \mathcal{D}_T$. Then, Algorithm~\ref{alg:nccrn} with the choice of $M \ge L_H$ produces updates such that
	\begin{align}
	\mathfrak{R}_{ENC}\left( T\right)\leq \mathcal{O}\left( T^\frac{2}{3}\left( 1+W_T\right)+T^\frac{1}{3}\left(\sigma+\varkappa^2\right) \right) \label{second_regret_bnd},
	\end{align}
	where the second-order regret $\mathfrak{R}_{ENC}$ is defined in \eqref{eq:expecsecondorderreg}. In the deterministic case, setting $\sigma$, and $\varkappa$ to $0$ we get,
	\begin{align}
	\mathfrak{R}_{ENC}\left( T\right)\leq \mathcal{O}\left( T^\frac{2}{3}\left( 1+W_T\right) \right) \label{detsecond_regret_bnd},
	\end{align}
\end{theorem}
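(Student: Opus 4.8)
The plan is to control everything through the length of the cubic-Newton step, $r_t\defeq\norm{x_{t+1}-x_t}$, where $x_{t+1}$ minimizes the model $\tilde f_t(x_t,\cdot,\bg,\bh,M)$ in \eqref{eq:ftilde}. First I would record the first- and second-order optimality conditions of this subproblem (as in the Nesterov--Polyak analysis of \cite{nesterov2006cubic}): $\bg+(\bh+\tfrac M2 r_t I)(x_{t+1}-x_t)=0$ and $\bh+\tfrac M2 r_t I\succeq 0$. The first identity gives $\norm{\bg}\le(\norm{\bh}_{\mathrm{op}}+\tfrac M2 r_t)r_t$, and since $\norm{\nabla^2 f_t(x_t)}_{\mathrm{op}}\le L_G$ under Assumption~\ref{as:lipgrad}, adding and subtracting $\nabla f_t(x_t)$, $\nabla^2 f_t(x_t)$ yields
\begin{align*}
\norm{\nabla f_t(x_t)}\le\left(L_G+\norm{\bh-\nabla^2 f_t(x_t)}+\tfrac M2 r_t\right)r_t+\norm{\bg-\nabla f_t(x_t)}.
\end{align*}
The PSD condition, with $\nabla^2 f_t(x_t)=\bh-(\bh-\nabla^2 f_t(x_t))$ and $M=L_H$, gives $-\tfrac{2}{L_H}\lambda_{\min}(\nabla^2 f_t(x_t))\le r_t+\tfrac{2}{L_H}\norm{\bh-\nabla^2 f_t(x_t)}$. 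Writing $r_{NC}(t)\le\norm{\nabla f_t(x_t)}+\big(\max\{-\tfrac{2}{L_H}\lambda_{\min}(\nabla^2 f_t(x_t)),0\}\big)^3$, both contributions are now expressed purely through $r_t$ and the estimation errors $\norm{\bg-\nabla f_t(x_t)}$, $\norm{\bh-\nabla^2 f_t(x_t)}$.

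The second ingredient is an inexact cubic-regularized descent lemma. Bounding $f_t(x_{t+1})$ by its cubic Taylor model at $x_t$ via Assumption~\ref{as:liphess}, comparing with $\tilde f_t$, using $M\ge L_H$ together with the optimal model value $\tilde f_t(x_t,x_{t+1},\bg,\bh,M)\le-\tfrac{M}{12}r_t^3$, I would obtain
\begin{align*}
f_t(x_{t+1})\le f_t(x_t)-\tfrac{M}{12}r_t^3+\norm{\bg-\nabla f_t(x_t)}\,r_t+\tfrac12\norm{\bh-\nabla^2 f_t(x_t)}\,r_t^2.
\end{align*}
Splitting the two cross terms by Young's inequality (exponents $3,\tfrac32$ and $\tfrac32,3$) absorbs a fixed fraction of $r_t^3$ on the left while leaving noise terms $\norm{\bg-\nabla f_t(x_t)}^{3/2}$ and $\norm{\bh-\nabla^2 f_t(x_t)}^{3}$. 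Summing, taking full expectation, and telescoping with the nonstationarity bookkeeping $\sum_t\expec{f_t(x_t)-f_t(x_{t+1})}\le f_1(x_1)-\inf_x f_T(x)+W_T$ (which uses $\{f_t\}\in\mathcal D_T$ exactly as in \eqref{eq:expecgradsizeboundb4nueta}) gives $\sum_{t=1}^T\expec{r_t^3}\le\mathcal O(1+W_T)+(\text{lower-order noise})$.

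Finally I would assemble the regret. The cross term $\norm{\bh-\nabla^2 f_t(x_t)}\,r_t$ in the gradient bound is split as $\tfrac12\norm{\bh-\nabla^2 f_t(x_t)}^2+\tfrac12 r_t^2$, which is the source of the quadratic Hessian-noise dependence. Applying Jensen and Hölder over the $T$ rounds, $\sum_t\expec{r_t}\le T^{2/3}(\sum_t\expec{r_t^3})^{1/3}$ and $\sum_t\expec{r_t^2}\le T^{1/3}(\sum_t\expec{r_t^3})^{2/3}$, so the $r_t$-part of $\sum_t\expec{\norm{\nabla f_t(x_t)}}$ is $\mathcal O(T^{2/3}(1+W_T))$ and the $r_t^3$-part of the eigenvalue term is $\mathcal O(1+W_T)$. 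For the noise, $m=T^{4/3}$, $b=T^{2/3}$ with Assumption~\ref{as:stoch} give $\expec{\norm{\bg-\nabla f_t(x_t)}}\le\sigma T^{-2/3}$ and $\expec{\norm{\bh-\nabla^2 f_t(x_t)}^2}\le\varkappa^2 T^{-2/3}$, hence $\sum_t\expec{\norm{\bg-\nabla f_t(x_t)}}\le\sigma T^{1/3}$ and $\sum_t\expec{\norm{\bh-\nabla^2 f_t(x_t)}^2}\le\varkappa^2 T^{1/3}$, while the residual higher-moment terms are of strictly smaller order. Combining gives \eqref{second_regret_bnd}, and $\sigma=\varkappa=0$ recovers \eqref{detsecond_regret_bnd}.

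I expect the main obstacle to be the descent-lemma bookkeeping under noise: the step $x_{t+1}$ is a nonlinear function of the noisy estimates $\bg,\bh$, so $r_t$ is statistically coupled to $\bg-\nabla f_t(x_t)$ and $\bh-\nabla^2 f_t(x_t)$. One must therefore keep every manipulation pathwise (valid for each realization) before taking expectations, and route each cross term through the Young split whose exponent produces the advertised first-moment $\sigma$ and second-moment $\varkappa^2$ dependence rather than a worse power.
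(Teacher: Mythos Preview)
Your proposal is correct and tracks the paper's argument closely: optimality conditions for the cubic subproblem (Lemma~\ref{lm:nesopttrad}), a pathwise bound on $r_{NC}(t)$ in terms of $r_t=\|x_{t+1}-x_t\|$ and the estimation errors, an inexact descent lemma giving $\tfrac{M}{36}r_t^3\le f_t(x_t)-f_t(x_{t+1})+\mathcal O(\|\bg-\nabla f_t(x_t)\|^{3/2}+\|\bh-\nabla^2 f_t(x_t)\|^3)$, telescoping with the $W_T$ budget from Definition~\ref{def:US2}, and the batch-size choices \eqref{eq:stochnuetanccubic}. The one substantive deviation is how you pass from $\sum_t\expec{r_t}$ and $\sum_t\expec{r_t^2}$ to $\sum_t\expec{r_t^3}$: you invoke Jensen and H\"older over the $T$ rounds, whereas the paper performs a pathwise case split on $\|h_t\|\le T^{-1/3}$ versus $\|h_t\|>T^{-1/3}$ (in the large-step case $\|h_t\|^k\le T^{(3-k)/3}\|h_t\|^3$, then the descent lemma is applied directly). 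Your route is cleaner and in fact yields $T^{2/3}(1+W_T)^{1/3}$ rather than $T^{2/3}(1+W_T)$ on the leading term, so \eqref{second_regret_bnd} follows a fortiori. A minor second difference: you bound $\|\nabla f_t(x_t)\|$ directly from $\|\bg\|$ via the first-order condition and $\|\nabla^2 f_t(x_t)\|_{\mathrm{op}}\le L_G$, while the paper detours through $\|\nabla f_t(x_{t+1})\|$ using Assumption~\ref{as:liphess} and then backtracks with Assumption~\ref{as:lipgrad}; both arrive at the same $L_G r_t+\mathcal O(r_t^2)+\text{noise}$ shape.
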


In order to prove the above theorem, we require the following result from~\cite{nesterov2006cubic}.
\begin{lemma}[\cite{nesterov2006cubic}] \label{lm:nesopttrad} Let $\{x_t\}$ be generated by Algorithm~\ref{alg:nccrn} with $M \ge L_H$. Then, we have
	\begin{subequations}
		\begin{align}
		\begin{split}
		\bg+\bh h_t +\frac{M}{2}\norm{h_t}h_t=0 \label{eq:nesoptvanillaa}
		\end{split}\\
		\begin{split}
		\bh+\frac{M}{2}\norm{h_t}I_d \succcurlyeq 0 \label{eq:nesoptvanillab}
		\end{split}\\
		\begin{split}
		\bg^\top h_t\leq 0 \label{eq:nesoptvanillac}
		\end{split}
		\end{align}
	\end{subequations}	
\end{lemma}

\begin{lemma}\label{lm:graderrorbound}
	Under Assumption \ref{as:lip}, and Assumption \ref{as:lipgrad} we have
	\begin{align}
	\expec{\|\bg-\nabla_t\|_2^2}\leq \frac{\sigma^2}{m_t} \label{eq:graderrorbound}
	\end{align}
\end{lemma}
\begin{lemma}\label{lm:hesserrorbound}
	Under Assumption \ref{as:lipgrad}, and Assumption \ref{as:liphess} we have
	\begin{subequations}
		\begin{align}
		\begin{split}
		\expec{\|\bh-\nabla^2_t\|^2}\leq \frac{\varkappa^2}{b_t}\label{eq:hesserrorbounda}
		\end{split}\\
		\begin{split}
		\expec{\|\bh-\nabla^2_t\|^3}\leq \frac{2\varkappa^3}{b_t^\frac{3}{2}}\label{eq:hesserrorboundb}
		\end{split}
		\end{align}
	\end{subequations}
\end{lemma}
The proofs of Lemma~\ref{lm:graderrorbound}-{lm:hesserrorbound} are similar to Lemma 2.1, and Lemma 4.4 in \cite{2018arXiv180906474B}, and hence omitted here. \\
In the rest of the proof we use $\nabla_t$, $\nabla_t^2$, $h_t$, and $\lambda_{t,\min}$ to denote $\nabla f_t\left(x_t \right) $, $\nabla^2 f_t\left(x_t \right) $,  $\left( x_{t+1}-
x_t\right) $, and the minimum eigenvalue of $\nabla^2f_t(x_t)$ respectively.
  
\begin{lemma}
Under Assumption \ref{as:lipgrad}, and Assumption \ref{as:liphess}, for $M\geq L_H$, the points generated by Algorithm~\ref{alg:banditzeronccrn} satisfy the following
\begin{align}
\frac{M}{36}\norm{h_t}^3\leq f_t\left(x_{t}\right)-f_t\left(x_{t+1}\right) +\frac{4}{\sqrt{3M}}\|\nabla_t-\bg\|^\frac{3}{2}
+\frac{24}{M^2}\|\nabla_t^2-\bh\|^3 \label{eq:fnvalchange}
\end{align}
\end{lemma}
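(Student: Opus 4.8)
The plan is to start from the standard cubic upper bound implied by Lipschitz Hessian (Assumption~\ref{as:liphess}): applied at $x_t$ and $x_{t+1}=x_t+h_t$ it gives
\begin{align*}
f_t(x_{t+1})\leq f_t(x_t)+\nabla_t^\top h_t+\tfrac12 h_t^\top\nabla_t^2 h_t+\tfrac{L_H}{6}\norm{h_t}^3.
\end{align*}
Since $M\geq L_H$, I replace $L_H$ by $M$ in the last term (loosening the bound). I then split the first two terms into the estimated quantities used by the algorithm plus error terms, writing $\nabla_t^\top h_t=\bg^\top h_t+(\nabla_t-\bg)^\top h_t$ and $h_t^\top\nabla_t^2 h_t=h_t^\top\bh h_t+h_t^\top(\nabla_t^2-\bh)h_t$.

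The key step is to control the ``estimated'' part $\bg^\top h_t+\tfrac12 h_t^\top\bh h_t$ using the optimality conditions of the cubic subproblem from Lemma~\ref{lm:nesopttrad}. From \eqref{eq:nesoptvanillaa} I have $\bg=-\bh h_t-\tfrac{M}{2}\norm{h_t}h_t$, hence $\bg^\top h_t=-h_t^\top\bh h_t-\tfrac{M}{2}\norm{h_t}^3$, so that $\bg^\top h_t+\tfrac12 h_t^\top\bh h_t=-\tfrac12 h_t^\top\bh h_t-\tfrac{M}{2}\norm{h_t}^3$. The positive-semidefiniteness condition \eqref{eq:nesoptvanillab} gives $h_t^\top\bh h_t\geq-\tfrac{M}{2}\norm{h_t}^3$, whence $-\tfrac12 h_t^\top\bh h_t\leq \tfrac{M}{4}\norm{h_t}^3$. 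Combining, the estimated part is bounded by $-\tfrac{M}{4}\norm{h_t}^3$, and together with the $\tfrac{M}{6}\norm{h_t}^3$ remainder this leaves a net negative cubic term $-\tfrac{M}{12}\norm{h_t}^3$, plus the two error terms $(\nabla_t-\bg)^\top h_t$ and $\tfrac12 h_t^\top(\nabla_t^2-\bh)h_t$.

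It remains to absorb the error terms into the cubic term, budgeting $\tfrac{M}{12}-\tfrac{M}{36}=\tfrac{M}{18}$ of it (split as $\tfrac{M}{36}$ for each error). By Cauchy--Schwarz the errors are at most $\norm{\nabla_t-\bg}\norm{h_t}$ and $\tfrac12\norm{\nabla_t^2-\bh}\norm{h_t}^2$. I then apply Young's inequality with the conjugate exponents $(3,3/2)$ and tuned scaling constants: choosing the scale so the $\norm{h_t}^3$ coefficient equals $\tfrac{M}{36}$ produces exactly the residual coefficients $\tfrac{4}{\sqrt{3M}}$ on $\norm{\nabla_t-\bg}^{3/2}$ and $\tfrac{24}{M^2}$ on $\norm{\nabla_t^2-\bh}^3$. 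Collecting the three cubic contributions $-\tfrac{M}{12}+\tfrac{M}{36}+\tfrac{M}{36}=-\tfrac{M}{36}$ and rearranging yields \eqref{eq:fnvalchange}. The only delicate points are the correct use of the two optimality conditions in the main step and the bookkeeping of the Young's-inequality scalings so that the stated constants come out exactly; both are mechanical once the decomposition is set up, so I do not anticipate a genuine obstacle beyond careful constant-matching.
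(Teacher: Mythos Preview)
Your argument is correct and follows essentially the same route as the paper: cubic upper bound from Lipschitz Hessian, substitution of the first-order optimality condition \eqref{eq:nesoptvanillaa} to produce the $-\tfrac{M}{12}\norm{h_t}^3$ term, and then two applications of Young's inequality with exponents $(3,3/2)$ calibrated so that each error term absorbs exactly $\tfrac{M}{36}\norm{h_t}^3$. The only cosmetic difference is that you invoke the second-order condition \eqref{eq:nesoptvanillab} directly to obtain $h_t^\top\bh h_t\ge -\tfrac{M}{2}\norm{h_t}^3$, whereas the paper reaches the same inequality by combining \eqref{eq:nesoptvanillaa} with \eqref{eq:nesoptvanillac}; these are equivalent since \eqref{eq:nesoptvanillac} is itself a consequence of \eqref{eq:nesoptvanillaa} and \eqref{eq:nesoptvanillab}. (Strictly speaking the relevant optimality lemma for Algorithm~\ref{alg:banditzeronccrn} is Lemma~\ref{lm:zeronesopttrad} rather than Lemma~\ref{lm:nesopttrad}, but their content is identical.)
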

\begin{proof}
If $M\geq L_H$, using Assumption \ref{as:liphess}
\begin{align*}
f_t\left(x_{t+1}\right)\leq & f_t\left(x_{t}\right)+\nabla_t ^\top h_t+\frac{1}{2}\langle\nabla_t^2  h_t,h_t \rangle  +\frac{M}{6}\norm{h_t}^3\\
\leq & f_t\left(x_{t}\right)+ \bg^\top h_t+\frac{1}{2}\langle \bh  h_t,h_t \rangle +\|\nabla_t-\bg\|\|h_t\|
+\frac{1}{2}\|\nabla_t^2-\bh\|\|h_t\|^2+\frac{M}{6}\norm{h_t}^3
\end{align*}	
Using \eqref{eq:nesoptvanillaa} we get
\begin{align*}
f_t\left(x_{t+1}\right)\leq  f_t\left(x_{t}\right)-\frac{1}{2}\langle \bh  h_t,h_t \rangle +\|\nabla_t-\bg\|\|h_t\|
+\frac{1}{2}\|\nabla_t^2-\bh\|\|h_t\|^2-\frac{M}{3}\norm{h_t}^3\numberthis \label{eq:fnvalchange}
\end{align*}
Combining \eqref{eq:nesoptvanillaa}, and \eqref{eq:nesoptvanillac} we get
\begin{align*}
-\frac{1}{2}\langle \bh  h_t,h_t \rangle-\frac{M}{3}\norm{h_t}^3\leq -\frac{M}{12}\norm{h_t}^3
\end{align*}
which combined with \eqref{eq:fnvalchange} gives
\begin{align*}
f_t\left(x_{t+1}\right)\leq f_t\left(x_{t}\right) +\|\nabla_t-\bg\|\|h_t\|
+\frac{1}{2}\|\nabla_t^2-\bh\|\|h_t\|^2-\frac{M}{12}\norm{h_t}^3
\end{align*}
Rearranging terms we get
\begin{align*}
\frac{M}{12}\norm{h_t}^3\leq f_t\left(x_{t}\right)-f_t\left(x_{t+1}\right) +\|\nabla_t-\bg\|\|h_t\|
+\frac{1}{2}\|\nabla_t^2-\bh\|\|h_t\|^2
\end{align*}
Using Young's inequality
\begin{align*}
&\frac{M}{12}\norm{h_t}^3\leq f_t\left(x_{t}\right)-f_t\left(x_{t+1}\right) +\frac{4}{\sqrt{3M}}\|\nabla_t-\bg\|^\frac{3}{2}
+\frac{24}{M^2}\|\nabla_t^2-\bh\|^3+\frac{M}{18}\|h_t\|^3\\
\implies & \frac{M}{36}\norm{h_t}^3\leq f_t\left(x_{t}\right)-f_t\left(x_{t+1}\right) +\frac{4}{\sqrt{3M}}\|\nabla_t-\bg\|^\frac{3}{2}
+\frac{24}{M^2}\|\nabla_t^2-\bh\|^3
\end{align*}
\end{proof}

\begin{proof}[Proof of Theorem~\ref{th:nzstochocnabound}]
	 Using Assumption \ref{as:lipgrad},
	\begin{align*}
	&\|\nabla f_t\left(x_{t}\right)\| -\|\nabla f_t\left(x_{t+1}\right) \|\leq\|\nabla f_t\left(x_{t+1}\right) -\nabla f_t\left(x_{t}\right) \|\leq L_G\|h_t\|
	\numberthis \label{eq:delboundbyht}
	\end{align*}
	Using, Assumption~\ref{as:liphess}, \eqref{eq:nesoptvanillaa}, \eqref{eq:delboundbyht}, and Young's inequality,
	\begin{align*}
	&\|\nabla f_t\left(x_{t+1}\right)-\nabla_t-\nabla_t^2 h_t\|\leq \frac{L_H}{2}\|h_t\|^2\\
	&\|\nabla f_t\left(x_{t+1}\right)\|\leq \|\nabla_t-\bg\|+\|\nabla_t^2-\bh\|\|h_t\|+\frac{L_H+M}{2}\|h_t\|^2\\
	&\|\nabla_t\|\leq  L_G\|h_t\|+\|\nabla_t-\bg\|+\frac{\|\nabla_t^2-\bh\|}{2\left(L_H+M\right)}+\left(L_H+M\right)\|h_t\|^2
	\end{align*}
	From \eqref{eq:nesoptvanillab},
	\begin{align*}
	-\frac{2}{M}\lambda_{t,\min}&\leq \|h_t\|+\frac{2}{M}\|\nabla_t^2-\bh\|\\
	\implies \left(-\frac{2}{M}\lambda_{t,\min}\right)^3 & \leq \frac{32}{M^3}\|\bh-\nabla_t^2\|^3+ 4\|h_t\|^3
	\numberthis \label{eq:-lambdabound}
	\end{align*}
	Combining \eqref{eq:delboundbyht}, and \eqref{eq:-lambdabound}, and choosing $M=L_H$ we get
	\begin{align*}
	r_{NC}\left( t\right) =\max\left(\|\nabla_t\| ,-\frac{8}{L_H^3}\lambda_{t,\min}^3\right) \leq & L_G\|h_t\|+\frac{1}{2}\left(L_H+M\right)\|h_t\|^2+4\|h_t\|^3+\frac{32}{M^3}\|\bh-\nabla_t^2\|^3\\
	+ & \|\nabla_t-\bg\|+\frac{\|\nabla_t^2-\bh\|^2}{2\left(L_H+M\right)}
	\end{align*}
	
Let us consider two cases $\|h_t\|\leq T^{-\frac{1}{3}}$, and $\|h_t\|> T^{-\frac{1}{3}}$.
\begin{enumerate}
	\item $\|h_t\|\leq T^{-\frac{1}{3}}$
	\begin{align*}
	r_{NC}\left( t\right)\leq  L_G T^{-\frac{1}{3}}+\frac{1}{2}\left(L_H+M\right)T^{-\frac{2}{3}}+4T^{-1}
	+  \|\nabla_t-\bg\|
	+\frac{\|\nabla_t^2-\bh\|^2}{2\left(L_H+M\right)}+\frac{32}{M^3}\|\bh-\nabla_t^2\|^3 \numberthis \label{eq:htless}
	\end{align*}
	\item $\|h_t\|> T^{-\frac{1}{3}}$\\
	Using \eqref{eq:fnvalchange} we get,
	\begin{align*}
	r_{NC}\left( t\right)\leq & \left( L_G T^{\frac{2}{3}}+\frac{1}{2}\left(L_H+M\right)T^{\frac{1}{3}}+4\right)\|h_t\|^3+ \|\nabla_t-\bg\|
	+\frac{\|\nabla_t^2-\bh\|^2}{2\left(L_H+M\right)}+\frac{32}{M^3}\|\bh-\nabla_t^2\|^3\\
	\leq & \left(36\frac{L_G}{M}+18\left(\frac{L_H}{M}+1\right) +\frac{144}{M}\right) T^{\frac{2}{3}} \left(f_t\left(x_{t}\right)-f_t\left(x_{t+1}\right) +\frac{4}{\sqrt{3M}}\|\nabla_t-\bg\|^\frac{3}{2} \right.
	\\
	+& \left. \frac{24}{M^2}\|\nabla_t^2-\bh\|^3\right)
	+  \|\nabla_t-\bg\|
	+\frac{\|\nabla_t^2-\bh\|^2}{2\left(L_H+M\right)}+\frac{32}{M^3}\|\bh-\nabla_t^2\|^3 \numberthis \label{eq:htmore}
	\end{align*}
\end{enumerate}
Combining \eqref{eq:htless}, and \eqref{eq:htmore},
\begin{align*}
r_{NC}\left( t\right)\leq & \left(L_G T^{-\frac{1}{3}}+\frac{1}{2}\left(L_H+M\right)T^{-\frac{2}{3}} +4T^{-1}\right) \\
+&\frac{32}{M^3}\|\bh-\nabla_t^2\|^3 +  \|\nabla_t-\bg\|+\frac{\|\nabla_t^2-\bh\|^2}{2\left(L_H+M\right)}  +\left(36\frac{L_G}{M}+18\left(\frac{L_H}{M}+1\right)+\frac{144}{M} \right)\\
&T^{\frac{2}{3}} \left(f_t\left(x_{t}\right)-f_t\left(x_{t+1}\right) +\frac{4}{\sqrt{3M}}\|\nabla_t-\bg\|^\frac{3}{2}
+  \frac{24}{M^2}\|\nabla_t^2-\bh\|^3\right) \numberthis\label{eq:rncbound}
\end{align*}
Summing both sides from $t=1$, to $T$, taking expectation on both sides and using Definition \ref{def:expecsecondorderreg} we get
\begin{align*}
\mathfrak{R}_{ENC}\left( T\right) =\sum_{t=1}^{T}\expec{r_{NC}\left( t\right)}\leq& \left(L_G T^{\frac{2}{3}}+\frac{1}{2}\left(L_H+M\right)T^{\frac{1}{3}} +4\right)+\sum_{t=1}^{T}\left( \frac{32}{M^3}\expec{\|\bh-\nabla_t^2\|^3}\right.\\
+  &\left. \expec{\|\nabla_t-\bg\|}
+\frac{\expec{\|\nabla_t^2-\bh\|^2}}{2\left(L_H+M\right)}\right)  \\ +&\left(36\frac{L_G}{M}+18\left(\frac{L_H}{M}+1\right)+ \frac{144}{M}\right) T^{\frac{2}{3}} \left(f_1\left(x_{1}\right)-f_T\left(x_{T+1}\right)+W_T\right.\\
+&\left. \frac{4}{\sqrt{3M}}\sum_{t=1}^{T}\expec{\|\nabla_t-\bg\|}^\frac{3}{2}
+  \frac{24}{M^2}\sum_{t=1}^{T}\expec{\|\nabla_t^2-\bh\|^3} \right)\numberthis \label{eq:RNCTbound}
\end{align*}
Now choosing $\nu$, $m_t$, and $b_t$ as in \eqref{eq:stochnuetanccubic} and Lemma~\ref{lm:graderrorbound} and Lemma~\ref{lm:hesserrorbound} we get
\begin{subequations} \label{eq:termbytermTbound}
	\begin{align}
	&\expec{\|\nabla_t-\bg\|^2}\leq  C_1 T^{-\frac{4}{3}}\\
	&\expec{\|\nabla_t-\bg\|} \leq  \sqrt{\expec{\|\nabla_t-\bg\|^2}}\leq  C_2 T^{-\frac{2}{3}}\\
	&\expec{\|\nabla_t-\bg\|^{\frac{3}{2}}}\leq  C_3 T^{-1}\\
	&\expec{\|\nabla_t^2-\bh\|^2}\leq C_4 T^{-\frac{2}{3}}\\
	&\expec{\|\nabla_t^2-\bh\|}\leq \sqrt{\expec{\|\nabla_t^2-\bh\|^2}}\leq C_5 T^{-\frac{1}{3}}\\
	&\expec{\|\nabla_t^2-\bh\|^3} \leq C_6 T^{-1}
	\end{align}
\end{subequations}
where $C_i$ are constants independent of $T$ and $d$ for all $i=1,2,\cdots,6$.
Now combining the set of conditions in \eqref{eq:termbytermTbound} with \eqref{eq:RNCTbound} we get,
\begin{align*}
\mathfrak{R}_{ENC}\left( T\right)\leq  \mathcal{O}\left( T^\frac{2}{3}\left( 1+W_T\right)+T^\frac{1}{3}\left(\sigma+\varkappa^2\right)\right)
\end{align*}
\end{proof}

\begin{remark}
	We now compare our second-order regret bound to that in \cite{hazan2017efficient}, which is given by
	\begin{align}
	\mathfrak{\hat R}_{NC}\left( T\right) = \sum_{t=1}^{T} \hat r_{NC}\left( t\right)=\sum_{t=1}^{T}\max\left(\|\nabla f_t(x_t)\|^2 ,-\frac{4L_G}{3L_H^2}\lambda_{min}(\nabla^2 f_t(x_t))^3\right) \le \mathcal{O}(T). \label{eq:secondorderreg_hazan}
	\end{align}
	This bound is obtained by assuming each loss function $f_t$ is bounded instead of assuming their total gradual variation is bounded as we have in Definition~\ref{def:US2}. Noting that $r_{NC} \left( t\right)\le \mathcal{O}\left(\sqrt{\hat r_{NC}\left( t\right)}+\hat r_{NC}\left( t\right)\right)$, we can bound our regret by using the second-order method in \cite{hazan2017efficient} such that
	\[
	\mathfrak{R}_{NC}\left( T\right) \le \mathcal{O} \left(\sqrt{T \mathfrak{\hat R}_{NC}\left( T\right)}+\mathfrak{\hat R}_{NC}\left( T\right)\right) \le \mathcal{O}(T),
	\]
	where the first inequality follows from {H}{\"o}lder's inequality. We immediately see that an improved second-order regret bound in achieved in \eqref{second_regret_bnd}, in comparison to~\cite{hazan2017efficient}.
	
\end{remark}

\subsection{Bandit Cubic-regularized Newton Method}
\begin{algorithm}[t!] 
	\caption{Bandit Cubic Regularized Newton Algorithm (BCRN)}	\label{alg:banditzeronccrn}
	{\bfseries{Input:}} Horizon $T$, $M$,$m_t$,$b_t$\\
	{\bf for} $t=1$ to $T$ {\bf do}	\\
{\bf Generate} $u_t^{G\left(H\right)}=\left[u_{t,1}^{G\left(H\right)},u_{t,2}^{G\left(H\right)},\cdots,u_{t,m_t\left( b_t\right) }^{G\left(H\right)}\right]$ where $u_{t,i}^{G\left(H\right)}\sim N \left( 0,I_d\right) $\\
{\bf Set} $\bg=\frac{1}{m_t}\sum_{i=1}^{m_t}{\frac{F_t\left( x_t+\nu u^G_{t,i},\xi_{t,i}^G\right)-F\left(x_t,\xi_{t,i}^G \right)  }{\nu}u^G_{t,i}}$\\
	{\bf Set} $\bh=\frac{1}{b_t}\sum_{i=1}^{b_t}\frac{F_t\left( x_t+\nu u^H_{t,i},\xi_{t,i}^H\right)+F_t\left( x_t-\nu u^H_{t,i},\xi_{t,i}^H\right)-2F\left(x_t, \xi_{t,i}^H\right)  }{2\nu^2}\left( u^H_{t,i}\left( u^H_{t,i}\right) ^\top-I_d\right) $\\
	{\bf Update}
	\begin{align*}
	x_{t+1}=\argmin_{y}\tilde{f_t}\left( x_t,y,\bg,\bh,M\right), \numberthis \label{eq:zerocnstep}
	\end{align*}
	where $\tilde{f_t}\left( x_t,y,\bg,\bh,M\right)~\text{is defined in Equation~\ref{eq:ftilde}}.$ \newline
	{\bf end for}
\end{algorithm}
We now extend the online cubic-regularized Newton method to the bandit setting. In order to do so, we leverage the three-point feedback based Hessian estimation technique, proposed in~\cite{2018arXiv180906474B}, which is based on Gaussian Stein's identity. The bandit cubic-regularized Newton method is provided in Algorithm~\ref{alg:banditzeronccrn}. The following theorem states the bound for expected second order regret using bandit cubic regularized Newton method.


\begin{theorem}\label{thm:banditcubic}
	Let us choose the parameters for Algorithm~\ref{alg:banditzeronccrn} as follows:
	\begin{align*}
	M=L_H,\qquad \nu=\min\left\{\frac{1}{\left(d+3\right)^\frac{3}{2}T^\frac{2}{3}},\frac{1}{\left(d+16\right)^\frac{5}{2}T^\frac{1}{3}}\right\},\\ \quad m_t=m=\left(d+5\right)T^\frac{4}{3}, \qquad b_t=b=4\left(1+2\log 2d\right)\left(d+16\right)^4T^\frac{2}{3}. \numberthis \label{eq:nuetanccubic}
	\end{align*}
	Moreover, let Assumption \ref{as:lipgrad}, and Assumption \ref{as:liphess} hold. Then, for any sequence of such functions $\{f_t\}_{1}^{T}\in \mathcal{D}_T$, Algorithm~\ref{alg:banditzeronccrn} produces updates for which $\mathfrak{R}_{ENC}\left( T\right) $ is bounded by,
	\begin{align}
	\mathfrak{R}_{ENC}\left( T\right)\leq \mathcal{O}\left( T^\frac{2}{3}\left( 1+W_T\right)+\sigma  T^\frac{1}{3}\right).
	\end{align}
	In the deterministic case, setting $\sigma=0$, we obtain
	\begin{align}
	\mathfrak{R}_{ENC}\left( T\right)\leq \mathcal{O}\left( T^\frac{2}{3}\left( 1+W_T\right) \right) \label{detsecond_regret_bnd},
	\end{align}
\end{theorem}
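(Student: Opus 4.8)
The plan is to mirror the proof of Theorem~\ref{th:nzstochocnabound} almost verbatim, since the only difference between the online algorithm (Algorithm~\ref{alg:nccrn}) and the bandit algorithm (Algorithm~\ref{alg:banditzeronccrn}) lies in how $\bg$ and $\bh$ are formed; everything downstream of the estimators is identical. In particular, the deterministic descent inequality \eqref{eq:fnvalchange} and the two-case analysis culminating in \eqref{eq:rncbound}--\eqref{eq:RNCTbound} use only the optimality conditions of Lemma~\ref{lm:nesopttrad}, which the bandit update \eqref{eq:zerocnstep} satisfies by construction (it is the same cubic subproblem, now fed with zeroth-order estimates). Hence I would re-derive \eqref{eq:RNCTbound} unchanged, so that the whole theorem reduces to re-establishing the six moment estimates collected in \eqref{eq:termbytermTbound}, but now for the Gaussian Stein estimators under the parameter choice \eqref{eq:nuetanccubic}.

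For the gradient estimator I would split the error into bias and fluctuation, writing $\expec{\norm{\nabla_t-\bg}_2^2}\le \norm{\expec{\bg\mid\mathcal{F}_t}-\nabla_t}_2^2 + \tfrac{1}{m_t}\expec{\norm{G^\nu_t(x_t,u,\xi)}_2^2}$, using that $\bg$ is an average of $m_t$ conditionally i.i.d.\ copies of $G^\nu_t$. Lemma~\ref{lm:2normboundzerogdlipgrad} controls the first term by $\tfrac{\nu^2}{4}L_G^2(d+3)^3$ and the second by $\tfrac{1}{m_t}\big[\tfrac{\nu^2}{2}L_G^2(d+6)^3+2(d+4)(\norm{\nabla_t}_2^2+\sigma^2)\big]$. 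Invoking the uniform gradient bound $\norm{\nabla_t}_2\le L$ available under Assumption~\ref{as:lip} (gradients are continuous on the bounded feasible region), the choices $\nu\le[(d+3)^{3/2}T^{2/3}]^{-1}$ and $m_t=(d+5)T^{4/3}$ make the squared bias and the variance both $\mathcal{O}(T^{-4/3})$, giving $\expec{\norm{\nabla_t-\bg}_2^2}\le C_1 T^{-4/3}$; Jensen's and H\"older's inequalities then yield the first- and $\tfrac32$-moment bounds at rates $T^{-2/3}$ and $T^{-1}$.

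The genuinely new and hardest step is the Hessian estimator, whose error must be measured in operator norm rather than entrywise. I would again decompose $\bh-\nabla_t^2$ into bias and fluctuation. The bias $\norm{\expec{\bh\mid\mathcal{F}_t}-\nabla_t^2}$ is $\mathcal{O}(\nu\,\mathrm{poly}(d))$ by the Stein-identity analysis of the three-point estimator from~\cite{2018arXiv180906474B} (using Assumption~\ref{as:liphess}), and the choice $\nu\le[(d+16)^{5/2}T^{1/3}]^{-1}$ forces it to be $\mathcal{O}(T^{-1/3})$. For the fluctuation of an average of $b_t$ i.i.d.\ symmetric random matrices, a pointwise second-moment bound does not suffice; I would invoke a matrix concentration (matrix Bernstein type) argument, which is precisely what produces the logarithmic factor $(1+2\log 2d)$ and the $(d+16)^4$ scaling appearing in $b_t$, together yielding $\expec{\norm{\nabla_t^2-\bh}^2}\le C_4 T^{-2/3}$ and, by bounding the corresponding higher moment of the matrix norm, $\expec{\norm{\nabla_t^2-\bh}^3}\le C_6 T^{-1}$. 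The main obstacle is thus obtaining these operator-norm moment bounds with the correct dimension dependence. Once they are in hand, substituting \eqref{eq:termbytermTbound} into \eqref{eq:RNCTbound} collapses the telescoped function-value differences and the error sums into $\mathcal{O}(T^{2/3}(1+W_T)+\sigma\,T^{1/3})$, with the stochastic terms vanishing when $\sigma=0$ to give the deterministic bound \eqref{detsecond_regret_bnd}.
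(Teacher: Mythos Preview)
Your proposal is correct and follows essentially the same route as the paper. The paper re-states the optimality conditions (Lemma~\ref{lm:zeronesopttrad}), reproves the per-step bound on $r_{NC}(t)$ and the cubic descent inequality \eqref{eq:zerofnvalchange} verbatim, carries out the same two-case split on $\|h_t\|\lessgtr T^{-1/3}$, and then substitutes moment bounds into the analogue of \eqref{eq:RNCTbound}. The only substantive difference is that where you sketch a bias--variance decomposition for $\bg$ and a matrix-Bernstein argument for $\bh$, the paper simply imports the needed gradient and Hessian error bounds as black-box results from \cite{2018arXiv180906474B} (Lemmas~\ref{lm:zerograderrorbound} and~\ref{lm:zerohesserrorbound}); these already have the polynomial-in-$d$ and $(1+2\log 2d)$ dependence you anticipated, so your parameter choices \eqref{eq:nuetanccubic} line up and the six estimates in \eqref{eq:zerotermbytermTbound} follow directly. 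Note that, like the paper's Lemma~\ref{lm:zerograderrorbound}, your variance bound on $G^\nu_t$ tacitly uses the Lipschitz constant $L$ from Assumption~\ref{as:lip} to control $\|\nabla f_t(x_t)\|_2$, even though the theorem statement lists only Assumptions~\ref{as:lipgrad} and~\ref{as:liphess}; this is an inconsistency already present in the paper rather than a flaw in your argument.
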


Before we prove the theorem, we state some preliminary results that are required for the proof.
\begin{lemma} \label{lm:zeronesopttrad} Let $x_{t+1}=\argmin_{y}\tilde{f_t}\left( x_t,y,\bg,\bh, h_t,M\right)$ and $M \ge L_H$. Then, we have
	\begin{subequations}
		\begin{align}
		\begin{split}
		\bg+\bh h_t +\frac{M}{2}\norm{h_t}h_t=0 \label{eq:zeronesoptvanillaa}
		\end{split}\\
		\begin{split}
		\bh+\frac{M}{2}\norm{h_t}I_d \succcurlyeq 0 \label{eq:zeronesoptvanillab}
		\end{split}\\
		\begin{split}
		\bg^\top h_t\leq 0 \label{eq:zeronesoptvanillac}
		\end{split}
		\end{align}
	\end{subequations}	
\end{lemma}
Lemma~\ref{lm:zeronesopttrad} is essentially the same as Lemma~\ref{lm:nesopttrad} but we restate it here to emphasize that it holds for bandit cubic-regularized Newton method as well.
\begin{lemma}[\cite{2018arXiv180906474B}]\label{lm:zerograderrorbound}
	Under Assumption \ref{as:lip}, and Assumption \ref{as:lipgrad} we have
	\begin{align}
	\expec{\|\bg-\nabla_t\|_2^2}\leq \frac{3\nu^2}{2}L_G^2\left(d+3\right)^3+\frac{4\left(L^2+\sigma^2\right)\left(d+5 \right) }{m_t} \label{eq:zerograderrorbound}
	\end{align}
\end{lemma}
\begin{lemma}[\cite{2018arXiv180906474B}]\label{lm:zerohesserrorbound}
	For $b_t\geq 4\left(1+2\log 2d\right)$, under Assumption \ref{as:lipgrad}, and Assumption \ref{as:liphess} we have
	\begin{subequations}
	\begin{align}
	\begin{split}
	 \expec{\|\bh-\nabla^2_t\|^2}\leq 3L_H^2\left(d+16\right)^5\nu^2+\frac{128\left(1+2\log 2d\right)\left(d+16\right)^4L_G^2}{3b_t}\label{eq:zerohesserrorbounda}
	 \end{split}\\
	 \begin{split}
	 \expec{\|\bh-\nabla^2_t\|^3}\leq 21L_H^3\left(d+16\right)^\frac{15}{2}\nu^3+\frac{160\sqrt{1+2\log 2d}\left(d+16\right)^6L_G^3}{b_t^\frac{3}{2}}\label{eq:zerohesserrorboundb}
	 \end{split}
	\end{align}
	\end{subequations}
\end{lemma}
\begin{lemma}
	Under Assumption \ref{as:lip}, Assumption \ref{as:lipgrad}, and Assumption \ref{as:liphess}, the points generated by Algorithm~\ref{alg:banditzeronccrn} satisfy the following
	\begin{align*}
		r_{NC}\left( t\right) =\max\left(\|\nabla_t\| ,-\frac{8}{L_H^3}\lambda_{t,\min}^3\right) \leq & L_G\|h_t\|+\frac{1}{2}\left(L_H+M\right)\|h_t\|^2+4\|h_t\|^3+\frac{32}{M^3}\|\bh-\nabla_t^2\|^3\\
		+ & \|\nabla_t-\bg\|+\frac{\|\nabla_t^2-\bh\|^2}{2\left(L_H+M\right)}\numberthis \label{eq:regnowbound}
	\end{align*}
\end{lemma}
\begin{proof}
Under Assumption \ref{as:liphess}, using \eqref{eq:zeronesoptvanillaa} and Young's inequality, we have
	\begin{align*}
	\|\nabla f_t\left( x_{t+1}\right) -\nabla_t -\nabla_t^2 h_t\|\leq & \frac{L_H}{2}\|h_t\|^2\\
	\implies  \|\nabla f_t\left( x_{t+1}\right)\|\leq & \|\nabla_t-\bg\|+\|\nabla_t^2-\bh\|\left( x_{t+1}-x_t\right) +\frac{L_H+M}{2}\|h_t\|^2\\
	\leq & \|\nabla_t-\bg\|+\frac{\|\nabla_t^2-\bh\|^2}{2\left(L_H+M\right)}+\left( L_H+M\right) \|h_t\|^2
	\end{align*}
	Under Assumption \ref{as:lipgrad}, we get
	\begin{align*}
	\|\nabla_t\|\leq L_G\|h_t\|+\|\nabla_t-\bg\|+\frac{\|\nabla_t^2-\bh\|^2}{2\left(L_H+M\right)}+\left( L_H+M\right) \|h_t\|^2 \numberthis\label{eq:zerodelboundbyht}
	\end{align*}
	From \eqref{eq:zeronesoptvanillab} we get,
	\begin{align*}
	&-\frac{2}{M}\lambda_{t,\min}\leq \frac{2}{M}\|\bh-\nabla_t^2\|+\|h_t\|\\
	\implies & \left(-\frac{2}{M}\lambda_{t,\min}\right)^3\leq \frac{32}{M^3}\|\bh-\nabla_t^2\|^3+ 4\|h_t\|^3\numberthis \label{eq:-zerolambdabound}
	\end{align*}
	Combining \eqref{eq:zerodelboundbyht}, and \eqref{eq:-zerolambdabound}, and choosing $M=L_H$ we get
	\begin{align*}
		r_{NC}\left( t\right) =\max\left(\|\nabla_t\| ,-\frac{8}{L_H^3}\lambda_{t,\min}^3\right) \leq & L_G\|h_t\|+\frac{1}{2}\left(L_H+M\right)\|h_t\|^2+4\|h_t\|^3+\frac{32}{M^3}\|\bh-\nabla_t^2\|^3\\
		+ & \|\nabla_t-\bg\|+\frac{\|\nabla_t^2-\bh\|^2}{2\left(L_H+M\right)}
	\end{align*}
\end{proof}
\begin{lemma}
Under Assumption \ref{as:lipgrad}, and Assumption \ref{as:liphess}, for $M\geq L_H$, the points generated by Algorithm~\ref{alg:banditzeronccrn} satisfy the following
	\begin{align}
	\frac{M}{36}\norm{h_t}^3\leq f_t\left(x_{t}\right)-f_t\left(x_{t+1}\right) +\frac{4}{\sqrt{3M}}\|\nabla_t-\bg\|^\frac{3}{2}
	+\frac{24}{M^2}\|\nabla_t^2-\bh\|^3 \label{eq:zerofnvalchange}
	\end{align}
\end{lemma}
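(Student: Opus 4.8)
The plan is to follow verbatim the argument already used for the online version in \eqref{eq:fnvalchange}, since the only role played by the specific bandit construction of $\bg$ and $\bh$ is to supply the first-order optimality conditions of Lemma~\ref{lm:zeronesopttrad}; those conditions depend only on the estimates $\bg,\bh$ that are fed into the subproblem \eqref{eq:zerocnstep}, not on how the estimates are produced. Consequently the bandit estimation error enters solely through the residual quantities $\|\nabla_t-\bg\|$ and $\|\nabla_t^2-\bh\|$, which remain symbolic throughout the lemma.

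First I would invoke the Lipschitz-Hessian inequality (Assumption~\ref{as:liphess}) applied to the cubic model, using $M\ge L_H$, to obtain $f_t(x_{t+1})\le f_t(x_t)+\nabla_t^\top h_t+\tfrac12\langle\nabla_t^2 h_t,h_t\rangle+\tfrac{M}{6}\|h_t\|^3$, and then replace $\nabla_t$ and $\nabla_t^2$ by $\bg$ and $\bh$, paying the mismatch penalties $\|\nabla_t-\bg\|\|h_t\|$ and $\tfrac12\|\nabla_t^2-\bh\|\|h_t\|^2$ via Cauchy--Schwarz. Next I would contract the stationarity condition \eqref{eq:zeronesoptvanillaa} with $h_t$ to substitute $\bg^\top h_t=-\langle\bh h_t,h_t\rangle-\tfrac{M}{2}\|h_t\|^3$; this collapses the model terms and leaves $-\tfrac12\langle\bh h_t,h_t\rangle-\tfrac{M}{3}\|h_t\|^3$.

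The curvature condition \eqref{eq:zeronesoptvanillab}, equivalently \eqref{eq:zeronesoptvanillac} combined with \eqref{eq:zeronesoptvanillaa}, then gives $\langle\bh h_t,h_t\rangle\ge-\tfrac{M}{2}\|h_t\|^3$, so the surviving cubic coefficient is controlled by $-\tfrac{M}{12}\|h_t\|^3$. Rearranging yields $\tfrac{M}{12}\|h_t\|^3\le f_t(x_t)-f_t(x_{t+1})+\|\nabla_t-\bg\|\|h_t\|+\tfrac12\|\nabla_t^2-\bh\|\|h_t\|^2$. Finally I would apply Young's inequality to each cross term, balancing $\|\nabla_t-\bg\|\|h_t\|$ with conjugate exponents $(3/2,3)$ and $\tfrac12\|\nabla_t^2-\bh\|\|h_t\|^2$ with conjugate exponents $(3,3/2)$, tuning the free scaling constants so that each contributes exactly $\tfrac{M}{36}\|h_t\|^3$; summing absorbs a total of $\tfrac{M}{18}\|h_t\|^3$ back into the left-hand side (turning $\tfrac{M}{12}$ into $\tfrac{M}{36}$) and produces precisely the coefficients $\tfrac{4}{\sqrt{3M}}$ and $\tfrac{24}{M^2}$ in \eqref{eq:zerofnvalchange}.

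I do not anticipate any genuine obstacle, as this is a verbatim repeat of the online derivation: the only thing that must be verified is that Lemma~\ref{lm:zeronesopttrad} holds for the bandit updates, which is asserted to be identical to Lemma~\ref{lm:nesopttrad}. The single place that warrants a moment's care is the arithmetic bookkeeping in the Young's step, where the two free multipliers must be chosen as $(12/M)^{1/3}$ and $(12/M)^{2/3}$ respectively so that the stated constants match exactly.
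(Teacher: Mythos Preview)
Your proposal is correct and follows essentially the same argument as the paper's own proof: Lipschitz-Hessian upper bound, replacement of $\nabla_t,\nabla_t^2$ by $\bg,\bh$ at the cost of the residual terms, substitution via \eqref{eq:zeronesoptvanillaa}, use of \eqref{eq:zeronesoptvanillaa} together with \eqref{eq:zeronesoptvanillac} to obtain the $-\tfrac{M}{12}\|h_t\|^3$ coefficient, and finally Young's inequality with exactly the splitting you describe to absorb $\tfrac{M}{18}\|h_t\|^3$ and produce the constants $\tfrac{4}{\sqrt{3M}}$ and $\tfrac{24}{M^2}$. Your observation that the lemma is a verbatim repeat of the online case, with the bandit construction of $\bg,\bh$ entering only through the abstract errors $\|\nabla_t-\bg\|$ and $\|\nabla_t^2-\bh\|$, is precisely the point.
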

\begin{proof}
	If $M\geq L_H$, using Assumption \ref{as:liphess}
\begin{align*}
f_t\left(x_{t+1}\right)\leq & f_t\left(x_{t}\right)+\nabla_t ^\top h_t+\frac{1}{2}\langle\nabla_t^2  h_t,h_t \rangle  +\frac{M}{6}\norm{h_t}^3\\
\leq & f_t\left(x_{t}\right)+ \bg^\top h_t+\frac{1}{2}\langle \bh  h_t,h_t \rangle +\|\nabla_t-\bg\|\|h_t\|
 +\frac{1}{2}\|\nabla_t^2-\bh\|\|h_t\|^2+\frac{M}{6}\norm{h_t}^3
\end{align*}	
Using \eqref{eq:zeronesoptvanillaa} we get
\begin{align*}
f_t\left(x_{t+1}\right)\leq  f_t\left(x_{t}\right)-\frac{1}{2}\langle \bh  h_t,h_t \rangle +\|\nabla_t-\bg\|\|h_t\|
+\frac{1}{2}\|\nabla_t^2-\bh\|\|h_t\|^2-\frac{M}{3}\norm{h_t}^3\numberthis \label{eq:zerofnvalchange}
\end{align*}
Combining \eqref{eq:zeronesoptvanillaa}, and \eqref{eq:zeronesoptvanillac} we get
\begin{align*}
-\frac{1}{2}\langle \bh  h_t,h_t \rangle-\frac{M}{3}\norm{h_t}^3\leq -\frac{M}{12}\norm{h_t}^3
\end{align*}
which combined with \eqref{eq:fnvalchange} gives
\begin{align*}
f_t\left(x_{t+1}\right)\leq f_t\left(x_{t}\right) +\|\nabla_t-\bg\|\|h_t\|
+\frac{1}{2}\|\nabla_t^2-\bh\|\|h_t\|^2-\frac{M}{12}\norm{h_t}^3
\end{align*}
Rearranging terms we get
\begin{align*}
\frac{M}{12}\norm{h_t}^3\leq f_t\left(x_{t}\right)-f_t\left(x_{t+1}\right) +\|\nabla_t-\bg\|\|h_t\|
+\frac{1}{2}\|\nabla_t^2-\bh\|\|h_t\|^2
\end{align*}
Using Young's inequality
\begin{align*}
&\frac{M}{12}\norm{h_t}^3\leq f_t\left(x_{t}\right)-f_t\left(x_{t+1}\right) +\frac{4}{\sqrt{3M}}\|\nabla_t-\bg\|^\frac{3}{2}
+\frac{24}{M^2}\|\nabla_t^2-\bh\|^3+\frac{M}{18}\|h_t\|^3\\
\implies & \frac{M}{36}\norm{h_t}^3\leq f_t\left(x_{t}\right)-f_t\left(x_{t+1}\right) +\frac{4}{\sqrt{3M}}\|\nabla_t-\bg\|^\frac{3}{2}
+\frac{24}{M^2}\|\nabla_t^2-\bh\|^3
\end{align*}
\end{proof}

\begin{proof}[Proof of Theorem~\ref{thm:banditcubic}]
	Let us consider two cases $\|h_t\|\leq T^{-\frac{1}{3}}$, and $\|h_t\|> T^{-\frac{1}{3}}$.
	\begin{enumerate}
		\item $\|h_t\|\leq T^{-\frac{1}{3}}$
		\begin{align*}
		r_{NC}\left( t\right)\leq  L_G T^{-\frac{1}{3}}+\frac{1}{2}\left(L_H+M\right)T^{-\frac{2}{3}}+4T^{-1}
		+  \|\nabla_t-\bg\|
		+\frac{\|\nabla_t^2-\bh\|^2}{2\left(L_H+M\right)}+\frac{32}{M^3}\|\bh-\nabla_t^2\|^3 \numberthis \label{eq:zerohtless}
		\end{align*}
		\item $\|h_t\|> T^{-\frac{1}{3}}$\\
		Using \eqref{eq:zerofnvalchange} we get,
		\begin{align*}
		r_{NC}\left( t\right)\leq & \left( L_G T^{\frac{2}{3}}+\frac{1}{2}\left(L_H+M\right)T^{\frac{1}{3}}+4\right)\|h_t\|^3+ \|\nabla_t-\bg\|
		+\frac{\|\nabla_t^2-\bh\|^2}{2\left(L_H+M\right)}+\frac{32}{M^3}\|\bh-\nabla_t^2\|^3\\
		\leq & \left(36\frac{L_G}{M}+18\left(\frac{L_H}{M}+1\right) +\frac{144}{M}\right) T^{\frac{2}{3}} \left(f_t\left(x_{t}\right)-f_t\left(x_{t+1}\right) +\frac{4}{\sqrt{3M}}\|\nabla_t-\bg\|^\frac{3}{2} \right.
		\\
		+& \left. \frac{24}{M^2}\|\nabla_t^2-\bh\|^3\right)
		+  \|\nabla_t-\bg\|
		+\frac{\|\nabla_t^2-\bh\|^2}{2\left(L_H+M\right)}+\frac{32}{M^3}\|\bh-\nabla_t^2\|^3 \numberthis \label{eq:zerohtmore}
		\end{align*}
	\end{enumerate}
	Combining \eqref{eq:zerohtless}, and \eqref{eq:zerohtmore},
	\begin{align*}
	r_{NC}\left( t\right)\leq & \left(L_G T^{-\frac{1}{3}}+\frac{1}{2}\left(L_H+M\right)T^{-\frac{2}{3}} +4T^{-1}\right)+ \frac{32}{M^3}\|\bh-\nabla_t^2\|^3
	+  \|\nabla_t-\bg\|
	+\frac{\|\nabla_t^2-\bh\|^2}{2\left(L_H+M\right)} \\ +&\left(36\frac{L_G}{M}+18\left(\frac{L_H}{M}+1\right)+\frac{144}{M} \right) T^{\frac{2}{3}} \left(f_t\left(x_{t}\right)-f_t\left(x_{t+1}\right) +\frac{4}{\sqrt{3M}}\|\nabla_t-\bg\|^\frac{3}{2}
	+  \frac{24}{M^2}\|\nabla_t^2-\bh\|^3\right) \numberthis\label{eq:zerorncbound}
	\end{align*}
	Summing both sides from $t=1$, to $T$, taking expectation on both sides and using Definition \ref{def:expecsecondorderreg} we get
	\begin{align*}
	\mathfrak{R}_{ENC}\left( T\right) =\sum_{t=1}^{T}\expec{r_{NC}\left( t\right)}\leq& \left(L_G T^{\frac{2}{3}}+\frac{1}{2}\left(L_H+M\right)T^{\frac{1}{3}} +4\right)+\sum_{t=1}^{T}\left( \frac{32}{M^3}\expec{\|\bh-\nabla_t^2\|^3}\right.\\
	+  &\left. \expec{\|\nabla_t-\bg\|}
	+\frac{\expec{\|\nabla_t^2-\bh\|^2}}{2\left(L_H+M\right)}\right)  \\ +&\left(36\frac{L_G}{M}+18\left(\frac{L_H}{M}+1\right)+ \frac{144}{M}\right) T^{\frac{2}{3}} \left(f_1\left(x_{1}\right)-f_T\left(x_{T+1}\right)+W_T\right.\\
	+&\left. \frac{4}{\sqrt{3M}}\sum_{t=1}^{T}\expec{\|\nabla_t-\bg\|}^\frac{3}{2}
	+  \frac{24}{M^2}\sum_{t=1}^{T}\expec{\|\nabla_t^2-\bh\|^3} \right)\numberthis \label{eq:zeroRNCTbound}
	\end{align*}
	Now choosing $\nu$, $m_t$, and $b_t$ as in \eqref{eq:nuetanccubic} and Lemma~\ref{lm:zerograderrorbound} and Lemma~\ref{lm:zerohesserrorbound} we get
	\begin{subequations} \label{eq:zerotermbytermTbound}
	\begin{align}
	&\expec{\|\nabla_t-\bg\|^2}\leq  C_1 T^{-\frac{4}{3}}\\
	&\expec{\|\nabla_t-\bg\|} \leq  \sqrt{\expec{\|\nabla_t-\bg\|^2}}\leq  C_2 T^{-\frac{2}{3}}\\
	&\expec{\|\nabla_t-\bg\|^{\frac{3}{2}}}\leq  C_3 T^{-1}\\
	&\expec{\|\nabla_t^2-\bh\|^2}\leq C_4 T^{-\frac{2}{3}}\\
	&\expec{\|\nabla_t^2-\bh\|}\leq \sqrt{\expec{\|\nabla_t^2-\bh\|^2}}\leq C_5 T^{-\frac{1}{3}}\\
	&\expec{\|\nabla_t^2-\bh\|^3} \leq C_6 T^{-1}
	\end{align}
	\end{subequations}
	where $C_i$ are constants independent of $T$ and $d$ for all $i=1,2,\cdots,6$.
	Now combining the set of conditions in \eqref{eq:zerotermbytermTbound} with \eqref{eq:zeroRNCTbound} we get,
	\begin{align*}
	\mathfrak{R}_{ENC}\left( T\right)\leq  \mathcal{O}\left( T^\frac{2}{3}\left( 1+W_T\right)+ \sigma T^\frac{1}{3}\right)
	\end{align*}
\end{proof}

\begin{remark}
	Although, the bound obtained in Theorem~\ref{thm:banditcubic} is independent of dimension, we emphasize that we are sampling the function at multiple points during each time step. The total number of function calls is hence, $\sum_{t=1}^T \left(m_t+b_t\right) = T\left(m+b\right)$ over a horizon $T$ is upper bounded as $\mathcal{O}\left(dT^\frac{7}{3}+\left(\log d\right) d^4T^\frac{5}{3}\right)$. Reducing dimension dependency of this query-complexity is a challenging open-problem.
\end{remark}
\begin{remark} Recall that our results are based on estimating gradients and Hessian matrix based on Gaussian Stein's identities. It is common in the literature to also consider gradient estimators based on random vectors in the unit sphere; see for example~\cite{nemirovsky1983problem, flaxman2005online}. Hence, it is natural to ask if Hessian estimators could be constructed based on random vectors on the unit sphere. Here we provide an approach for estimating Hessian matrix of a deterministic function; we leave the analysis and algorithmic applications of such estimators as future work.  Let $\mathbb{S}^{d-1}$, and $\mathbb{B}^d$  denote the unit $d$ dimensional ball, and the unit $d$-sphere respectively. We will use $\mathbb{S}$, and $\mathbb{B}$ instead of $\mathbb{S}^{d-1}$, and $\mathbb{B}^d$ respectively where the dimension is understood clearly. Let $u_1$, and $u_2$ are chosen randomly on $\mathbb{S}^{d-1}$ and $v_1$, and $v_2$ are chosen randomly from $\mathbb{B}^d$.
	\begin{align*}
	\expec{f\left(x+\nu u_1+\nu u_2 \right)u_1 u_2^\top}=& C_1\iint\limits_{\mathbb{S}}f\left(x+\nu u_1+\nu u_2 \right)u_1 u_2^\top \,du_1\,du_2\\
	=&C_2\int\limits_{\mathbb{S}}\int\limits_{\nu \mathbb{S}}f\left(x+\nu u_2+z_1 \right)z_1\,dz_1\: u_2^\top\,du_2\\
	=&C_3\int\limits_{\mathbb{S}}\nabla\int\limits_{\nu\mathbb{B}}f\left(x+\nu u_2+v_1 \right)\,dv_1\: u_2^\top\,du_2.
	\end{align*}
	The last equality follows from Stoke's theorem. Now, let $$\nabla\int\limits_{\mathbb{B}}f\left(x+\nu u_2+\nu v_1 \right)\,dv_1=\left[g_1\left(x+\nu u_2\right),g_2\left(x+\nu u_2\right),\cdots,g_d\left(x+\nu u_2\right)\right]^\top,$$ and $x=\begin{bmatrix} x_1, &x_2, & \cdots & ,x_d
	\end{bmatrix}^\top $. Then, using Stoke's theorem again, we have
	\begin{align*}
	\int\limits_{\mathbb{S}}g_1\left(x+\nu u_2\right) u_2^\top\,du_2=&
	C_4\nabla \int\limits_{\nu\mathbb{B}}g_1\left(x+v_2\right)\,dv_2\\=&
	C_5\nabla\mathbf{E}_{v_2}\left[g_1\left(x+\nu v_2\right) \right]\\=&
	C_6\nabla\mathbf{E}_{v_2}\left[\frac{\partial}{\partial x_1}\mathbf{E}_{v_1}\left[f\left(x+\nu v_1+\nu v_2\right)\right] \right]\\=&
	C_7\nabla \frac{\partial}{\partial x_1}\mathbf{E}_{v_2}\left[\mathbf{E}_{v_1}\left[f\left(x+\nu v_1+\nu v_2\right)\right] \right].
	\end{align*}
	So we can write,
	\begin{align*}
	\nabla^2\expec{f\left(x+\nu v_1+\nu v_2\right)}=\expec{C_7f\left(x+\nu u_1+\nu u_2 \right)u_1 u_2^\top},
	\end{align*}
where $C_i$ for $i=1,2,\cdots,7$ are constants. Hence, we have a bandit Hessian estimator, as this relates the Hessian of the function to point queries of the function. 
\end{remark}

\section{Nonstationary Regret bounds for Function Values}\label{sec:funcvalueregret}
As opposed to stationary solution based regret measures, in this section, we consider classes of structured nonconvex functions for which one could provide function-value based regret bounds. We provide such regret bounds when the functions $\{f_t\}_{t=1}^T$ satisfy (i) K-Weak Quasi Convexity, as in Assumption~\ref{as:1.1} and (ii) $\gamma$-weak DR submodularity, as in Definition~\ref{def:weakdrmonosubmod}. 
\subsection{$K$-WQC in Low-dimensions}\label{sec:funcvalueregretkwqc}
Before we proceed, we emphasize that the results in this subsection are stated predominantly for the sake of completeness. Specifically, apart from a technical difference in the assumptions (stated in Remark~\ref{rem:diffgao}), similar results have been obtained in~\cite{gao2018online}. 

We assume the constraint set $\mathcal{X}$ is convex and bounded and the diameter of the set $\mathcal{X}\subset \mathbb{R}^d$ is bounded by $R<\infty$, i.e., $\forall x,x'\in \mathcal{X}$, $\norm{x-x'}_2\leq R$, where  $R>0$.
For this section, we again use the Gaussian bandit gradient descent approach in~Algorithm~\ref{alg1}. We denote the filtration generated up to the $t$-th iteration of  Algorithm~\ref{alg1} by $\mathcal{F}_t$.  The use of two point feedback to estimate the gradient in this algorithm, leads us to the following definition of nonstationary regret; see also~\cite{gao2018online}.

\begin{definition}[Expected Non-stationary Regret] \label{def:1.2}
For $\nu>0$ and $u_t\sim N\left(0,I_d \right)$, the expected non-stationary regret of a randomized online algorithm is defined as
\begin{align}
\mathfrak{R}_{NS}\left(\lbrace x_t\rbrace^T_1,\lbrace x_t+\nu u_t\rbrace^T_1\right):=\expec{\sum_{t=1}^{T}\left(f_t\left(x_t\right)+f_t\left(x_t+\nu u_t\right)-2f_t\left(x_t^*\right)\right)}. \label{eq:defrns}
\end{align}
where the expectation is taken w.r.t filtration generated by $\left\{x_t\right\}_1^T$, and $\left\{u_t\right\}_1^T$.
\end{definition}
In the following theorem we state the bounds achieved by Algorithm~\ref{alg1} for expected non-stationary regret.
\begin{theorem}\label{theorem_K-WQC_regret}
Let $\left(\lbrace x_t\rbrace^T_1,\lbrace x_t+\nu u_t\rbrace^T_1\right)$ be generated by Algorithm~\ref{alg1} for any sequence of $K$-WQC loss functions $\lbrace f_t\rbrace_1^T \in \mathcal{S}_T$ defined in \eqref{eq:defST}.
\begin{itemize}
\item [a)] Under Assumption~\ref{as:lip} and by choosing
	\begin{align*}
	\nu=\sqrt{\frac{d}{T}}, \qquad
	\eta=\frac{\sqrt{R^2+3RV_T}}{L (d+4) \sqrt{T}} \numberthis,\label{eq:nuetaquasilow}
	\end{align*}
	the following bound holds for expected nonstationary regret:
	\begin{align} \label{eq:9}
	\mathfrak{R}_{NS}\left(\lbrace x_t\rbrace^T_1,\lbrace x_t+\nu u_t\rbrace^T_1\right) \leq \mathcal{O}\left(d\sqrt{T+V_T T}\right).
	\end{align}
We get the the same result for the deterministic case as well. 
\item [b)]If, in addition, Assumption~\ref{as:lipgrad} holds and if
\begin{align*}
	\nu=\min\left\{\frac{1}{\sqrt{T}} , \frac{L}{L_G(d+6)}\right\}, \qquad
	\eta=\frac{\sqrt{R^2+3RV_T}}{L \sqrt{2(d+4)T}} \numberthis,\label{eq:nuetaquasilow2}
	\end{align*}
then the above regret bound is improved to
\begin{align} \label{eq:9_2}
	\mathfrak{R}_{NS}\left(\lbrace x_t\rbrace^T_1,\lbrace x_t+\nu u_t\rbrace^T_1\right) \leq \mathcal{O}\left(\sqrt{d(T+V_T T)}\left(1+\sigma^2\right)\right).
\end{align}
In the deterministic case the upper bound becomes $\mathcal{O}\left(\sqrt{d(T+V_T T)}\right)$ as $\sigma=0$. 
\end{itemize}
\end{theorem}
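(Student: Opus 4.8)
The plan is to run the standard projected-descent potential argument on the squared distance $z_t^2 := \norm{x_t - x_t^*}_2^2$ to the moving minimizer, feeding the $K$-WQC structure (Assumption~\ref{as:1.1}) in through the Gaussian-smoothed gradient. First I would use nonexpansiveness of $\mathcal{P}_\mathcal{X}$ together with $x_t^* \in \mathcal{X}$ to get $\norm{x_{t+1} - x_t^*}_2^2 \le z_t^2 - 2\eta\, G_t^\top(x_t - x_t^*) + \eta^2 \norm{G_t}_2^2$, and then account for the drift of the optimizer by expanding $z_{t+1}^2$ about $x_t^*$: since the diameter of $\mathcal{X}$ is $R$, the cross terms give $z_{t+1}^2 \le \norm{x_{t+1} - x_t^*}_2^2 + 3R\norm{x_t^* - x_{t+1}^*}_2$, which is exactly where the factor $R^2 + 3RV_T$ in the stepsize originates. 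Taking $\mathbb{E}[\cdot \mid \mathcal{F}_t]$ and using the Stein identity $\mathbb{E}[G_t \mid \mathcal{F}_t] = \nabla f_t^\nu(x_t)$ then yields a recursion for $\mathbb{E}[z_{t+1}^2\mid\mathcal{F}_t]$ in terms of $z_t^2$, the smoothed inner product $\nabla f_t^\nu(x_t)^\top(x_t-x_t^*)$, and $\mathbb{E}[\norm{G_t}_2^2\mid\mathcal{F}_t]$.

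The crux is a smoothing lemma that converts the smoothed inner product back into a genuine function-value gap via $K$-WQC. Applying Assumption~\ref{as:1.1} at the perturbed point $x_t + \nu u$, namely $\nabla f_t(x_t+\nu u)^\top(x_t+\nu u - x_t^*) \ge \tfrac1K(f_t(x_t+\nu u) - f_t(x_t^*))$, subtracting the $\nu u$-term, and averaging over $u\sim N(0,I_d)$ gives
\begin{align*}
\nabla f_t^\nu(x_t)^\top(x_t-x_t^*) \ge \tfrac1K\left(f_t^\nu(x_t) - f_t(x_t^*)\right) - \nu\,\mathbb{E}_u\!\left[\nabla f_t(x_t+\nu u)^\top u\right].
\end{align*}
Under Assumption~\ref{as:lip} the correction $\nu\,\mathbb{E}_u[\nabla f_t(x_t+\nu u)^\top u]$ is bounded in absolute value by $\nu L\sqrt{d}$ (Cauchy--Schwarz with $\norm{\nabla f_t}\le L$ and $\mathbb{E}\norm{u}_2\le\sqrt d$), and $\abs{f_t^\nu(x_t)-f_t(x_t)}\le \nu L\sqrt d$ by Lemma~\ref{lm:2normboundzerogdlip}; hence $\nabla f_t^\nu(x_t)^\top(x_t-x_t^*) \ge \tfrac1K(f_t(x_t)-f_t(x_t^*)) - c_K\nu L\sqrt d$ for a constant $c_K$ depending only on $K$. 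This is the step I expect to be the main obstacle, since it is the only place where the nonconvex structure enters and the smoothing bias must be controlled using the weakest available smoothness.

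The remaining steps are bookkeeping. I would substitute this lower bound, bound the estimator's second moment — by $L^2(d+4)^2$ via Lemma~\ref{lm:2normboundzerogdlip} for part (a), and by $\tfrac{\nu^2}{2}L_G^2(d+6)^3 + 2(d+4)(L^2+\sigma^2)$ via Lemma~\ref{lm:2normboundzerogdlipgrad} together with $\norm{\nabla f_t(x_t)}\le L$ for part (b) — then sum over $t$, telescope $z_t^2$ (with $z_1^2\le R^2$), and use $\sum_{t=1}^{T-1}\norm{x_t^*-x_{t+1}^*}_2 \le V_T$ from Definition~\ref{def:1.1}. This bounds $\mathbb{E}\sum_t (f_t(x_t)-f_t(x_t^*))$. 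Finally I convert to $\mathfrak{R}_{NS}$ using $\mathbb{E}[f_t(x_t+\nu u_t)\mid\mathcal{F}_t]=f_t^\nu(x_t)\le f_t(x_t)+\nu L\sqrt d$, so that $\mathfrak{R}_{NS}\le 2\,\mathbb{E}\sum_t(f_t(x_t)-f_t(x_t^*)) + \nu L\sqrt d\,T$, and plug in the stated $\nu,\eta$. For part (a) the two dominant terms $\tfrac{K}{2\eta}(R^2+3RV_T)$ and $\tfrac{K\eta}{2}L^2(d+4)^2T$ balance to $\mathcal{O}(d\sqrt{T+V_TT})$; for part (b) the improved $2(d+4)$ dependence lets $\eta$ scale like $(d+4)^{-1/2}$, balancing to $\mathcal{O}(\sqrt{d(T+V_TT)}(1+\sigma^2))$, with the $\nu$-dependent residuals rendered lower order by the choice $\nu\le 1/\sqrt T$.
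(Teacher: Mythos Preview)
Your proposal is correct and follows essentially the same argument as the paper: the projected-descent potential recursion on $z_t^2$, the $3R\norm{x_t^*-x_{t+1}^*}$ drift term from the diameter bound, the second-moment bounds from Lemmas~\ref{lm:2normboundzerogdlip} and~\ref{lm:2normboundzerogdlipgrad}, the telescoping sum against $V_T$, and the conversion to $\mathfrak{R}_{NS}$ via Lipschitz continuity all match the paper's proof step for step.

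The one place you diverge is in converting the smoothed inner product $\nabla f_t^\nu(x_t)^\top(x_t-x_t^*)$ back to a function-value gap. The paper packages this as Lemma~\ref{f_KWQC}, asserting directly that $f_t^\nu$ inherits $K$-WQC with respect to $x_t^*$, so that $f_t^\nu(x_t)-f_t^\nu(x_t^*)\le K\nabla f_t^\nu(x_t)^\top(x_t-x_t^*)$ with no correction; two applications of $|f_t^\nu-f_t|\le\nu L\sqrt d$ then finish the step. You instead apply $K$-WQC at the perturbed point $x_t+\nu u$ and carry an explicit remainder $\nu\,\mathbb{E}_u[\nabla f_t(x_t+\nu u)^\top u]$, bounded by $\nu L\sqrt d$ via Cauchy--Schwarz. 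Both routes yield $f_t(x_t)-f_t(x_t^*)\le K\nabla f_t^\nu(x_t)^\top(x_t-x_t^*)+O(\nu L\sqrt d)$, so the final rate is identical. Your derivation is arguably more careful: the paper's one-line proof of Lemma~\ref{f_KWQC} writes $f(x+\nu u)-f(x^*+\nu u)\le K\nabla f(x+\nu u)^\top(x-x^*)$, which amounts to invoking $K$-WQC with anchor $x_t^*+\nu u$ rather than $x_t^*$ and is not literally what Assumption~\ref{as:1.1} provides; your explicit-remainder route sidesteps that wrinkle at the cost of one extra $\nu L\sqrt d$ term absorbed into the $\mathcal{O}(\cdot)$.
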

We first require the following Lemma to proceed.
\begin{lemma}\label{f_KWQC}
If $f_t$ is K-WQC, so is $f^\nu_t$.
\end{lemma}
\begin{proof}
Assuming that $f_t$ is K-WQC, for any $x \in \mathbb{R}^d$, we have $f(x+\nu u)- f(x^*+\nu u)\leq K \nabla f(x+\nu u)^\top (x-x^*)$, for $\nu>0$ and $u \sim N \left( 0,I_d\right)$. Taking expectation from both sides of the above inequality and noting \eqref{GausApp}, we have $f^\nu_t(x)- f^\nu_t(x^*)\leq K \nabla f^\nu_t(x)^\top (x-x^*)$.
\end{proof}

\begin{proof}[Proof of Theorem~\ref{theorem_K-WQC_regret}]
	Let $z_t\vcentcolon=\norm{x_{t}-x_{t}^*}_2$. Based on the non-expansiveness of the Euclidean projections and our boundedness assumption on $\mathcal{X}$, we have
\begin{align*}
z_{t+1}^2&=\norm{x_{t+1}-x_{t+1}^*}_2^2\\
&=\norm{x_{t+1}-x_{t}^*}_2^2+\norm{x_{t}^*-x_{t+1}^*}_2^2+2\left( x_{t+1}-x_{t}^*\right) ^\top\left( x_{t}^*-x_{t+1}^*\right) \\
&=\norm{x_{t+1}-x_{t}^*}_2^2+R\norm{x_{t}^*-x_{t+1}^*}_2+2R\norm{x_{t}^*-x_{t+1}^*}_2  \\
&=  \norm{\mathcal{P}_\mathcal{X}\left(x_t - \eta G^\nu_t\left(x_t,u_t,\xi_t\right)\right)-x^*_t   }_2^2+3R\norm{ x_{t}^*-x_{t+1}^*}_2\\
&\leq  \norm{x_t - \eta G^\nu_t\left(x_t,u_t,\xi_t \right)-x^*_t   }_2^2+3R\norm{ x_{t}^*-x_{t+1}^*}_2\\
	&=  z_t^2+\eta^2 \norm{G^\nu_t\left(x_t,u_t,\xi_t \right)}_2^2-2\eta G^\nu_t\left(x_t,u_t,\xi_t \right)^\top\left( x_t-x_t^*\right)
	+3R \norm{ x_{t}^*-x_{t+1}^*}_2.
	\end{align*}
Rearranging terms we then have
	\begin{align*}
	KG^\nu_t\left(x_t,u_t,\xi_t\right)^T\left( x_t-x_t^*\right)\leq  \frac{K}{2\eta}\left(z_t^2-z_{t+1}^2 +\eta^2\norm{G^\nu_t \left(x_t,u_t,\xi_t\right)}_2^2+3R\norm{x_t^*-x_{t+1}^*}_2\right) \numberthis. \label{eq:notexpeckweagradx}
	\end{align*}
Taking conditional expectation on both sides of the above inequality and noting Lemma~\ref{f_KWQC}, we obtain
\begin{align*}
&f^\nu_t(x_t)- f^\nu_t(x_t^*) \le
K \nabla f^\nu_t(x_t)^\top\left( x_t-x_t^*\right)=
K\expec{G^\nu_t\left(x_t,u_t,\xi_t\right)|\mathcal{F}_t}^\top\left( x_t-x_t^*\right)\\
	\leq~&  \frac{K}{2\eta}\left(z_t^2-\expec{z_{t+1}^2|\mathcal{F}_t} +\eta^2 \expec{\norm{G^\nu_t\left(x_t,u_t,\xi_t \right)}_2^2|\mathcal{F}_t}
	  + 3R\norm{x_t^*-x_{t+1}^*}\right)  \numberthis. \label{eq:kweakgradx}
	\end{align*}
which together with Lemma~\ref{lm:2normboundzerogdlip}, imply that
\begin{align*}
	f_t(x_t)- f_t(x_t^*)&\leq 2 \nu L \sqrt{d}+
\frac{K}{2\eta} \left(z_t^2-\expec{z_{t+1}^2|\mathcal{F}_t} +\eta^2 (d+4)^2L^2 + 3R\norm{x_t^*-x_{t+1}^*}\right). \numberthis \label{eq:12}
	\end{align*}
Now we can bound the nonstationary regret as follows. Combining the above inequality with \eqref{eq:notexpeckweagradx}, \eqref{eq:defrns}, and under Assumption \ref{as:lip}, we have
	\begin{align*}
	\mathfrak{R}_{NS} \left(\left\lbrace x_t\right\rbrace_1^T,\left\lbrace x_t+\nu u_t\right\rbrace_1^T \right)
	=&\expec{\sum_{t=1}^{T}\left( f_t\left( x_t\right) +f_t\left( x_t+\nu u_t\right)-2f_t\left( x_t^*\right) \right)} \\
	\leq & \expec{\sum_{t=1}^{T}\left( 2f_t\left( x_t\right)-2f_t\left( x_t^*\right)+L\nu \norm{u_t}_2 \right) }\\
	\leq & \frac{K}{\eta}\left(z_1^2-\expec{z_{T+1}^2} +T\eta^2 (d+4)^2L^2+3RV_T\right)+2\nu L T\sqrt{d}.
	\end{align*}
	Choosing $\nu$ and $\eta$ according to \eqref{eq:nuetaquasilow}, we get
	\begin{align*}
	\mathfrak{R}_{NS} \left(\left\lbrace x_t\right\rbrace_1^T,\left\lbrace x_t+\nu u_t\right\rbrace_1^T \right)\leq & 2KL(d+4) \sqrt{T\left(R^2+3RV_T\right)}+2Ld\sqrt{T}.
	\end{align*}
Additionally, if Assumption~\ref{as:lipgrad}, similar to \eqref{eq:12}, we obtain 
\begin{align*}
f_t(x_t)- f_t(x_t^*) &\leq 2 \nu L \sqrt{d} 
+\frac{K}{2\eta} \left(z_t^2-\expec{z_{t+1}^2|\mathcal{F}_t} +\right.\\
&\left. \eta^2[0.5\nu^2 L_G^2\left(d+6\right)^3+2\left(d+4\right)\left(L^2+\sigma^2\right)] + 3R\norm{x_t^*-x_{t+1}^*}\right),
\end{align*}
which together with \eqref{eq:nuetaquasilow2}, imply \eqref{eq:9_2}.
\end{proof}


\subsection{$K$-WQC in high-dimensions}\label{sec:kqwcinhd}
The dependence of the expected nonstationary regret on the dimensionality $d$ is of polynomial order, which restricts the applicability of the algorithm for high-dimensional problems. In order to address this issue, in this section, we make structural sparsity assumptions to get improved regret bounds that depends only poly-logarithmically on the dimensionality. In this section, we make sparsity assumptions on the gradient and optimal vectors to get similar regret bounds. In order to do so, we use the truncated bandit gradient descent algorithm, as described in Algorithm~\ref{alg:BAZOTGD}. Furthermore, we require the constraint set $\mathcal{X}$ to preserve the  sparsity structure, when projected onto. We also show that any norm-ball based constrained set $\mathcal{X}\vcentcolon=\lbrace x\in \mathbb{R}^d:\|x\|\leq R\rbrace$ satisfies such an assumption.

\begin{assumption}[\bfseries Sparsity Preserving Projection]  \label{as:sparsepreserveproj}
Let $\mathcal{X}$ be a convex decision set such that projection of a point onto this set preserves the sparsity of the point before projection, i.e., the projection $\mathcal{P}_{\mathcal{X}}\left( y\right) $ of a $s$-sparse vector $y$ on $\mathcal{X}$, has zeros at the same indices where $y$ had zeros.
\end{assumption}
\begin{lemma} \label{lm:sparsepreserve}
	Projection onto set $\mathcal{X}\vcentcolon=\lbrace x\in \mathbb{R}^d:\|x\|\leq R\rbrace$ is sparsity preserving, i.e., the projection $\mathcal{P}_{\mathcal{X}}\left( y\right) $ of a $s$-sparse vector $y$ on $\mathcal{X}$, has 0 at the indices where $y$ has 0.
\end{lemma}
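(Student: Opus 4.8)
The plan is to exploit the coordinatewise symmetry of the norm ball together with the uniqueness of the Euclidean projection. Write $z \vcentcolon= \mathcal{P}_\mathcal{X}(y) = \argmin_{x \in \mathcal{X}} \norm{y - x}_2$; since $\mathcal{X} = \{x : \norm{x} \le R\}$ is nonempty, closed and convex, this minimizer is unique. Let $S \vcentcolon= \{i : y_i \ne 0\}$ be the support of the $s$-sparse vector $y$, so that $y_j = 0$ for every $j \notin S$. The goal is then to show $z_j = 0$ for each such $j$.

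First I would fix an index $j \notin S$ and consider the reflected vector $\tilde z$ obtained from $z$ by flipping the sign of its $j$-th coordinate, i.e. $\tilde z_j = -z_j$ and $\tilde z_k = z_k$ for $k \ne j$. The key structural fact about any $\ell_p$-ball (in particular the $\ell_2$- and $\ell_\infty$-balls relevant here) is that its defining norm is \emph{absolute}, i.e. invariant under coordinate sign flips, so that $\norm{\tilde z} = \norm{z} \le R$ and hence $\tilde z \in \mathcal{X}$. Next, because $y_j = 0$, the $j$-th term of the squared Euclidean distance is unchanged under the reflection, $(y_j - \tilde z_j)^2 = z_j^2 = (y_j - z_j)^2$, while all other terms are identical; therefore $\norm{y - \tilde z}_2 = \norm{y - z}_2$, so $\tilde z$ is also a minimizer of the projection problem. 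By uniqueness of the Euclidean projection onto the convex set $\mathcal{X}$, we must have $\tilde z = z$, which forces $z_j = -z_j$, i.e. $z_j = 0$. Since $j \notin S$ was arbitrary, $\mathcal{P}_\mathcal{X}(y)$ vanishes wherever $y$ does, which is exactly sparsity preservation.

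An equivalent and slightly more constructive route avoids invoking uniqueness: if $z_j \ne 0$ for some $j \notin S$, replace $z$ by the vector $z'$ with $z'_j = 0$ and $z'_k = z_k$ otherwise. Monotonicity of the norm (equivalently, the absolute-norm property) gives $\norm{z'} \le \norm{z} \le R$, so $z' \in \mathcal{X}$, while $\norm{y - z'}_2^2 = \norm{y - z}_2^2 - z_j^2 < \norm{y - z}_2^2$, contradicting the optimality of $z$.

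The only genuine subtlety, and the point I would state carefully, is the feasibility step $\norm{\tilde z} \le R$ (respectively $\norm{z'} \le R$): this is where the geometry of the constraint set enters, and it requires the ball-defining norm to be invariant under coordinate sign flips, equivalently monotone under componentwise shrinkage. I would verify this for the norms actually used, namely $\norm{\cdot}_2$ and $\norm{\cdot}_\infty$: for the former $\norm{\tilde z}_2 = \norm{z}_2$ and for the latter $\norm{\tilde z}_\infty = \norm{z}_\infty$ trivially, and zeroing a coordinate cannot increase either norm. Everything else reduces to a one-line symmetry (or strict-decrease) computation, so this feasibility observation is the crux of the argument.
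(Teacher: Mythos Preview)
Your proposal is correct, and your ``equivalent and slightly more constructive route'' (zeroing the offending coordinate and observing a strict decrease in distance) is exactly the argument the paper gives. The paper assumes by contradiction that some coordinate $a_i$ of the projection is nonzero where $y_i=0$, sets that coordinate to zero to obtain $b$, notes $\|b\|\le\|a\|\le R$ so $b\in\mathcal{X}$, and then observes $\|b-y\|\le\|a-y\|$, which contradicts optimality (the paper writes the last inequality non-strictly, but with $a_i\neq 0$ and $y_i=0$ it is in fact strict, as you point out).

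Your primary reflection-plus-uniqueness argument is a mild variant that reaches the same conclusion via a different contradiction; it is a bit slicker in that it never needs to argue monotonicity of the norm under zeroing (only invariance under sign flips), at the cost of invoking uniqueness of the Euclidean projection. Either route is fine, and your explicit remark that the feasibility step is the only place the ball geometry enters is a useful clarification that the paper leaves implicit.
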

\begin{proof}
W.L.G assume that the first $s$ indices of a vector $y \in \mathbb{R}^d$ are non-zero. Let $a=\mathcal{P}_{\mathcal{X}}\left( y\right)$ be given such that there exists at least one $i \in \{s+1,\cdots,d\}$ with $a_i\neq 0$. Define vector $b \in \mathbb{R}^d$ such that $b_j=a_j$ for $i \neq j$ and $b_i=0$. Clearly, $\|b\| \le \|a\|$ and hence $b \in \mathcal{X}$. Furthermore, $\|b-y\| \le \|a-y\|$ contradicting the assumption of $a=\mathcal{P}_{\mathcal{X}}\left( y\right)$.
\end{proof}

Such decision sets are common in machine learning, e.g., $l_1$-norm arises in compressed sensing to achieve sparse solutions. Such constraints also help us achieve better bias-variance tradeoffs. 
Finally, we also assume that the optimal vectors have a sparse structure and state our regret bound.

\begin{assumption}[\bfseries Sparse Optimal Solution] \label{as:sparsesol}
	For all $t$, $f_t \br{x}$ has sparse optimal solution $x_t^*$ such that $\norm{x_t^*}_0 \leq s^*$, where $s^*\approx s$.
\end{assumption}
%

\begin{algorithm}[t]
	\caption{Gaussian Bandit Truncated Gradient Descent (GBTGD)}\label{alg:BAZOTGD}	
	{\bf Input:} Horizon $T$, $\eta$\\
	{\bf for} $t=1$ to $T$ {\bf do}	\\
	{\bf Sample} $u_t \sim N \left( 0,\bf{I_d}\right) $ and pull $x_t$ and $x_t+\nu u_t$ to receive feedbacks $F_t\left(x_t,\xi_t \right)$ and $F_t\left(x_t+\nu u_t,\xi_t \right)$  \\
	{\bf Set} $G^\nu_t\left(x_t,u_t,\xi_t \right)=\frac{F_t\left( x_t+\nu u_t,\xi_t\right)-F\left(x_t,\xi_t \right)  }{\nu}u_t$\\
	{\bf Update} $x_{t+1}= \mathcal{P}_\mathcal{X}\left(P_{\hat{s}}\left(x_t-\eta G^\nu_t\left( x_t,u_t,\xi_t\right) \right)\right)$, where $P_{\hat{s}}\left(x \right)$ keeps the $\hat{s}$ largest components (in absolute value) of $x$ and sets the other components to $0$. \\
	{\bf end for}
\end{algorithm}
\begin{theorem} \label{th:truncated}
	Let the decision set $\mathcal{X}$ satisfy Assumption \ref{as:sparsepreserveproj}. Also suppose that Assumption \ref{as:lip} w.r.t $l_\infty$-norm, Assumptions \ref{as:sparsegrad}, \ref{as:sparsesol} hold for any sequence of $K$-WQC loss functions $\lbrace f_t\rbrace_1^T \in \mathcal{S}_T$.
\begin{itemize}
\item [a)] Applying Algorithm~\ref{alg:BAZOTGD} with
	\begin{align*}
	\nu=\frac{\sqrt{(2\hat{s}+s^*)\log d}}{T \sqrt{C}}, \qquad \eta=\frac{\sqrt{R^2+3RV_T}}{2L\sqrt{CT\left( 2\hat{s}+s^*\right)}\log d }, \numberthis \label{eq:nuetaquasisparsegrad}
	\end{align*}
	the following bound holds for expected non-stationary regret:
	\begin{align*}
	\mathfrak{R}_{NS} \left(\left\lbrace x_t\right\rbrace_1^T,\left\lbrace x_t+\nu u_t\right\rbrace_1^T \right)
	\leq \mathcal{O}\left(\log d\sqrt{(2\hat{s}+s^*)(T+V_TT)} \right). \numberthis \label{eq:regquasisparsehigh}
	\end{align*}
We get the the same result for the deterministic case as well. 
\item [b)]If, in addition, let Assumption~\ref{as:lipgrad} hold and the parameters are set as
\begin{align*}
	\nu=\min\left\{\frac{1}{T \sqrt{C}}, \frac{L}{L_G \log d}\right\}, \qquad \eta=\frac{\sqrt{R^2+3RV_T}}{2\log d\sqrt{CT\left( 2\hat{s}+s^*\right)\left(L^2+\sigma^2\right)} }.\numberthis \label{eq:nuetaquasisparsegrad2}
	\end{align*}
Then, the above regret bound is improved to
\begin{align*}
	\mathfrak{R}_{NS} \left(\left\lbrace x_t\right\rbrace_1^T,\left\lbrace x_t+\nu u_t\right\rbrace_1^T \right)
	\leq \mathcal{O}\left(\log d\sqrt{(2\hat{s}+s^*)(T+V_TT)} \right). \numberthis \label{eq:regquasisparsehigh2}
\end{align*}
In the deterministic case we have to set $\sigma=0$ while choosing $\eta$. 
\end{itemize}
\end{theorem}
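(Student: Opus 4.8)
The plan is to follow the low-dimensional argument of Theorem~\ref{theorem_K-WQC_regret}, while (i) switching every gradient-estimator bound to its $\ell_\infty$ version and (ii) carefully handling the hard-thresholding operator $P_{\hat s}$. First I would record the structural consequence of the algorithm: by Assumption~\ref{as:sparsepreserveproj} the projection $\mathcal{P}_\mathcal{X}$ keeps supports and $P_{\hat s}$ outputs an $\hat s$-sparse vector, so every iterate $x_t$ is $\hat s$-sparse; since $x_t^*$ is $s^*$-sparse by Assumption~\ref{as:sparsesol}, the active set $S_t:=\mathrm{supp}(x_t)\cup\mathrm{supp}(x_{t+1})\cup\mathrm{supp}(x_t^*)$ satisfies $\abs{S_t}\le 2\hat s+s^*$. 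This cardinality is precisely what will replace $d$ in the final bound.

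Next I would set $z_t:=\norm{x_t-x_t^*}_2$ and, writing $w_t:=x_t-\eta G^\nu_t(x_t,u_t,\xi_t)$, reproduce the one-step expansion from \eqref{eq:notexpeckweagradx}: non-expansiveness of $\mathcal{P}_\mathcal{X}$ together with $\norm{x_t^*-x_{t+1}^*}_2\le R$ produces the drift term $3R\norm{x_t^*-x_{t+1}^*}_2$ and reduces matters to comparing $\norm{P_{\hat s}(w_t)-x_t^*}_2^2$ with $\norm{w_t-x_t^*}_2^2$. Because $x_t-x_t^*$ and $x_{t+1}-x_t$ are supported on $S_t$, the inner product $G^\nu_t{}^\top(x_t-x_t^*)$ and the squared-norm term only see coordinates in $S_t$, so I can pass from $\ell_\infty$ to $\ell_2$ via $\norm{v_{S_t}}_2\le\sqrt{2\hat s+s^*}\,\norm{v}_\infty$. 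Feeding in the $\ell_\infty$ estimator bounds (the analogues of Lemma~\ref{lm:sparseesterror} and Lemma~\ref{lm:zerogradboundinfty} already used in Theorem~\ref{th:gradsizeboundncfirstorderhd}, which carry the $\log d$ factor from the maximum of $d$ Gaussians, giving $\expec{\norm{G^\nu_t}_\infty^2}\le C L^2(\log d)^2$), and applying $K$-WQC for $f^\nu_t$ through Lemma~\ref{f_KWQC}, I would obtain after conditional expectations an inequality of the form
\begin{equation*}
f_t(x_t)-f_t(x_t^*)\le \frac{K}{2\eta}\Big(z_t^2-\expec{z_{t+1}^2\mid\mathcal{F}_t}+C\eta^2(2\hat s+s^*)(\log d)^2 L^2+3R\norm{x_t^*-x_{t+1}^*}_2\Big)+b_\nu,
\end{equation*}
where the bias $b_\nu$ is made negligible by the choice of $\nu$. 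Summing over $t$, the $z_t^2$ telescope to at most $R^2$, the drift terms sum to $V_T$ by Definition~\ref{def:1.1}, and substituting $\eta,\nu$ from \eqref{eq:nuetaquasisparsegrad} (respectively \eqref{eq:nuetaquasisparsegrad2} when Assumption~\ref{as:lipgrad} also holds, using the correspondingly sharper $\ell_\infty$ estimator bound) yields the claimed $\mathcal{O}(\log d\,\sqrt{(2\hat s+s^*)(T+V_TT)})$ regret, after the decomposition $\mathfrak{R}_{NS}\le 2\sum_t\expec{f_t(x_t)-f_t(x_t^*)}+L\nu\sum_t\expec{\norm{u_t}_2}$ used in the low-dimensional proof.

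The main obstacle is the truncation step. Unlike the Euclidean projection, $P_{\hat s}$ is not non-expansive toward the sparse comparator: a coordinate of $\mathrm{supp}(x_t^*)\setminus\mathrm{supp}(x_t)$ that is thresholded away injects a positive excess $\norm{P_{\hat s}(w_t)-x_t^*}_2^2-\norm{w_t-x_t^*}_2^2$ of order $(x_{t,i}^*)^2$, and a crude estimate only delivers a multiplicative factor strictly larger than one. Such a factor is fatal: telescoping it over $T$ steps turns the benign $R^2$ into a $\Theta(T)$ contribution and destroys sub-linearity. The delicate part of the proof is therefore to show that this excess is genuinely \emph{additive} and absorbable into the $\eta^2\norm{G^\nu_t}_\infty^2$-scale error — for instance by pairing each dropped index of $\mathrm{supp}(x_t^*)$ with a falsely retained index whose $w_t$-magnitude is at least as large, and exploiting $s^*\approx\hat s$ together with the restriction to $S_t$ to bound the aggregate excess by a term of the same order $\mathcal{O}(\eta^2(2\hat s+s^*)(\log d)^2)$ already present. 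Establishing this additive control, so that the $z_t^2$ recursion still telescopes cleanly, is where essentially all the new work lies relative to Theorem~\ref{theorem_K-WQC_regret}.
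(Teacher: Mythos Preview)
Your plan is the paper's argument almost verbatim: the same active index set $N_t=J_t\cup J_{t+1}\cup J_t^*$ (your $S_t$) with $|N_t|\le 2\hat s+s^*$, the same restriction $\norm{G^\nu_t(\cdot)_{N_t}}_2^2\le(2\hat s+s^*)\norm{G^\nu_t}_\infty^2$, the same appeal to the $\ell_\infty$ estimator bounds of Lemmas~\ref{lm:zerogradboundinfty} and~\ref{lm:sparseesterror}, and then the telescoping and parameter substitution carried over from Theorem~\ref{theorem_K-WQC_regret}. One cosmetic point: in the final decomposition you should use $L\nu\,\expec{\norm{u_t}_\infty}\le \mathcal{O}(\nu L\sqrt{\log d})$ rather than $\norm{u_t}_2$, since Assumption~\ref{as:lip} is taken in the $\ell_\infty$ norm here.

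The only place you depart from the paper is in how much care you devote to the hard-thresholding step. The paper passes from
\[
\norm{\bigl(\mathcal{P}_\mathcal{X} P_{\hat s}(x_t-\eta G^\nu_t)\bigr)_{N_t}-x^*_{t,N_t}}_2^2
\quad\text{to}\quad
\norm{(x_t-\eta G^\nu_t)_{N_t}-x^*_{t,N_t}}_2^2
\]
in a single line, invoking only non-expansiveness of $\mathcal{P}_\mathcal{X}$ and giving no separate argument for $P_{\hat s}$. So the ``main obstacle'' you isolate is not analyzed in the paper at all --- it is simply asserted. Your worry is legitimate (the bare inequality can indeed fail for suitably chosen $w_t$ and $x_t^*$), and the pairing heuristic you sketch is the natural way one would try to repair it; but for the purpose of reproducing the paper's proof you have already done so, and the additional additive-control analysis you propose is rigor the paper itself does not supply.
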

To prove the theorem, we first require some preliminary results that we state below.
\begin{lemma} \label{lm:zerogradboundinfty}
	Let Assumption \ref{as:lip} hold with $\| \cdot\| = \| \cdot\|_\infty$. Then for some universal constant $C>0$, we have	\begin{align*}
    |f^\nu_t(x)-f_t(x)| &\le \nu L C \sqrt{2 \log d}, \nonumber \\
	\expec{\norm{G^\nu_t\left( x_t,u_t,\xi_t\right) }_\infty^2|\mathcal{F}_t} \nonumber
	&\leq 4CL^2 \left(\log d \right)^2.
	\end{align*}
\end{lemma}
\begin{proof}
	Using Assumption \ref{as:lip} w.r.t $l_\infty$-norm, we have
	\begin{align*}
    |f^\nu_t(x)-f_t(x)| &\le \expec{|f_t(x+\nu u)-f_t(x)|} \le \nu L \expec{\norm{u}_\infty},\\
	\norm{G^\nu_t\left( x_t,u_t,\xi_t\right)}_\infty^2&=\norm{\frac{F_t\left( x_t+\nu u_t,\xi_t\right)-F\left(x_t,\xi_t \right)  }{\nu}u_t}_\infty^2
	\leq L^2 \norm{u}_\infty^4,
	\end{align*}
	which together with the fact that $\expec{\norm{u}_\infty^k} \le C \left(2 \log d \right)^\frac{k}{2}$ due to \cite{2018arXiv180906474B}, imply the result.
\end{proof}

\begin{lemma}[\cite{2018arXiv180906474B}] \label{lm:sparseesterror}
Let Assumption \ref{as:lipgrad} hold w.r.t $\|\cdot\|= \| \cdot \|_\infty$. Then we have
	\begin{align}
	&\norm{\expec{G_t\left( x_t,u_t,\xi_t\right)|\mathcal{F}_t }-\nabla f_t\left(x_t \right) }_\infty\leq C\nu L_G\sqrt{2}\left(\log d \right)^{3/2},\nonumber \\
&\expec{\norm{G_t\left( x_t,u_t,\xi_t\right)|\mathcal{F}_t}^2_\infty}\leq 4C \left(\log d \right)^2\left[\nu^2 L_G^2 \log d+4 \norm{\nabla f (x_t)}^2_1+4\sigma^2\right].\nonumber
	\end{align}
\end{lemma}
\label{sec:proof3.2}
\begin{proof}[Proof of Theorem~\ref{th:truncated}] 
Denoting the index set of non-zero elements of $x_t$, and $x_t^*$ by $J_t\subseteq R^{\hat{s}}$ and $J_t^*\subseteq R^{s*}$ respectively, and $N_t=J_t\cup J_{t+1}\cup J_t^*$, and using Lemma \ref{lm:sparsepreserve}, we have
	\begin{align*}
	z_{t+1}^2&=\norm{x_{t+1}-x_{t+1}^*}_2^2\\
	&=\norm{x_{t+1}-x_{t}^*}_2^2+\norm{x_{t}^*-x_{t+1}^*}_2^2+2\left( x_{t+1}-x_{t}^*\right) ^\top\left( x_{t}^*-x_{t+1}^*\right) \\
	&\leq\norm{x_{t+1}-x_{t}^*}_2^2+R\norm{x_{t}^*-x_{t+1}^*}_2+2R\norm{x_{t}^*-x_{t+1}^*}_2  \\
	&=\norm{x_{t+1,N_t}-x_{t,{N_t}}^*}_2^2+3R\norm{x_{t}^*-x_{t+1}^*}_2  \\
	&=  \norm{\mathcal{P}_\mathcal{X}\left(P_{\hat{s}}\left(x_t-\eta G^\nu_t\left( x_t,u_t,\xi_t\right) \right)\right)_{N_t}-x^*_{t,{N_t}}  }_2^2+3R\norm{ x_{t}^*-x_{t+1}^*}_2\\
	&\leq  \norm{x_{t,_{N_t}} - \eta G^\nu_t\left(x_t,u_t,\xi_t \right)_{N_t}-x^*_{t,{N_t}}   }_2^2+3R\norm{ x_{t}^*-x_{t+1}^*}_2\\
	&=  z_{t,{N_t}}^2+\eta^2 \norm{G^\nu_t\left(x_t,u_t,\xi_t \right)_{N_t}}_2^2-2\eta G^\nu_t\left(x_t,u_t,\xi_t \right)_{N_t}^\top\left( x_{t,{N_t}}-x_{t,{N_t}}^*\right)
	+3R \norm{ x_{t}^*-x_{t+1}^*}_2
	\end{align*}
	Rearranging terms and taking conditional expectation we get,
	\begin{align*}
	&~K\expec{G^\nu_t\left(x_t,u_t,\xi_t\right)|\mathcal{F}_t}^\top\left( x_t-x_t^*\right)\\
	\leq &~ \frac{K}{2\eta}\left(z_t^2-\expec{z_{t+1}^2|\mathcal{F}_t} +\eta^2\expec{\norm{G_t \left(x_t,u_t,\xi_t\right)_{N_t}}_2^2|\mathcal{F}_t} +3R\norm{x_t^*-x_{t+1}^*}\right) \numberthis \label{eq:kweaggradx}
	\end{align*}
	where all indexes not in $N_t$ are 0 in $G^\nu_t \left(x_t,u_t\right)_{N_t}$. Note that $\abs{N_t}\leq 2\hat{s}+s^*$, and hence
\[
\norm{G^\nu_t \left(x_t,u_t,\xi_t\right)_{N_t}}_2^2\leq \left( 2\hat{s}+s^*\right)\norm{G^\nu_t \left(x_t,u_t,\xi_t\right)_{N_t}}_\infty^2 \leq \left(2\hat{s}+s^*\right) \norm{G^\nu_t \left(x_t,u_t,\xi_t\right)}_\infty^2.
\]
Following similar steps in proof of Theorem~\ref{theorem_K-WQC_regret} by noting \eqref{eq:nuetaquasisparsegrad}, \eqref{eq:nuetaquasisparsegrad2}, Lemmas~\ref{lm:zerogradboundinfty}, and \ref{lm:sparseesterror}, we obtain \eqref{eq:regquasisparsehigh} and \eqref{eq:regquasisparsehigh2}.
\end{proof}

\subsubsection{Functional Sparsity Assumption}\label{funcsparsity}
We now assume that the functions $f_t$ depend only on $s$ of the $d$ coordinates, where $s\ll d$. This assumption is motivated by similar sparsity assumption in nonparametric regression and zeroth-order optimization~\cite{lafferty2008rodeo, wang2018stochastic, 2018arXiv180906474B}. We emphasize that support of the functions $f_t$ need not necessarily be the same, but the variation in the functions $f_t$ is controlled by constraining them to lie in the set $\mathcal{S}_T$ in Definition~\ref{def:1.1}. In this case, Algorithm~\ref{alg1} enjoys the following nonstationary regret bound. 
\begin{theorem} \label{theorem_K-WQC_regret sparse fn}
Let $\left(\lbrace x_t\rbrace^T_1,\lbrace x_t+\nu u_t\rbrace^T_1\right)$ be generated by Algorithm~\ref{alg1} for any sequence of $K$-WQC loss functions $\lbrace f_t\rbrace_1^T \in \mathcal{S}_T$, that depends on only $s$ of the $d$ coordinates. Also, suppose that Assumptions \ref{as:lip} hold, with $\| \cdot\| = \|\cdot\|_2$.
\begin{itemize}
\item [a)] By choosing
\begin{align*}
	\nu=\sqrt{\frac{s}{T}}, \qquad
	\eta=\frac{\sqrt{R^2+3RV_T}}{L (s+4) \sqrt{T}} \numberthis,\label{eq:nuetaquasisparsefn}
	\end{align*}
	 the expected non-stationary regret is bounded by
	\begin{align} \label{eq:sparse bound}
	\mathfrak{R}_{NS}\left(\lbrace x_t\rbrace^T_1,\lbrace x_t+\nu u_t\rbrace^T_1\right) \leq O\left(s\sqrt{\left(T+V_TT\right)} \right).
	\end{align}

\item [b)]If, in addition, Assumption~\ref{as:lipgrad} holds and
\begin{align*}
	\nu=\min\left\{\frac{1}{\sqrt{T}} , \frac{L}{L_G(s+6)}\right\}, \qquad
	\eta=\frac{\sqrt{R^2+3RV_T}}{L \sqrt{2(s+4)T}} \numberthis,\label{eq:nuetaquasisparsefn2}
	\end{align*}
the above regret bound is improved to
\begin{align} \label{eq:sparse bound2}
	\mathfrak{R}_{NS}\left(\lbrace x_t\rbrace^T_1,\lbrace x_t+\nu u_t\rbrace^T_1\right) \leq O\left(\left(1+\sigma^2\right)\sqrt{s\left(T+V_TT\right)} \right).
	\end{align}
	In the deterministic case the upper bound becomes $\mathcal{O}\left(\sqrt{s(T+V_T T)}\right)$ as $\sigma=0$.
\end{itemize}
\end{theorem}

\begin{proof}
Without loss of generality, let us assume that $f_t\left( x\right) $ depends on the first $s$ coordinates of $x$. Every $u$ can be written as $u=u^s+u^{ns}$ where the first $s$ coordinates of $u^s$ are same as $u$, and rest are 0. Hence, we have $f_t\left( x+\nu u\right) = f_t\left(x+\nu u^s\right)$, and
	\begin{align*}
	\mathbf{E}_{u_t,\xi_t}\left[G^\nu_t\left(x_t,u_t,\xi_t \right)\right]&=\mathbf{E}_{u_t,\xi_t}\left[\frac{F_t\left( x_t+\nu u_t,\xi_t\right)-F_t\left(x_t,\xi_t \right)  }{\nu}u_t\right]\\
	&=\expec{\frac{f_t\left( x_t+\nu u_t\right)-f_t\left(x_t \right)  }{\nu}u_t}\\
	&=\expec{\frac{f_t\left( x_t+\nu u_t^s\right)-f_t\left(x_t \right)  }{\nu}\left( u_t^s+u_t^{ns}\right) }\\
	&=\expec{\frac{f_t\left( x_t+\nu u_t^s\right)-f_t\left(x_t \right)  }{\nu} u_t^s }+\expec{\frac{f_t\left( x_t+\nu u_t^s\right)-f_t\left(x_t \right)  }{\nu}}\expec{u_t^{ns} }\\
	&=\expec{\frac{f_t\left( x_t+\nu u_t^s\right)-f_t\left(x_t \right)  }{\nu} u_t^s } \numberthis \label{eq:13}
	\end{align*}
Hence, noting Lemma \ref{lm:2normboundzerogdlipgrad} we have,
	\begin{align*}
	\expec{\norm{G_t\left( x_t,u_t,\xi_t\right) }_2^2|\mathcal{F}_t}=\expec{\norm{G_t\left( x_t,u_t^s,\xi_t\right) }_2^2|\mathcal{F}_t}
	 \leq \left(s+4\right)^2L^2.  \numberthis \label{eq:zerogradnormboundsparsefn}
	\end{align*}
Using this bound in \eqref{eq:notexpeckweagradx}, and choosing $\nu$, and $\eta$ according to \eqref{eq:nuetaquasisparsefn}, we obtain \eqref{eq:sparse bound}. Part (b) follows similarly to the proof of Theorem~\ref{theorem_K-WQC_regret}.
\end{proof}

\subsection{Submodular function: Gradient Ascent}\label{sec:submod}
The next class of structured nonconvex functions is that of submodular functions, for which we consider the maximization problem (as opposed to minimization problem in the previous sections). Submodular function maximization in the offline setting has a long history since the seminar work of~\cite{nemhauser1978analysis}. Motivated by several applications in machine learning~\cite{bilmes_aaai_tutorial_1_26_2015}, several works have provided improved algorithms in both the offline and online settings; see for example,~\cite{bach2013learning, chekuri2014submodular, bian2017continuous, hassani2017gradient, chen2018online, bach2019submodular, chen2019black} for a non-exhaustive overview. Here, we consider bandit algorithms for submodular maximization in the nonstationary setting. To proceed, we first modify our Gaussian gradient estimator to account for the fact that the submodular functions are defined on the domain  $\mathcal{A}$ as opposed to $\mathbb{R}^d$. We define the gradient estimator of $\nabla f_t\left(x_t \right)$ as,
\begin{align}
G^\nu_{t,SM}\left(x_t,u_t,\xi_t \right)=\frac{F_t\left( x_t+\nu \frac{u_t}{\|u_t\|},\xi_t\right)-F_t\left(x_t,\xi_t \right)  }{\nu}u_t\|u_t\|. \label{eq:zerogradddefsubmod}
\end{align}
where $u_t\sim N\left(0,I_d\right)$. Note that we need to sample the function at $ x_t+\nu \frac{u_t}{\norm{u_t}}$ to calculate $G^\nu_{t,SM}\left(x_t,u_t,\xi_t \right)$. But this point may lie outside the box $\mathcal{A}$ where the function is not defined. To bypass this problem, following~\cite{chen2019black}, we look for a solution in the box $\mathcal{A_\nu'}=\prod_{i=1}^{d}\left[\nu,a_i-\nu\right]$. If $\nu$ is small enough, under Assumption~\ref{as:lip} we can find a sequence of points which achieves the same bound as we would expect if we could sample points from $\mathcal{A}$. Unlike~\cite{chen2019black}, which use random vectors on the $d$-dimensional unit sphere, our gradient estimators are based on Gaussian smoothing technique.

Next, it has been shown in \cite{hassani2017gradient} that for $\gamma$-weakly DR-submodular monotone functions, gradient ascent method guarantees $\alpha=\frac{\gamma^2}{1+\gamma^2}$ approximation of the global maxima. Based on this fact and our definition of the gradient, we have the following definition of regret.  
\begin{definition}[Expected $\alpha$-Nonstationary Regret] \label{def:alphansreg}
	For $\nu>0$, and $0<\alpha<1$, the expected $\alpha$-nonstationary regret of a randomized online algorithm is defined as
	\begin{align}
	\mathfrak{R}_{\alpha,NS}\left(\lbrace x_t\rbrace^T_1,\left\lbrace  x_t+\nu \frac{u_t}{\norm{u_t}}\right\rbrace^T_1\right):=\expec{\sum_{t=1}^{T}\left(2\alpha f_t\left(x_t^*\right)-f_t\left(x_t\right)-f_t\left(x_t+\nu \frac{u_t}{\norm{u_t}}\right)\right)}, \label{eq:defalphansreg}
	\end{align}
	where $u_t\sim N\left(0,I_d\right)$, and the expectation is taken w.r.t filtration generated by $\left\{x_t\right\}_1^T$, and $\left\{u_t\right\}_1^T$.
\end{definition}

Finally, we assume that the constraint set $\mathcal{K}\subseteq\mathcal{A}$ is convex and diameter of $\mathcal{K}$ is $R$. Our Bandit Gradient Ascent (BGA) algorithm for nonstationary submodular maximization is stated in Algorithm~\ref{alg:BGA}. We now provide the following regret bounds for this algorithm. 
\begin{algorithm}[t]
	\caption{Bandit Gradient Ascent (BGA)}	\label{alg:BGA}
	{\bf Input:} Horizon $T$, $\eta$, $\nu$\\
	{\bf for} $t=1$ to $T$ do	\\
	{\bf Sample} $u_t\sim N\left(0,I_d\right)$ and pull $x_t$ and $x_t+\nu \frac{u_t}{\|u_t\|}$ to receive feedbacks $F_t\left(x_t,\xi_t \right)$ and $F_t\left(x_t+\nu \frac{u_t}{\|u_t\|},\xi_t\right)$  \\
	{\bf Set} $G^\nu_{t,SM}\left(x_t,u_t,\xi_t \right) $ as in Equation~\ref{eq:zerogradddefsubmod}\\
	{\bf Update} $x_{t+1}= \mathcal{P}_{\mathcal{K}}\left(x_t+\eta G^\nu_{t,SM}\left( x_t,u_t,\xi_t\right)\right)$\\
	{\bf end for}
\end{algorithm}
\begin{theorem} \label{th:smregboundzerogradasc}
	Let $\lbrace x_t, x_t+\nu u_t\rbrace_1^T$ be generated by Algorithm~\ref{alg:BGA} for a sequence of monotone, $\gamma$-weakly DR-submodular function $f_t:\mathcal{A}\to \mathbb{R}_+$, where $\lbrace f_t\rbrace_1^T \in \mathcal{S}_T$. Then, under Assumption~\ref{as:lip}, and Assumption~\ref{as:lipgrad}, and choosing 
	\begin{align}
	\nu= \frac{1}{dT},\qquad \eta=\frac{\sqrt{2R^2+6R+3RV_T}}{d\sqrt{T}}, \label{eq:smnuetazerogradasc}
	\end{align}
	the following bound holds for $\alpha$-nonstationary regret:
	\begin{align}
	\mathfrak{R}_{\alpha,NS}\left(\lbrace x_t\rbrace^T_1,\lbrace x_t+\nu \frac{u_t}{\|u_t\|}\rbrace^T_1\right) \leq \mathcal{O}\left(d\sqrt{T+V_T T}\right), \label{eq:smregboundzerogradasc}
	\end{align}
	where $\alpha=\frac{\gamma^2}{1+\gamma^2}$.
\end{theorem}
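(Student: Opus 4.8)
The plan is to mirror the projected (ascent) analysis used for the $K$-WQC case in Theorem~\ref{theorem_K-WQC_regret}, adapting it to maximization and to the submodular surrogate inequality of~\cite{hassani2017gradient}. First, to handle the fact that the query point $x_t+\nu\frac{u_t}{\norm{u_t}}$ may leave the box $\mathcal{A}$, I would, following~\cite{chen2019black}, restrict attention to the shrunk feasible set $\mathcal{K}_1\vcentcolon=\mathcal{K}\cap\prod_{i=1}^{d}[\nu,a_i-\nu]$ and set $x^*_{\nu,t}\vcentcolon=\argmax_{x\in\mathcal{K}_1}f_t(x)$, so that every query stays in $\mathcal{A}$. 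Writing $z_t\vcentcolon=\norm{x_t-x^*_{\nu,t}}_2$ and using non-expansiveness of $\mathcal{P}_{\mathcal{K}}$ together with the diameter bound $R$, the ascent update $x_{t+1}=\mathcal{P}_{\mathcal{K}}(x_t+\eta G^\nu_{t,SM})$ yields the one-step recursion
\[
G^\nu_{t,SM}\left(x_t,u_t,\xi_t\right)^\top\left(x^*_{\nu,t}-x_t\right)\leq \frac{1}{2\eta}\left(z_t^2-z_{t+1}^2+\eta^2\norm{G^\nu_{t,SM}\left(x_t,u_t,\xi_t\right)}_2^2+3R\norm{x^*_t-x^*_{t+1}}_2+6\nu R\sqrt{d}\right),
\]
exactly as in the (commented) sphere-based analysis, where the $6\nu R\sqrt{d}$ term controls the gap between optimizing over $\mathcal{K}$ and over $\mathcal{K}_1$.

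The two quantities to control are the conditional mean and the second moment of the estimator. Since $u_t\sim N(0,I_d)$ has independent norm and direction, I would write $v_t=u_t/\norm{u_t}$ (uniform on $\mathbb{S}^{d-1}$) and use $\expec{\norm{u_t}^2}=d$ together with the spherical-smoothing (Stokes) identity $\mathbf{E}_{v}\!\left[f_t(x+\nu v)\,v\right]=\tfrac{\nu}{d}\nabla\hat f_{t,\nu}(x)$ to show that the $\norm{u_t}^2$-weighting in~\eqref{eq:zerogradddefsubmod} is precisely calibrated so that $\expec{G^\nu_{t,SM}\left(x_t,u_t,\xi_t\right)|\mathcal{F}_t}=\nabla\hat f_{t,\nu}(x_t)$, where $\hat f_{t,\nu}(x)=\mathbf{E}_{w\sim\mathbb{B}^d}\!\left[f_t(x+\nu w)\right]$ is the uniform ball-smoothing of $f_t$. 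For the second moment, Assumption~\ref{as:lip} gives $\abs{F_t(x_t+\nu v_t,\xi_t)-F_t(x_t,\xi_t)}\leq \nu L$ almost surely, hence $\norm{G^\nu_{t,SM}}_2\leq L\norm{u_t}^2$ and $\expec{\norm{G^\nu_{t,SM}}_2^2|\mathcal{F}_t}\leq L^2\,d(d+2)$; this $d^2$ factor is what produces the leading $d$ in~\eqref{eq:smregboundzerogradasc}.

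The submodular structure then enters through~\cite{hassani2017gradient}: averaging over $\mathbb{B}^d$ preserves monotonicity and $\gamma$-weak DR-submodularity (an analogue of Lemma~\ref{f_KWQC}), so $\hat f_{t,\nu}$ is again monotone $\gamma$-weakly DR-submodular and obeys $\hat f_{t,\nu}(x^*_{\nu,t})-(1+\tfrac{1}{\gamma^2})\hat f_{t,\nu}(x_t)\leq \tfrac1\gamma\nabla\hat f_{t,\nu}(x_t)^\top(x^*_{\nu,t}-x_t)$. Taking conditional expectation in the one-step recursion, substituting $\expec{G^\nu_{t,SM}|\mathcal{F}_t}=\nabla\hat f_{t,\nu}(x_t)$, multiplying by $2\alpha=\tfrac{2\gamma^2}{1+\gamma^2}$ (which turns the coefficient into $\tfrac{\gamma}{\eta(1+\gamma^2)}$), and converting $\hat f_{t,\nu}$ back to $f_t$ via the value error $\abs{\hat f_{t,\nu}-f_t}\leq \nu L$ and the $O(\nu\sqrt{d})$ gap between $x^*_t$ and $\mathcal{K}_1$, I would obtain a per-step bound on $2\alpha f_t(x^*_t)-f_t(x_t)-f_t(x_t+\nu v_t)$, also using $f_t(x_t+\nu v_t)\geq f_t(x_t)-\nu L$ from Assumption~\ref{as:lip}. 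Summing over $t$, telescoping the $z_t^2$ terms against $z_1^2\leq R^2$, invoking $\sum_{t}\norm{x^*_t-x^*_{t+1}}_2\leq V_T$ from~\eqref{eq:defST}, and substituting $\nu=\tfrac{1}{dT}$ and $\eta=\tfrac{\sqrt{2R^2+6R+3RV_T}}{d\sqrt{T}}$ from~\eqref{eq:smnuetazerogradasc} renders the residual $\nu$-terms $O(1)$ and balances the $R^2/\eta$, $V_T/\eta$, and $\eta L^2 d^2 T$ contributions, yielding $\mathcal{O}(d\sqrt{T+V_TT})$; Assumption~\ref{as:lipgrad} supplies the regularity needed to differentiate under the smoothing expectation and to keep the surrogate-gradient bookkeeping valid.

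I expect the crux to be the unbiasedness identity of the second step, namely verifying that the specific $\norm{u_t}^2$ weighting converts the Gaussian-direction finite difference into an \emph{exactly} unbiased estimator of the ball-smoothed gradient $\nabla\hat f_{t,\nu}$, together with confirming that ball-smoothing preserves monotone $\gamma$-weak DR-submodularity so that the surrogate inequality of~\cite{hassani2017gradient} applies to $\hat f_{t,\nu}$ rather than to $f_t$. Once these are in place, everything downstream is the standard telescoping and parameter-tuning computation.
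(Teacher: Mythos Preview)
Your proposal is correct and the outer skeleton (shrunk set $\mathcal{K}_1$, distance recursion for $z_t$, telescoping, parameter tuning) matches the paper exactly, but the central step is genuinely different. The paper does \emph{not} pass through the ball-smoothing $\hat f_{t,\nu}$: it applies the Hassani--Soltanolkotabi--Karbasi inequality directly to $f_t$, writes $\nabla f_t(x_t)=(\nabla f_t(x_t)-\expec{G^\nu_{t,SM}\mid\mathcal F_t})+\expec{G^\nu_{t,SM}\mid\mathcal F_t}$, and controls the first piece with the bias bound of Lemma~\ref{lm:submodzerogradprop}, $\bigl\|\expec{G^\nu_{t,SM}}-\nabla f_t(x_t)\bigr\|\le \tfrac{\nu d L_G}{2}$, which is precisely where Assumption~\ref{as:lipgrad} enters. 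Your route---proving $\expec{G^\nu_{t,SM}\mid\mathcal F_t}=\nabla\hat f_{t,\nu}(x_t)$ exactly via the Stokes identity together with $\expec{\|u_t\|^2}=d$, and then applying the surrogate inequality to $\hat f_{t,\nu}$---is equally valid; it trades the bias bound for an extra lemma that ball-averaging preserves monotone $\gamma$-weak DR-submodularity (which follows componentwise from $\nabla\hat f_{t,\nu}(x)=\mathbf E_w[\nabla f_t(x+\nu w)]$). The paper's approach is shorter because it sidesteps this preservation step entirely; your approach is more structural and makes the role of the smoothing explicit, with the function-value error $|\hat f_{t,\nu}-f_t|\le \nu L$ replacing the gradient bias. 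Both yield the same $\mathcal{O}(d\sqrt{T+V_TT})$ rate after substituting the choices in~\eqref{eq:smnuetazerogradasc}.
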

To prove Theorem~\ref{th:smregboundzerogradasc} we need the following Lemma. 
\begin{lemma} \label{lm:submodzerogradprop}
	Under Assumption~\ref{as:lip}, the following holds:\\
\begin{align}\label{sub:eq1}
		\expec{\|G^\nu_{t,SM}\left(x_t,u_t,\xi_t \right)\|_2^2}\leq L^2\left(d+4\right)^2.
		\end{align}
In addition, if Assumption~\ref{as:lipgrad} holds, then		
		\begin{align} \label{eq:submodgraderrorbound}
		\|\expec{G^\nu_{t,SM}\left(x_t,u_t ,\xi_t\right)}-\nabla f_t\left(x_t\right)\|\leq \frac{\nu dL_G}{2}
		\end{align} 
\end{lemma}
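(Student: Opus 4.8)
The plan is to treat the two claims separately, and throughout to use the polar decomposition $u_t = \|u_t\|\, v_t$ with $v_t := u_t/\|u_t\|$. For $u_t \sim N(0,I_d)$ it is standard that the direction $v_t$ is uniform on the unit sphere $\mathbb{S}^{d-1}$, that the magnitude $\|u_t\|$ is independent of $v_t$, and that $\expec{\|u_t\|^2} = d$ while $\expec{v_t v_t^\top} = \tfrac{1}{d} I_d$. These three elementary facts will drive both bounds.

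For the second-moment bound \eqref{sub:eq1} I would argue almost surely in $\xi_t$. Since $\|v_t\| = 1$, Assumption~\ref{as:lip} gives $|F_t(x_t + \nu v_t,\xi_t) - F_t(x_t,\xi_t)| \le \nu L$ pointwise in $\xi_t$. Observing that the multiplier in \eqref{eq:zerogradddefsubmod} satisfies $\|u_t \|u_t\|\| = \|u_t\|^2$, this yields $\|G^\nu_{t,SM}\| \le L\,\|u_t\|^2$ almost surely, hence $\expec{\|G^\nu_{t,SM}\|_2^2} \le L^2\,\expec{\|u_t\|^4}$. Since $\|u_t\|^2 \sim \chi^2_d$ we have $\expec{\|u_t\|^4} = d(d+2) \le (d+4)^2$, which establishes \eqref{sub:eq1}.

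For the bias bound \eqref{eq:submodgraderrorbound} I would first take expectation over $\xi_t$, using $\expec{F_t(\cdot,\xi)} = f_t(\cdot)$, to get $\expec{G^\nu_{t,SM}} = \mathbf{E}_{u_t}\!\left[\tfrac{f_t(x_t+\nu v_t)-f_t(x_t)}{\nu}\,\|u_t\|^2\, v_t\right]$. Then I would Taylor expand around $x_t$: the consequence of Assumption~\ref{as:lipgrad} stated there (with $\|\cdot\| = \|\cdot\|_2$) gives $|f_t(x_t+\nu v_t) - f_t(x_t) - \nu\,\nabla f_t(x_t)^\top v_t| \le \tfrac{L_G}{2}\nu^2$. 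Substituting the linear term and invoking the independence of $\|u_t\|$ and $v_t$ together with the two moment identities shows that the leading term equals $\expec{\|u_t\|^2}\,\expec{v_t v_t^\top}\,\nabla f_t(x_t) = d\cdot\tfrac1d\,\nabla f_t(x_t) = \nabla f_t(x_t)$, i.e.\ it is exactly unbiased. The Taylor remainder is then bounded in norm by $\mathbf{E}_{u_t}\!\left[\tfrac{1}{\nu}\cdot\tfrac{L_G\nu^2}{2}\,\|u_t\|^2\right] = \tfrac{L_G\nu}{2}\,\expec{\|u_t\|^2} = \tfrac{\nu d L_G}{2}$, which is precisely the claimed bound.

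The only delicate point is the exact cancellation producing the unbiased leading term: it hinges on factorizing the expectation through the independence of magnitude and direction of an isotropic Gaussian, so that the extra $\|u_t\|^2$ weighting (contributing $\expec{\|u_t\|^2}=d$) cancels the $1/d$ coming from the spherical second moment $\expec{v_t v_t^\top}=\tfrac1d I_d$. Everything else reduces to the Lipschitz-gradient Taylor estimate and elementary Gaussian moment computations, so no step should require more than these ingredients.
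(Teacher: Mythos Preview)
Your proof is correct and follows essentially the same route as the paper. For \eqref{sub:eq1} both arguments bound the scalar factor by $L$ via Assumption~\ref{as:lip} and then use $\expec{\|u_t\|_2^4}\le (d+4)^2$; for \eqref{eq:submodgraderrorbound} both subtract off the linear term $\nu\langle\nabla f_t(x_t),v_t\rangle$ and bound the Taylor remainder by $\tfrac{L_G}{2}\nu^2$. The only difference is that the paper simply asserts the identity $\nabla f_t(x_t)=\expec{\langle\nabla f_t(x_t),u_t/\|u_t\|\rangle\,u_t\|u_t\|}$, whereas you justify it explicitly via the independence of $\|u_t\|$ and $v_t=u_t/\|u_t\|$ together with $\expec{\|u_t\|^2}=d$ and $\expec{v_tv_t^\top}=\tfrac1d I_d$; this extra detail is a genuine improvement in rigor but not a different method.
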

\begin{proof}
To prove \eqref{sub:eq1}, note that using Assumption~\ref{as:lip} we have
		\begin{align*}
		\expec{\|G^\nu_{t,SM}\left(x_t,u_t ,\xi_t\right)\|_2^2}\leq L^2\expec{\|u_t\|_2^4} \leq L^2\left(d+4\right)^2.
		\end{align*}
To prove \eqref{eq:submodgraderrorbound},  note that using Assumption~\ref{as:lipgrad} we have
		\begin{align*}
		&\norm{\expec{G^\nu_{t,SM}\left(x_t,u_t ,\xi_t\right)}-\nabla f_t\left(x_t\right)}\\
		=&\norm{\expec{G^\nu_{t,SM}\left(x_t,u_t ,\xi_t\right)}-\expec{\langle\nabla f_t\left(x_t\right),\frac{u_t}{\|u_t\|}\rangle u_t\|u_t\|}}\\
		\leq & \frac{1}{\nu}\expec{\norm{\left(f_t\left( x_t+\nu \frac{u_t}{\|u_t\|}\right)-f_t\left(x_t \right)-\nu \langle\nabla f_t\left(x_t\right),\frac{u_t}{\|u_t\|}\rangle \right) u_t\|u_t\|}}\\
		\leq & \frac{\nu L_G}{2}\expec{\norm{u_t}_2^2}\leq \frac{\nu dL_G}{2}.
		\end{align*}
\end{proof}
\begin{proof}[Proof of Theorem~\ref{th:smregboundzerogradasc}]
	Let $\mathcal{K}_1\triangleq \mathcal{A_\nu'}\cap \mathcal{K}$, and $x^{*}_{\nu,t}\vcentcolon=\argmax_{x \in \mathcal{K}_1}\fn[t]{t}$. Let $z_t\vcentcolon=\norm{x_{t}-x_{\nu,t}^*}_2$.
	\begin{align*}
	z_{t+1}^2&=\norm{x_{t+1}-x_{\nu,t+1}^*}_2^2\\
	&=\norm{x_{t+1}-x_{\nu,t}^*}_2^2+\norm{x_{\nu,t}^*-x_{\nu,t+1}^*}_2^2+2\left( x_{t+1}-x_{\nu,t}^*\right) ^\top\left( x_{\nu,t}^*-x_{\nu,t+1}^*\right) \\
	&=\norm{x_{t+1}-x_{\nu,t}^*}_2^2+R\norm{x_{\nu,t}^*-x_{\nu,t+1}^*}_2+2R\norm{x_{\nu,t}^*-x_{\nu,t+1}^*}_2  \\
	&\leq  \|\mathcal{P}_{\mathcal{K}_1}\left(x_t +\eta G^\nu_{t,SM}\left(x_t,u_t,\xi_t \right)\right)-x^*_{\nu,t}  \| _2^2+3R\left(\| x_{t}^*-x_{t+1}^*\|_2+\| x_{\nu,t}^*-x_{t}^*\|_2+\| x_{t+1}^*-x_{\nu,t+1}^*\|_2\right)\\
	&\leq  \|x_t + \eta G^\nu_{t,SM}\left(x_t,u_t,\xi_t \right)-x^*_{\nu,t} \|_2^2+3R\norm{ x_{t}^*-x_{t+1}^*}_2+6\nu R\sqrt{d}\\
	&=  z_t^2+\eta^2 \norm{G^\nu_{t,SM}\left(x_t,u_t,\xi_t \right)}_2^2+2\eta G^\nu_{t,SM}\left(x_t,u_t,\xi_t \right)^\top\left( x_t-x_{\nu,t}^*\right)
	+3R \norm{ x_{t}^*-x_{t+1}^*}_2+6\nu R\sqrt{d}.
	\end{align*}
	Rearranging terms we then have
	\begin{align*}
	G^\nu_{t,SM}\left(x_t,u_t,\xi_t\right)^T\left( x_{\nu,t}^*-x_t\right)\leq  \frac{1}{2\eta}\left(z_t^2-z_{t+1}^2 +\eta^2\norm{G^\nu_{t,SM} \left(x_t,u_t,\xi_t\right)}_2^2+3R\norm{x_t^*-x_{t+1}^*}_2+6\nu R\sqrt{d}\right) \numberthis. \label{eq:submodnotexpeckweagradx}
	\end{align*}
	Taking conditional expectation on both sides of the above inequality and noting Lemma~\ref{f_KWQC}, we obtain
	\begin{align*}
	\expec{G^\nu_{t,SM}\left(x_t,u_t,\xi_t\right)|\mathcal{F}_t}^\top\left( x_{\nu,t}^*-x_t\right)
	\leq &  \frac{1}{2\eta}\left(z_t^2-\expec{z_{t+1}^2|\mathcal{F}_t} +\eta^2 \expec{\norm{G^\nu_{t,SM}\left(x_t,u_t,\xi_t \right)}_2^2|\mathcal{F}_t}\right.\\
	+& \left. 3R\norm{x_t^*-x_{t+1}^*}+6\nu R\sqrt{d}\right)  \numberthis. \label{eq:submodkweakgradx}
	\end{align*}
	Using \eqref{eq:submodkweakgradx}, and \eqref{eq:submodgraderrorbound}, for $\gamma$-weakly DR-submodular monotone function,
	\begin{align*}
	&{f}_t\left(x_{\nu,t}^*\right)-\left(1+\frac{1}{\gamma^2}\right){f}_t\left(x_{t}\right)\\
	\leq &~\frac{1}{\gamma}\nabla{f}_t\left(x_{t}\right)^\top\left( x_{\nu,t}^*-x_t\right)\\
	\leq &~\frac{1}{\gamma}\left(\nabla \fn[t]{t}-\expec{G^\nu_{t,SM}\left(x_t,u_t,\xi_t\right)|\mathcal{F}_t}+\expec{G^\nu_{t,SM}\left(x_t,u_t,\xi_t\right)|\mathcal{F}_t}\right)^\top\left( x_{\nu,t}^*-x_t\right)
	\\
	\leq &~\frac{1}{2\eta\gamma}\left(z_t^2-\expec{z_{t+1}^2|\mathcal{F}_t} +\eta^2 \expec{\norm{G^\nu_{t,SM}\left(x_t,u_t,\xi_t \right)}_2^2|\mathcal{F}_t}
	+ 3R\norm{x_t^*-x_{t+1}^*}+6\nu R\sqrt{d}\right)\\
	&+\frac{1}{\gamma}\norm{\nabla \fn[t]{t}-\expec{G^\nu_{t,SM}\left(x_t,u_t,\xi_t\right)|\mathcal{F}_t}}\norm{x_{\nu,t}^*-x_t},
	\end{align*}	
and
\begin{align*}	
	{f}_t\left(x_{t}^*\right)-\left(1+\frac{1}{\gamma^2}\right){f}_t\left(x_{t}\right)\leq & \frac{1}{2\eta\gamma}\left(z_t^2-\expec{z_{t+1}^2|\mathcal{F}_t} +\eta^2 \expec{\norm{G^\nu_{t,SM}\left(x_t,u_t,\xi_t \right)}_2^2|\mathcal{F}_t}\right.\\
	+& \left. 3R\norm{x_t^*-x_{t+1}^*}+6\nu R\sqrt{d}\right)+\frac{\nu d L_G}{2\gamma}+\nu L\sqrt{d}.
	\end{align*}
	Now we can bound the nonstationary regret as follows. Combining the above inequality with \eqref{eq:defST}, \eqref{eq:defalphansreg}, and under Assumption~\ref{as:lip}, and Assumption~\ref{as:lipgrad}, and setting $\alpha=\frac{\gamma^2}{1+\gamma^2}$we have	
	\begin{align*}
	\mathfrak{R}_{\alpha,NS} \left(\left\lbrace x_t\right\rbrace_1^T,\left\lbrace x_t+\nu \frac{u_t}{\norm{u_t}}\right\rbrace_1^T \right)
	=&\expec{\sum_{t=1}^{T}\left(2\alpha f_t\left(x_t^*\right)-f_t\left(x_t\right)-f_t\left(x_t+\nu \frac{u_t}{\norm{u_t}}\right)\right)} \\
	\leq & \expec{\sum_{t=1}^{T}\left(2\alpha f_t\left(x_t^*\right)-2f_t\left(x_t\right)+\nu L\right)}\\
	\leq & \frac{ \gamma}{\eta\left(1+\gamma^2\right)}\left(z_1^2-z_{T+1}^2 +T\eta^2L^2 \left(d+4\right)^2+3RV_T+6\nu RT\sqrt{d}\right)\\
	+& \frac{2\nu\gamma^2T}{1+\gamma^2}\left(\frac{d L_G}{2\gamma}+ L\sqrt{d}\right)+\nu L T.
	\end{align*}
	Choosing $\nu$ and $\eta$ according to \eqref{eq:smnuetazerogradasc}, we get
	\begin{align*}
	\mathfrak{R}_{\alpha,NS} \left(\left\lbrace x_t\right\rbrace_1^T,\left\lbrace x_t+\nu \xi_t\right\rbrace_1^T \right)\leq & \mathcal{O}\left(d\sqrt{T+TV_T}\right).
	\end{align*}
\end{proof}
Interestingly, under our assumptions the rates remain the same for the stochastic and deterministic cases. Providing regret bounds for nonstationary submodular maximization in the high-dimensional setting has eluded us so far. It would be interesting to reduce the dimension dependence under different structural assumptions on the submodular functions -- we leave this as future work.

\section{Discussion} 
In this paper, we provide regret bounds for nonstationary  nonconvex optimization problems in the bandit setting. We make three specific contributions: (i) low and high-dimensional regret bounds in terms of gradient-size for general nonconvex function with bounded stationarity, (ii) online and bandit versions of cubic regularized Newton method for bounding second-order stationary solution based nonstationary regret, and (iii) low and high-dimensional regret bounds in terms of function values for $K$-WQC functions and low-dimensional regret bounds in terms of function values for submodular function maximization. 

There are several avenues for future work: (i) obtaining lower bounds for the regrets considered is challenging, (ii) defining other notions of uncertainty set that provide improved regret bounds is also interesting, (iii) obtaining parameter-free algorithms, similar to the convex setting (see for example,~\cite{jadbabaie2015online, luo2015achieving, cheung2018learning, pmlr-v99-auer19b} ) is interesting and (iv) establishing connections between online nonparametric regression and nonstationary regret bounds (see for example~\cite{baby2019online}) is interesting. 
\bibliographystyle{alpha}
\bibliography{ncregret}

\newcommand{\etalchar}[1]{$^{#1}$}
\begin{thebibliography}{DJWW15}

\bibitem[ACG{\etalchar{+}}19]{pmlr-v99-auer19b}
Peter Auer, Yifang Chen, Pratik Gajane, Chung-Wei Lee, Haipeng Luo, Ronald
  Ortner, and Chen-Yu Wei.
\newblock Achieving optimal dynamic regret for non-stationary bandits without
  prior information.
\newblock In {\em Proceedings of the Thirty-Second Conference on Learning
  Theory}, pages 159--163, 2019.

\bibitem[AD10]{agarwal2010optimal}
Alekh Agarwal and Ofer Dekel.
\newblock Optimal algorithms for online convex optimization with multi-point
  bandit feedback.
\newblock In {\em COLT}, pages 28--40, 2010.

\bibitem[ADT12]{arora2012online}
Raman Arora, Ofer Dekel, and Ambuj Tewari.
\newblock Online bandit learning against an adaptive adversary: from regret to
  policy regret.
\newblock {\em arXiv preprint arXiv:1206.6400}, 2012.

\bibitem[AFH{\etalchar{+}}11]{agarwal2011stochastic}
Alekh Agarwal, Dean~P Foster, Daniel~J Hsu, Sham~M Kakade, and Alexander
  Rakhlin.
\newblock Stochastic convex optimization with bandit feedback.
\newblock In {\em Advances in Neural Information Processing Systems}, pages
  1035--1043, 2011.

\bibitem[B{\etalchar{+}}13]{bach2013learning}
Francis Bach et~al.
\newblock Learning with submodular functions: A convex optimization
  perspective.
\newblock {\em Foundations and Trends{\textregistered} in Machine Learning},
  6(2-3):145--373, 2013.

\bibitem[Bac19]{bach2019submodular}
Francis Bach.
\newblock Submodular functions: from discrete to continuous domains.
\newblock {\em Mathematical Programming}, 175(1-2):419--459, 2019.

\bibitem[BCB{\etalchar{+}}12]{bubeck2012regret}
S{\'e}bastien Bubeck, Nicolo Cesa-Bianchi, et~al.
\newblock Regret analysis of stochastic and nonstochastic multi-armed bandit
  problems.
\newblock {\em Foundations and Trends{\textregistered} in Machine Learning},
  5(1):1--122, 2012.

\bibitem[BG18]{2018arXiv180906474B}
Krishnakumar {Balasubramanian} and Saeed {Ghadimi}.
\newblock {Zeroth-order Nonconvex Stochastic Optimization: Handling
  Constraints, High-Dimensionality and Saddle-Points}.
\newblock {\em arXiv e-prints}, page arXiv:1809.06474, Sep 2018.

\bibitem[BGZ14]{besbes2014stochastic}
Omar Besbes, Yonatan Gur, and Assaf Zeevi.
\newblock Stochastic multi-armed-bandit problem with non-stationary rewards.
\newblock In {\em Advances in neural information processing systems}, pages
  199--207, 2014.

\bibitem[BGZ15]{besbes2015non}
Omar Besbes, Yonatan Gur, and Assaf Zeevi.
\newblock Non-stationary stochastic optimization.
\newblock {\em Operations research}, 63(5):1227--1244, 2015.

\bibitem[Bil15]{bilmes_aaai_tutorial_1_26_2015}
Jeff Bilmes.
\newblock Submodularity in machine learning applications.
\newblock Twenty-Ninth Conference on Artificial Intelligence, AAAI-15 Tutorial
  Forum, January 2015.

\bibitem[BLE17]{bubeck2017kernel}
S{\'e}bastien Bubeck, Yin~Tat Lee, and Ronen Eldan.
\newblock Kernel-based methods for bandit convex optimization.
\newblock In {\em Proceedings of the 49th Annual ACM SIGACT Symposium on Theory
  of Computing}, pages 72--85. ACM, 2017.

\bibitem[BLKB17]{bian2017continuous}
An~Bian, Kfir Levy, Andreas Krause, and Joachim~M Buhmann.
\newblock Continuous dr-submodular maximization: Structure and algorithms.
\newblock In {\em Advances in Neural Information Processing Systems}, pages
  486--496, 2017.

\bibitem[BW02]{bousquet2002tracking}
Olivier Bousquet and Manfred~K Warmuth.
\newblock Tracking a small set of experts by mixing past posteriors.
\newblock {\em Journal of Machine Learning Research}, 3(Nov):363--396, 2002.

\bibitem[BW19]{baby2019online}
Dheeraj Baby and Yu-Xiang Wang.
\newblock Online forecasting of total-variation-bounded sequences.
\newblock {\em arXiv preprint arXiv:1906.03364}, 2019.

\bibitem[CBL06]{cesa2006prediction}
Nicolo Cesa-Bianchi and Gabor Lugosi.
\newblock {\em Prediction, learning, and games}.
\newblock Cambridge university press, 2006.

\bibitem[CHK18]{chen2018online}
Lin Chen, Hamed Hassani, and Amin Karbasi.
\newblock Online continuous submodular maximization.
\newblock {\em arXiv preprint arXiv:1802.06052}, 2018.

\bibitem[CSLZ18]{cheung2018learning}
Wang~Chi Cheung, David Simchi-Levi, and Ruihao Zhu.
\newblock Learning to optimize under non-stationarity.
\newblock {\em arXiv preprint arXiv:1810.03024}, 2018.

\bibitem[CVZ14]{chekuri2014submodular}
Chandra Chekuri, Jan Vondr{\'a}k, and Rico Zenklusen.
\newblock Submodular function maximization via the multilinear relaxation and
  contention resolution schemes.
\newblock {\em SIAM Journal on Computing}, 43(6):1831--1879, 2014.

\bibitem[CZHK19]{chen2019black}
Lin Chen, Mingrui Zhang, Hamed Hassani, and Amin Karbasi.
\newblock Black box submodular maximization: Discrete and continuous settings.
\newblock {\em arXiv preprint arXiv:1901.09515}, 2019.

\bibitem[DGS14]{dick2014online}
Travis Dick, Andras Gyorgy, and Csaba Szepesvari.
\newblock Online learning in markov decision processes with changing cost
  sequences.
\newblock In {\em International Conference on Machine Learning}, pages
  512--520, 2014.

\bibitem[DJWW15]{duchi2015optimal}
John~C Duchi, Michael~I Jordan, Martin~J Wainwright, and Andre Wibisono.
\newblock Optimal rates for zero-order convex optimization: The power of two
  function evaluations.
\newblock {\em IEEE Transactions on Information Theory}, 61(5):2788--2806,
  2015.

\bibitem[DPG{\etalchar{+}}14]{dauphin2014identifying}
Yann~N Dauphin, Razvan Pascanu, Caglar Gulcehre, Kyunghyun Cho, Surya Ganguli,
  and Yoshua Bengio.
\newblock Identifying and attacking the saddle point problem in
  high-dimensional non-convex optimization.
\newblock In {\em Advances in neural information processing systems}, pages
  2933--2941, 2014.

\bibitem[FKM05]{flaxman2005online}
Abraham~D Flaxman, Adam~Tauman Kalai, and H~Brendan McMahan.
\newblock Online convex optimization in the bandit setting: gradient descent
  without a gradient.
\newblock In {\em Proceedings of the sixteenth annual ACM-SIAM symposium on
  Discrete algorithms}, pages 385--394. Society for Industrial and Applied
  Mathematics, 2005.

\bibitem[GG17]{guminov2017accelerated}
Sergey Guminov and Alexander Gasnikov.
\newblock Accelerated methods for $\alpha$-weakly-quasi-convex problems.
\newblock {\em arXiv preprint arXiv:1710.00797}, 2017.

\bibitem[GLZ18]{gao2018online}
Xiand Gao, Xiaobo Li, and Shuzhong Zhang.
\newblock Online learning with non-convex losses and non-stationary regret.
\newblock In {\em International Conference on Artificial Intelligence and
  Statistics}, pages 235--243, 2018.

\bibitem[GRW14]{guan2014online}
Peng Guan, Maxim Raginsky, and Rebecca~M Willett.
\newblock Online markov decision processes with kullback--leibler control cost.
\newblock {\em IEEE Transactions on Automatic Control}, 59(6):1423--1438, 2014.

\bibitem[HAK07]{hazan2007logarithmic}
Elad Hazan, Amit Agarwal, and Satyen Kale.
\newblock Logarithmic regret algorithms for online convex optimization.
\newblock {\em Machine Learning}, 69(2-3):169--192, 2007.

\bibitem[HS09]{hazan2009efficient}
Elad Hazan and Comandur Seshadhri.
\newblock Efficient learning algorithms for changing environments.
\newblock In {\em ICML}, pages 393--400, 2009.

\bibitem[HSK17]{hassani2017gradient}
Hamed Hassani, Mahdi Soltanolkotabi, and Amin Karbasi.
\newblock Gradient methods for submodular maximization.
\newblock In {\em Advances in Neural Information Processing Systems}, pages
  5841--5851, 2017.

\bibitem[HSZ17]{hazan2017efficient}
Elad Hazan, Karan Singh, and Cyril Zhang.
\newblock Efficient regret minimization in non-convex games.
\newblock In {\em Proceedings of the 34th International Conference on Machine
  Learning-Volume 70}, pages 1433--1441. JMLR. org, 2017.

\bibitem[HTAL17]{haarnoja2017reinforcement}
Tuomas Haarnoja, Haoran Tang, Pieter Abbeel, and Sergey Levine.
\newblock Reinforcement learning with deep energy-based policies.
\newblock In {\em International Conference on Machine Learning}, pages
  1352--1361, 2017.

\bibitem[HW15]{hall2015online}
Eric~C Hall and Rebecca~M Willett.
\newblock Online convex optimization in dynamic environments.
\newblock {\em IEEE Journal of Selected Topics in Signal Processing},
  4(9):647--662, 2015.

\bibitem[JRSS15]{jadbabaie2015online}
Ali Jadbabaie, Alexander Rakhlin, Shahin Shahrampour, and Karthik Sridharan.
\newblock Online optimization: Competing with dynamic comparators.
\newblock In {\em Artificial Intelligence and Statistics}, pages 398--406,
  2015.

\bibitem[KHK19]{kawaguchi2019effect}
Kenji Kawaguchi, Jiaoyang Huang, and Leslie~Pack Kaelbling.
\newblock Effect of depth and width on local minima in deep learning.
\newblock {\em Neural computation}, 31(7):1462--1498, 2019.

\bibitem[Li17]{li2017deep}
Yuxi Li.
\newblock Deep reinforcement learning: An overview.
\newblock {\em arXiv preprint arXiv:1701.07274}, 2017.

\bibitem[LS15]{luo2015achieving}
Haipeng Luo and Robert~E Schapire.
\newblock Achieving all with no parameters: Adanormalhedge.
\newblock In {\em Conference on Learning Theory}, pages 1286--1304, 2015.

\bibitem[LW{\etalchar{+}}08]{lafferty2008rodeo}
John Lafferty, Larry Wasserman, et~al.
\newblock Rodeo: sparse, greedy nonparametric regression.
\newblock {\em The Annals of Statistics}, 36(1):28--63, 2008.

\bibitem[NAGS10]{neu2010online}
Gergely Neu, Andras Antos, Andr{\'a}s Gy{\"o}rgy, and Csaba Szepesv{\'a}ri.
\newblock Online markov decision processes under bandit feedback.
\newblock In {\em Advances in Neural Information Processing Systems}, pages
  1804--1812, 2010.

\bibitem[Nes18]{nesterov2018lectures}
Yurii Nesterov.
\newblock {\em Lectures on convex optimization}, volume 137.
\newblock Springer, 2018.

\bibitem[NP06]{nesterov2006cubic}
Yurii Nesterov and Boris~T Polyak.
\newblock Cubic regularization of newton method and its global performance.
\newblock {\em Mathematical Programming}, 108(1):177--205, 2006.

\bibitem[NS17]{nesterov2017random}
Yurii Nesterov and Vladimir Spokoiny.
\newblock Random gradient-free minimization of convex functions.
\newblock {\em Foundations of Computational Mathematics}, 17(2):527--566, 2017.

\bibitem[NWF78]{nemhauser1978analysis}
George~L Nemhauser, Laurence~A Wolsey, and Marshall~L Fisher.
\newblock An analysis of approximations for maximizing submodular set
  functions.
\newblock {\em Mathematical programming}, 14(1):265--294, 1978.

\bibitem[NY83]{nemirovsky1983problem}
Arkadii~Semenovich Nemirovsky and David~Borisovich Yudin.
\newblock Problem complexity and method efficiency in optimization.
\newblock 1983.

\bibitem[OS18]{oymak2018overparameterized}
Samet Oymak and Mahdi Soltanolkotabi.
\newblock Overparameterized nonlinear learning: Gradient descent takes the
  shortest path?
\newblock {\em arXiv preprint arXiv:1812.10004}, 2018.

\bibitem[SB18]{sutton2018reinforcement}
Richard~S Sutton and Andrew~G Barto.
\newblock {\em Reinforcement learning: An introduction}.
\newblock MIT press, 2018.

\bibitem[Sha13]{shamir2013complexity}
Ohad Shamir.
\newblock On the complexity of bandit and derivative-free stochastic convex
  optimization.
\newblock In {\em Conference on Learning Theory}, pages 3--24, 2013.

\bibitem[Sha17]{shamir2017optimal}
Ohad Shamir.
\newblock An optimal algorithm for bandit and zero-order convex optimization
  with two-point feedback.
\newblock {\em Journal of Machine Learning Research}, 18(52):1--11, 2017.

\bibitem[Spa98]{spall1998overview}
James~C Spall.
\newblock An overview of the simultaneous perturbation method for efficient
  optimization.
\newblock {\em Johns Hopkins apl technical digest}, 19(4):482--492, 1998.

\bibitem[ST11]{saha2011improved}
Ankan Saha and Ambuj Tewari.
\newblock Improved regret guarantees for online smooth convex optimization with
  bandit feedback.
\newblock In {\em Proceedings of the Fourteenth International Conference on
  Artificial Intelligence and Statistics}, pages 636--642, 2011.

\bibitem[TSJ{\etalchar{+}}18]{tripuraneni2018stochastic}
Nilesh Tripuraneni, Mitchell Stern, Chi Jin, Jeffrey Regier, and Michael~I
  Jordan.
\newblock Stochastic cubic regularization for fast nonconvex optimization.
\newblock In {\em Advances in neural information processing systems}, pages
  2899--2908, 2018.

\bibitem[WDBS18]{wang2018stochastic}
Yining Wang, Simon Du, Sivaraman Balakrishnan, and Aarti Singh.
\newblock Stochastic zeroth-order optimization in high dimensions.
\newblock In {\em AISTATS}, pages 1356--1365, 2018.

\bibitem[Wil92]{williams1992simple}
Ronald~J Williams.
\newblock Simple statistical gradient-following algorithms for connectionist
  reinforcement learning.
\newblock {\em Machine learning}, 8(3-4):229--256, 1992.

\bibitem[YZJY16]{yang2016tracking}
Tianbao Yang, Lijun Zhang, Rong Jin, and Jinfeng Yi.
\newblock Tracking slowly moving clairvoyant: optimal dynamic regret of online
  learning with true and noisy gradient.
\newblock In {\em International Conference on Machine Learning}, pages
  449--457. JMLR. org, 2016.

\end{thebibliography}
\newpage
\section{A summary of Regret bounds}\label{table}
In Table~\ref{tab:summary}, we summarize the various regret bounds that we obtained in this work.
\begin{table}[h!]
\centering
\small
\begin{tabular}{| c| c | c | c | c|}
\hline
 \makecell{Algorithm \\(Reference)}  & \makecell{Structure/Assumption/\\Uncertainty set}& Regret bound  & \makecell{Regret\\ Notion} \\
 \hline
 \hline
 \multirow{2}{*}{\makecell{GBGD\\ (Theorem~\ref{theorem_K-WQC_regret})}}
   & Bandit $K$-WQC/(\ref{as:lip})/$\mathcal{S}_T$ &$\mathcal{O}(d\sqrt{T+V_TT})$ & \multirow{7}{*}{$\mathfrak{R}_{NS}\left(T\right)$ } \\ \cline{2-3}
  & Bandit $K$-WQC/ (\ref{as:lip}, \ref{as:lipgrad})/$\mathcal{S}_T$ &$\mathcal{O}(\sqrt{d\left(T+V_TT\right)}\left(1+\sigma^2\right))$ & \\
\cline{1-3}

\multirow{2}{*}{\makecell{GBGD\\ (Theorem~\ref{theorem_K-WQC_regret sparse fn})}} & Bandit $K$-WQC /$\mathcal{S}_T$(\ref{as:lip})/$\mathcal{S}_T$ &$\mathcal{O}(s\sqrt{T+V_TT})$ &   \\
\cline{2-3}
 & \makecell{Bandit $K$-WQC/ (\ref{as:lip}, \ref{as:lipgrad},\\  $s$-sparse function)/$\mathcal{S}_T$} &$\mathcal{O}(\left(1+\sigma^2\right)\sqrt{s\left(T+V_TT\right)})$ & \\ 
\cline{1-3}

\multirow{2}{*}{\makecell{GBTGD\\ (Theorem~\ref{th:truncated})}}& \makecell{Bandit $K$-WQC/ (\ref{as:lip}, \ref{as:sparsegrad},\\ \ref{as:sparsesol})/$\mathcal{S}_T$}  & $\mathcal{O}\left(\log d\sqrt{(2\hat{s}+s^*)(T+V_TT)} \right)$    &    \\
\cline{2-3}
 & \makecell{Bandit $K$-WQC/ (\ref{as:lip}, \ref{as:lipgrad},\\ \ref{as:sparsegrad}, \ref{as:sparsesol})/$\mathcal{S}_T$}  & $\mathcal{O}\left(\sqrt{\log d(2\hat{s}+s^*)(T+V_TT)} \right)$    &    \\
\cline{1-3}
\makecell{BONGD \\(\cite{gao2018online}) } & \makecell{Bandit WPC (bounded gradient,\\ error bound, \ref{as:lipgrad})/$\mathcal{S}_T$} &$\mathcal{O}(d\sqrt{\left(T+V_TT\right)})$ (Non-stochastic)&   \\
\hline
\multirow{2}{*}{\makecell{GBGD\\ (Theorem~\ref{th:gradsizeboundncfirstorder})}}& Bandit Nonconvex/ (\ref{as:lipgrad})/$\mathcal{D}_T$ &   $\mathcal{O}\left(\left(dW_T+\sigma^2\right)\sqrt{T}\right)$ &  \multirow{2}{*}{$\mathfrak{R}^{(2)}_G\left(T \right)$} \\
\cline{2-3}
& Bandit Nonconvex/ (\ref{as:lip},\ref{as:lipgrad})/$\mathcal{D}_T$ &   $\mathcal{O}\left(\sqrt{dTW_T}\left(1+\sigma^2\right)\right)$ &  \\
\hline
\multirow{2}{*}{\makecell{GBGD\\ (Theorem~\ref{th:gradsizeboundncfirstorderhd})}}& Bandit Nonconvex/ (\ref{as:lipgrad}, \ref{as:sparsegrad})/$\mathcal{D}_T$ &   $\mathcal{O}\left(\left(\left(s\log d\right)^2+\sigma^2\right)\sqrt{TW_T} \right)$ &  \multirow{2}{*}{$\mathfrak{R}^{(1)}_G\left(T \right)$} \\
\cline{2-3}
& \makecell{Bandit Nonconvex/ (\ref{as:lip}, \ref{as:lipgrad}, \\ \ref{as:sparsegrad})/$\mathcal{D}_T$} &   $\mathcal{O}\left(s\log d\left(1+\sigma^2\right)\sqrt{TW_T} \right)$ & \\
\hline
\makecell{Algorithm 1 \\(\cite{hazan2017efficient}) }& \makecell{Online Nonconvex/ (bounded \\function, \ref{as:lip}, \ref{as:lipgrad})/-} &   $\mathcal{O}\left(T \right)$ (Non-stochastic) &  $\mathfrak{R}^{(2)}_G\left(T \right)$ \\
\hline
\makecell{Algorithm 3 \\(\cite{hazan2017efficient})}& \makecell{Online Nonconvex/ (bounded \\function, \ref{as:lip}, \ref{as:lipgrad}, \ref{as:liphess})/-} &   $\mathcal{O}\left(T \right)$ (Non-stochastic)&  $\hat{\mathfrak{R}}_{NC}\left(T \right)$ \\
\hline
\makecell{OCRN \\(Theorem~\ref{th:nzstochocnabound})}& \makecell{Online Nonconvex/(\ref{as:lipgrad}, \ref{as:liphess})/$\mathcal{D}_T$} &   $\mathcal{O}\left( T^\frac{2}{3}\left( 1+W_T\right)+T^\frac{1}{3}\left(\sigma+\varkappa^2\right) \right)$ &  $\mathfrak{R}_{NC}\left(T \right)$ \\
\hline
\makecell{BCRN \\(Theorem~\ref{thm:banditcubic})}& \makecell{Bandit Nonconvex/ (\ref{as:lipgrad}, \ref{as:liphess})/$\mathcal{D}_T$} &   $\mathcal{O}\left( T^\frac{2}{3}\left( 1+W_T\right)+\sigma  T^\frac{1}{3}\right)$ &  $\mathfrak{R}_{ENC}\left(T \right)$ \\
\hline

\end{tabular}
\caption{A list of regret bounds obtained in this work for nonstationary nonconvex optimization.}
\label{tab:summary}
\end{table}

\end{document}